\newtheorem{example}{Example}
\newtheorem{theorem}{Theorem}
\newtheorem{definition}{Definition}
\newtheorem{claim}{Claim}
\newtheorem{corollary}{Corollary}
\newtheorem{lemma}{Lemma}
\DeclareMathOperator*{\argmax}{arg\,max}
\newcommand{\citet}[1]{\citeauthor{#1}~\shortcite{#1}}
\newcommand{\ourLB}{PF-MAA$^*$\xspace}
\newcommand{\ourUB}{TR-MAA$^*$\xspace}
\newcommand{\rsmaa}{RS-MAA$^*$\xspace}
\newcommand{\maastar}{MAA$^*$\xspace}
\newcommand{\gmaastar}{GMAA$^*$\xspace}
\newcommand{\astar}{A$^*$\xspace}
\newcommand{\gmaaice}{GMAA$^*$-ICE\xspace}
\newcommand{\benchmark}[1]{\textsc{#1}\xspace}
\newcommand{\dectiger}{\benchmark{DecTiger}}
\newcommand{\boxpush}{\benchmark{BoxPushing}}
\newcommand{\ff}{\benchmark{FireFighting}}
\newcommand{\recycling}{\benchmark{Recycling}}
\newcommand{\hotel}{\benchmark{Hotel}}
\newcommand{\broadcast}{\benchmark{Broadcast}}
\newcommand{\mars}{\benchmark{Mars}}
\newcommand{\grid}{\benchmark{Grid}}
\newcommand{\gridthree}{\benchmark{Grid3x3}}
\newcommand{\ppolicies}{\Phi}
\newcommand{\extendedby}{<_E}
\newcommand{\jointppol}{\boldsymbol{\varphi}}
\newcommand{\ppol}{\varphi}
\newcommand{\ext}[1]{E(#1)}
\newcommand{\genpolicies}{\Pi_\mathsf{gen}}
\newcommand{\extgen}[1]{E_\mathsf{gen}(#1)}
\newcommand{\obstracealt}{\widetilde{\obstrace}}
\newcommand{\jointobstracealt}{\widetilde{\jointobstrace}}
\newcommand{\stage}[1]{\sigma(#1)}
  \newcommand{\obsaltaltalt}{\bar{\obs}}
\tikzset{every picture/.style={line width=0.75pt}} 
\newcommand{\midd}{\,\middle|\,}
\newcommand{\multitwo}[1]{\multirow{2}{*}{\parbox[c]{1.2cm}{\centering #1}}}
\algrenewcommand\algorithmicindent{1.0em}%
\title{Approximate Dec-POMDP Solving Using Multi-Agent A$^*$}
\author{
Wietze Koops$^1$ \and
Sebastian Junges$^1$ \And
Nils Jansen$^{2,1}$
\affiliations
$^1$Radboud University, Nijmegen, The Netherlands \\
$^2$Ruhr-University Bochum, Germany
\emails
\{wietze.koops, sebastian.junges\}@ru.nl, n.jansen@rub.de
}
\begin{document}

\maketitle

\begin{abstract}
 We present an \astar-based algorithm to compute policies for finite-horizon Dec-POMDPs.
 Our goal is to sacrifice optimality in favor of scalability for larger horizons.  
The main ingredients of our approach are (1)~using clustered sliding window memory, (2)~pruning the \astar search tree, and (3)~using novel \astar heuristics.  
Our experiments show competitive performance to the state-of-the-art. 
Moreover, for multiple benchmarks, we achieve superior performance. 
 In addition, we provide an \astar algorithm that finds upper bounds for the optimum, tailored towards problems with long horizons. The main ingredient is a new heuristic that periodically reveals the state, thereby limiting the number of reachable beliefs. Our experiments demonstrate the efficacy and scalability of the approach.

\end{abstract}

\section{Introduction}

Decentralized partially observable Markov decision processes (Dec-POMDPs) formalize multi-agent decision-making under stochastic dynamics and partial observability. They are, for instance,  suitable to model bandwidth allocation~\cite{DBLP:conf/ssci/HemmatiYS15} and maintenance problems~\cite{bhustali2023assessing}. The decision problem underlying solving Dec-POMDPs exactly or $\epsilon$-optimally is  NEXP-hard~\cite{bernstein2002complexity,DBLP:conf/atal/RabinovichGR03}. This paper explores a model-based approach for approximate solving of finite-horizon Dec-POMDPs.  Concretely, our approach finds policies that obtain a high value on a given Dec-POMDP and bounds on the value achieved by an optimal policy. The proximity between the achieved values and the bounds shows that the policies empirically perform very well.

\paragraph{Small-step \maastar.}
A prominent line of work for solving Dec-POMDPs for finite horizons builds upon multi-agent \astar (\maastar)~\cite{szer2005maa}. The crux of these algorithms is to search through the space of all joint policies in a search space where we incrementally fix the decisions of individual agents. To alleviate a doubly-exponential out-degree with growing horizons, \emph{small-step} \maastar \cite{DBLP:conf/ijcai/Koops0JS23} makes these decisions for every observation history sequentially, yielding very deep search trees that have a limited out-degree. 
However, so far, \maastar is used to compute exact solutions by iteratively refining the upper bound provided by an admissible heuristic. In this paper, we show that \maastar provides a competitive foundation for algorithms that find good but not necessarily optimal policies. Likewise, \maastar is also a solid foundation for computing non-trivial (and potentially tight) upper bounds. 
The result is an algorithm that scales to much higher horizons than (exact) \maastar-based algorithms.
Below, we briefly give our perspective on both lower and upper bounds before outlining the main \textit{technical ingredients} (TIs). 

\paragraph{Lower bounds by finding policies.} To find good policies fast, we limit the space of policies by considering policies that are independent of old observations (TI1) and we limit the number of partial policies that we may explore at any level of the \astar search tree (TI2).
Since we already limit the number of policies expanded, it is not essential to use a tight heuristic for this. We show that \maastar can also find good policies fast, using heuristics that are generally not tight (TI3).

\paragraph{Proving upper bounds.} When using \maastar with any admissible heuristic, the highest heuristic value is an upper bound for the optimal value. However, the only  heuristic that scales to the horizons for which we find lower bounds is $Q_{\mathsf{MDP}}$, which treats the Dec-POMDP as a fully-observable, centralized MDP. 
Other, tighter heuristics cannot handle the horizons we are interested in. 
In this paper, we introduce novel heuristics (TI4) which are less tight but more scalable than the heuristics previously used for solving Dec-POMDPs.

\paragraph{Technical ingredient 1: Clustering with sliding-window memory.} 
A major challenge in finding good policies for large horizons is the fact that the optimal policy may be exponentially large.
Lossless incremental \emph{clustering}~\cite{oliehoek2009lossless} helps to determine that an optimal policy may, w.l.o.g., take the same decision based on different observation histories. However, even with clustering, computing a policy generally requires determining an action for exponentially many different observation histories. In this work, we consider clustering for a specific subclass of policies. In particular, we consider sliding window policies that only depend on the most recent $k$ observations, which empirically are often the most relevant observations. 
We develop a lossless clustering for sliding window memory, which clusters these windows with no additional loss of optimal policy value, compared to using sliding window memory.

\paragraph{Technical ingredient 2: Pruning the queue.} \astar-based algorithms explore partial policies in order of the heuristic value. Especially if the heuristic is not tight, this may lead to a breadth-first-like exploration of policies, effectively preventing exploring policies that are defined for longer horizons. We therefore prevent considering partial policies by pruning nodes in the search tree. We use a hard cap for every stage of the search tree while ensuring that we do eventually expand a complete (i.e., non-partial) policy.

\paragraph{Technical ingredient 3: Loose heuristics.}
In any \astar-variant, policies are analysed in the order suggested by the heuristic. A good heuristic is thus one which leads early on to a policy with a high value. While tight heuristics do ensure this property, also loose heuristics can have a similar property. 
Empirically, it is often near-optimal to greedily optimize for the next few steps. Concretely, our heuristic considers only the next few time steps using a Dec-POMDP. The potential reward after these few steps is estimated using an MDP or even by assuming that the maximal reward is constantly achieved.

\paragraph{Technical ingredient 4: Scalable and tight heuristics for upper bounds.} 
To prove upper bounds on the optimal value, we need tight admissible heuristics.
Common heuristics relax the restrictions of a decentralized, partially observable setting and assume the problem to be centralized and/or fully observable. 
However, full state-observability yields a heuristic that is not sufficiently tight, and relaxing the setting to a (centralized) POMDP does not avoid expensive computations as it requires considering exponentially many beliefs in the horizon. 
Inspired by the idea of only sharing information after some delay, we present an admissible heuristic that \emph{periodically} reveals state information. 
This yields a trade-off between the horizons over which one must reason about partial information and the tightness of the heuristic. The heuristic can be computed on all benchmarks we selected.

\paragraph{Contributions.} To summarize, our technical advancements on clustering, heuristics, and pruning outlined above together yield a pair of algorithms that  find good policies as well as upper bounds for Dec-POMDPs for horizons more than an order of magnitude larger than for which exact Dec-POMDP solving is possible. In particular, for the \boxpush benchmark with horizons up to $100$, we find policies with values that are only 1\% smaller than our upper bounds.\footnote{Source code is available at \url{https://zenodo.org/records/11160648}}

\subsubsection*{Related Work}

This work builds on a series of works on \astar algorithms for solving Dec-POMDPs, culminating in the exact algorithms \gmaaice \cite{oliehoek2013incremental} and \rsmaa \cite{DBLP:conf/ijcai/Koops0JS23}. Using \astar algorithms for approximate solving of Dec-POMDPs has also been proposed previously. In particular, \citet{oliehoek2008optimal} propose $k$-\gmaastar, which in each step only adds the $k$ best children of each node as computed using a Bayesian game. For better scalability, \citet{DBLP:conf/atal/Emery-MontemerloGST04} combine this (for $k=1$) with only approximately solving the Bayesian games and lossy clustering. Instead of only adding the $k$ best children, our algorithm adds all children and prunes them when necessary (TI2). Therefore, our algorithm is able to use its resources to search at points where it is less clear what is the best action to choose. Finally, \citet{szer2005maa} mention a heuristic where the overestimate for future reward is weighted with some factor $w < 1$. This is not an admissible heuristic, but it results in \maastar finding a (possibly suboptimal) policy faster.

Related work in the general \astar literature includes \citet{cazenave2010partial}, which proposed an idea similar to small-step \maastar in a general setting. \citet{DBLP:conf/ecai/Russell92} studies Simplified Memory-Bounded \astar (SM\astar), which bounds the memory required by \astar until the first solution is found by pruning policies with a low heuristic from the priority queue.

Early work on approximately solving Dec-POMDPs includes
JESP \cite{nair2003taming}, which computes a Nash equilibrium, and DICEPS \cite{DBLP:journals/informaticaSI/OliehoekKV08} which uses the cross-entropy method. Another line of work is on algorithms that use dynamic programming~(DP) \cite{hansen2004dynamic,DBLP:conf/uai/SeukenZ07,DBLP:conf/atal/CarlinZ08,DBLP:conf/flairs/KumarZ09,DBLP:conf/atal/DibangoyeMC09,amato2009incremental}. Although these DP algorithms are bottom-up rather than top-down, they also use pruning to limit the number of policies (per time step). In addition, they use techniques reminiscent of clustering, to avoid spending too much time on optimizing for unlikely observations.

The genetic algorithm GA-FSC \cite{DBLP:journals/aamas/EkerA13} searches through finite state controllers and finds the best known policies on several benchmarks. The state-of-the-art $\epsilon$-optimal algorithm FB-HSVI transforms the Dec-POMDP into a continuous-state MDP  \cite{DBLP:journals/jair/DibangoyeABC16} and
solves it using an adaption of heuristic search value iteration.

There is also significant work on model-free reinforcement learning (RL) algorithms \cite{DBLP:journals/ijon/KraemerB16,DBLP:conf/pkdd/BonoDMP018,DBLP:conf/jfpda/DibangoyeB18,DBLP:conf/cdc/MaoZMB20}. The model-based RL algorithm Team-Imitate-Synchronize \cite{DBLP:conf/pkdd/AbdooBSS22} learns a centralized team policy, which is imitated by a decentralized policy and improved using synchronization.

\section{Problem Statement}

\newcommand{\agents}{\mathcal{D}}
\newcommand{\states}{\mathcal{S}}
\newcommand{\initbelief}{b}
\newcommand{\actions}{\mathcal{A}}
\newcommand{\jointactions}{\bm{\actions}}
\newcommand{\act}{a}
\newcommand{\jointact}{\mathbf{\act}}
\newcommand{\transitions}{T}
\newcommand{\rewards}{R}
\newcommand{\observations}{\mathcal{O}}
\newcommand{\jointobservations}{\bm{\observations}}
\newcommand{\obs}{o}
\newcommand{\jointobs}{\mathbf{o}}
\newcommand{\obsfun}{O}
\newcommand{\horizon}{h}
\renewcommand{\time}{t}
\newcommand{\policy}{\pi}
\newcommand{\jointpolicy}{{\bm{\policy}}}
\newcommand{\distr}[1]{\Delta(#1)}
\newcommand{\tuple}[1]{\langle #1 \rangle}
\newcommand{\jointpolicies}{\Pi}
\newcommand{\obstrace}{\tau}
\newcommand{\jointobstrace}{\bm{\obstrace}}

 \newcommand{\clusterpol}{\policy^C}
 \newcommand{\jointclusterpol}{\jointpolicy^C} 
 \newcommand{\clusterpolicies}{\jointpolicies^C}
 \newcommand{\clusterfun}{C}
 \newcommand{\jointcluster}{\mathbf{c}}
  \newcommand{\forget}{F}
 \newcommand{\cluster}{c}
 \newcommand{\clusteralt}{\tilde{c}}
 \newcommand{\obsalt}{\tilde{\obs}}
  \newcommand{\clusteraltalt}{\hat{c}}
 \newcommand{\obsaltalt}{\hat{\obs}}
 \newcommand{\windowsize}{k}

We briefly recap Dec-POMDPs \cite{DBLP:series/sbis/OliehoekA16} following the notation of \citet{DBLP:conf/ijcai/Koops0JS23}. 
In particular, $\distr{X}$ denotes the set of distributions over a finite set $X$.

\begin{definition}[Dec-POMDP]
A \emph{Dec-POMDP} is a tuple $\tuple{\agents, \states, \jointactions, \jointobservations, \initbelief, \transitions, \rewards, \obsfun}$ with
    a set $\agents = \{ 1, \dots, n \}$ of $n$ \emph{agents},
    a finite set $\states$ of \emph{states},
    a set $\jointactions = \bigtimes_{i \in \agents} \actions_i$ of \emph{joint actions}, and
    a  set $\jointobservations = \bigtimes_{i \in \agents}\observations_i$ of \emph{joint observations}, where $\actions_i$ and $\observations_i$ are finite sets of \emph{local actions} and \emph{local observations} of agent $i$.
    The \emph{transition function} $\transitions \colon \states \times \jointactions \rightarrow \distr{\states}$ defines the \emph{transition probability} $\Pr(s' \mid s, \jointact)$,
    $\initbelief \in \distr{\states}$ is the \emph{initial belief}, $\rewards \colon \states \times \jointactions \rightarrow \mathbb{R}$ is the \emph{reward} function, and
    the \emph{observation function} $\obsfun \colon \jointactions \times \states \rightarrow \distr{\jointobservations}$ defines the \emph{observation probability} $\Pr(\jointobs \mid \jointact, s')$.
\end{definition}
 \noindent A Dec-POMDP describes a system whose state changes stochastically at every stage. The initial state is $s^0$ with probability $\initbelief(s^0)$. At each stage $t$, each agent $i$ takes a local action $\act_i^t$, resulting in a joint action $\jointact^t = \tuple{\act_1^t, \dots, \act_n^t}$ and a reward $r^t  = \rewards(s^t,\jointact^t)$. The next state is $s^{t+1}$ with probability $\Pr(s^{t+1} \mid s^t, \jointact^t)$. Finally, a joint observation $\jointobs^{t+1}$ is drawn with probability $\Pr(\jointobs^{t+1} \mid \jointact^t, s^{t+1})$, and each agent $i$ receives their local observation $\obs_i^{t+1}$.

 We write  $\obs_i^{[v, t]} = \obs_i^{v}\ldots\obs_i^t$ for the  local observations of agent $i$ between stage $v$ and $t$. If the next local observation is $o \in \observations_i$, then $\obs_i^{[v, t]} \cdot o$ denotes the local observations between stage $v$ and $t+1$. We write $\obstrace_i = \obs_i^{[1, t]}$ for the \emph{local observation history} (LOH) of agent $i$, and $\jointobstrace = \jointobs^1\dots \jointobs^t$ for the \emph{joint observation history}.
 We denote the set of all LOHs of agent $i$ of length exactly $\ell$ and at most $\ell$ by $\observations_i^{\ell}$ and $\observations_i^{\leq \ell}$, respectively.

 Agents can choose their action based on all their past observations, i.e.\ based on their LOH. 
 A \emph{local policy} for agent $i$ maps LOHs for that agent to a local action, formally: $\policy_i \colon \observations_i^{\leq h-1} \rightarrow \actions_i$.
 A \emph{joint policy} is a tuple of local policies $\jointpolicy = \tuple{\policy_1, \cdots, \policy_n}$, and $\jointpolicies$ denotes the set of all joint policies.
 We will often refer to a joint policy simply as a \textit{policy}. 

Given a policy $\jointpolicy$, we define the value of executing this policy in the initial belief $b$ over a horizon $h$ as:
\[
 V_\jointpolicy(\initbelief, \horizon) = \mathbb{E}_{\jointpolicy}\left[\sum_{t=0}^{h-1} R(s^t, \jointact^t) \,\middle|\, s^0 \sim b \right],
\]
 where $s^t$ and $\jointact^t$ are the state and joint action at stage $t$. 
Finally, in all probabilities, we implicitly also condition on the past policy and the initial belief.

The goal of the agents is to maximize the expected reward. 
An optimal policy is a policy $\jointpolicy^* \in { \argmax}_{\jointpolicy' \in \jointpolicies} V_{\jointpolicy'}(b, \horizon)$. The aim of this paper is to find a policy $\jointpolicy$ with a high value, as well as upper bounds for the optimal value. 

 \begin{mdframed}
 \textbf{Problem statement:} Given a Dec-POMDP and a horizon $\horizon$, find a policy $\jointpolicy$ and an upper bound $U(b, \horizon)$ s.t.
 \[ V_{\jointpolicy}(b, \horizon) \quad \leq\quad\max_{\jointpolicy' \in \jointpolicies} V_{\jointpolicy'}(b, \horizon) \quad\leq\quad U(b, \horizon), \]
 and $V_{\jointpolicy}(b, \horizon)$ and $U(b, \horizon)$ close to each other.
 \end{mdframed}

\paragraph{Sliding window memory.} We search the space of \emph{sliding $\windowsize$-window memory} policies that depend only on the last $\windowsize$ observations. Windows that end at stage $t$ start at stage $\ell_t = \max(t-\windowsize, 0)+1$. A policy $\pi_i$ has \emph{sliding window memory}, if  $\obs^{[\ell_t,t]}_i = \obsalt^{[\ell_t,t]}_i$ implies $\pi_i\big(\obs^{[1,t]}_i\big) = \pi_i\big(\obsalt^{[1,t]}_i\big)$.

 \section{Clustering}\label{sec:clustering}

Abstractly, our algorithm searches over policies that map LOHs to actions. To limit the search space, we adopt clustering~\cite{oliehoek2009lossless}, i.e., the idea that policies assign the same action to LOHs that belong to the same cluster of LOHs. Firstly, we introduce formally sliding $k$-window memory. Then we introduce \emph{clustered} sliding $k$-window memory, which aims to merge existing clusters further without inducing a loss in policy value.

\begin{definition}
    A \emph{clustering} is a partition $C_{i, t}$ of $\observations_i^t$ for each stage $0 \leq t \leq \horizon-1$ and each agent $i \in \agents$. 
\end{definition}

\begin{definition}\label{def:sliding}
    The clustering for sliding $\windowsize$-window memory consists of the partitions $C_{i,t}$,  $0 \leq t \leq \horizon{-}1$, where \[C_{i,t} = \left\{\left\{\obsalt^{[1,t]}_i \!\in\! \observations_i^t \midd\obsalt^{[\ell_t,t]}_i \!=\! \obs^{[\ell_t,t]}_i \right\} \midd \obs^{[\ell_t,t]}_i \in \observations_i^{\min\{t, k\}} \right\}.\] 
\end{definition}
 
We identify the equivalence class (cluster) of all LOHs that have suffix $\obs^{[\ell_t,t]}_i$ with exactly this suffix $\obs^{[\ell_t,t]}_i$.

\paragraph{Cluster policies.} We now introduce \emph{cluster policies}.
Write $C_i = \bigcup_{t=0}^{\horizon-1} C_{i,t}$ for the set of agent $i$'s clusters. 
Let $C \colon \bigcup_{i \in \agents}  \bigcup_{t=0}^{\horizon-1} \observations_i^t \rightarrow \bigcup_{i \in \agents} C_i$ be the map that assigns an LOH to its cluster. 
A \emph{local cluster policy} for agent $i$ is a map $\clusterpol_i \colon C_i \rightarrow \actions_i$. 
The corresponding local policy $\policy_i$ is defined by $\policy_i(\obstrace_i) = \clusterpol_i(\clusterfun(\obstrace_i))$ for each LOH $\obstrace_i$. 
We denote the set of all cluster policies  $\jointclusterpol = \tuple{\clusterpol_1, \ldots, \clusterpol_n}$ by $\clusterpolicies$.  
The value of a cluster policy $\jointclusterpol$ is the value of the corresponding policy $\jointpolicy = \tuple{\policy_1, \ldots, \policy_n}$, where each $\policy_i$ is the local policy corresponding to $\clusterpol_i$. 
Note that a best cluster policy corresponding to sliding $\windowsize$-window memory is not necessarily optimal.

\subsubsection{Clustered Sliding Window Memory} 
We extend the so-called lossless clustering \cite{oliehoek2009lossless} to sliding window memory. 
We require our clustering to be \emph{incremental}: if two LOHs are clustered, then their extensions by the same observation are also clustered together.

\begin{definition}
   Let $\equiv_{C_{i, t}}$ be the equivalence relation induced by the partition $C_{i, t}$. A clustering is \emph{incremental} at stage $t$ if 
    \begin{align*} \forall \obs_i^{[1,t]} \equiv_{C_{i,t}} \obsalt_i^{[1,t]} \forall \obs \in \observations_i:  \big( \obs_i^{[1,t]} \cdot \obs \big) \equiv_{C_{i, t+1}} \big( \obsalt_i^{[1,t]} \cdot \obs \big) \end{align*}
    A clustering is incremental if it is incremental at each stage.
\end{definition}
    A clustering $C'$ is coarser than $C$ if each partition $C'_{i,t}$ is coarser than $C_{i,t}$, i.e.\ if each cluster in $C'_{i,t}$ is a union of clusters in $C_{i,t}$. We call $C'$ finer than $C$ if $C$ is coarser than $C'$.

\begin{definition}
We call a clustering $C'$ \emph{lossless} with respect to another clustering $C$, if $C'$ is coarser than $C$ and 
\[\max_{\policy \in \jointpolicies^{C'}} V_{\pi}(\initbelief, \horizon) = \max_{\policy \in \jointpolicies^C} V_{\pi}(\initbelief, \horizon),\]
where $\jointpolicies^{C'}$ and $\jointpolicies^C$ denote the set of all cluster policies corresponding to $C'$ and $C$, respectively.
\end{definition}
We call a clustering lossless, if it is lossless with respect to a trivial (i.e.\ no) clustering.

We define clustered sliding window memory recursively, stage by stage. We write $\jointcluster_{\neq i}^{t-1}$ for the tuple consisting of the cluster of each agent except agent $i$ in stage $t-1$. We write $\forget(\jointcluster_{\neq i}^{t-1}, \jointobs_{\neq i}^{t})$ for the resulting tuple of clusters of the other agents that we get in stage $t$ after applying sliding window memory (Def.\ \ref{def:sliding}). Formally, these are the clusters in the finest clustering which is incremental at stage $t-1$ and coarser than sliding window memory clustering.

\begin{definition} \label{def:belief-equivalence}
Two suffixes $\obs_i^{[\ell_t,t]}$ and $\obsalt_i^{[\ell_t,t]}$ are \emph{belief-equivalent},  written $\obs_i^{[\ell_t,t]} \sim_b \obsalt_i^{[\ell_t,t]}$, if for all $v \!\in\! \{\ell_t,\ldots,t\}$,
\[ \Pr\Big(s^t, \forget(\jointcluster_{\neq i}^{t-1}, \jointobs_{\neq i}^{t})\! ~\big|~  \!\obs_i^{[v,t]}\Big) = \Pr\Big(s^t, \forget(\jointcluster_{\neq i}^{t-1}, \jointobs_{\neq i}^{t})\!  ~\big|~  \!\obsalt_i^{[v,t]}\Big). \]
\end{definition}

For $v = \ell_t$, this definition states that the joint belief over the states and the clusters of the other agents is the same for  $\obs_i^{[\ell_t,t]}$ and $\obsalt_i^{[\ell_t,t]}$. 
Using a result of \citet{hansen2004dynamic} yields:
\begin{lemma}\label{lossless}
If a clustering is incremental, coarser than sliding $k$-window memory and finer than belief-equivalence, it is lossless w.r.t.\ sliding $k$-window memory.
\end{lemma}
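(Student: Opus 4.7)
The plan is to prove the two inequalities in the definition of lossless separately. The easy direction $\max_{\jointpolicy \in \jointpolicies^{C'}} V_\jointpolicy(\initbelief, \horizon) \leq \max_{\jointpolicy \in \jointpolicies^{C}} V_\jointpolicy(\initbelief, \horizon)$ follows purely from the coarser-than relation: if each cluster of $C'_{i,t}$ is a union of $C_{i,t}$-clusters, then any local cluster policy $\clusterpol_i' \colon C'_i \to \actions_i$ induces a local cluster policy for $C$ by composition with the coarsening map, so $\jointpolicies^{C'} \subseteq \jointpolicies^{C}$ as sets of induced joint policies.

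For the harder direction, I would take an optimal cluster policy $\jointpolicy^* \in \jointpolicies^{C}$ and construct, in stages, a cluster policy $\jointpolicy' \in \jointpolicies^{C'}$ with $V_{\jointpolicy'}(\initbelief, \horizon) = V_{\jointpolicy^*}(\initbelief, \horizon)$. The natural construction is backwards in time: at each stage $t$, for each $C'$-cluster $\tilde{c} \in C'_{i,t}$ that merges several $C$-clusters $c_1, \dots, c_m$ (each identified with a suffix $\obs_i^{[\ell_t,t]}$), pick one representative and set $\clusterpol'_i(\tilde{c})$ equal to the action $\jointpolicy^*$ assigns to that representative. To show that this reassignment does not change the value, I would invoke a Bellman-style argument (as in Hansen et al.) showing that the continuation value of any fixed joint (cluster) policy at stage $t$ depends only on the joint belief over states and the other agents' $C'$-clusters. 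Belief-equivalence (Def.~\ref{def:belief-equivalence}) gives exactly this: any two histories in the same $C'$-cluster at stage $t$ produce identical joint beliefs, so replacing one action by another across the merged $C$-clusters preserves the value.

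The incremental property is what glues the stage-wise modification together into a well-defined cluster policy for $C'$: once one commits to a single action per $C'$-cluster at stage $t$, the $C'$-cluster at stage $t{+}1$ of any extension $\obstrace_i \cdot \obs$ is determined by the $C'$-cluster of $\obstrace_i$ and $\obs$ alone, so the next stage's decisions can again be specified per $C'$-cluster without conflict. Coarser-than sliding $\windowsize$-window memory is needed so that belief-equivalence of suffixes, as formulated in Def.~\ref{def:belief-equivalence}, is the right notion for the histories one actually has to merge.

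The main obstacle I expect is the multi-agent coupling inside the conditional probabilities $\Pr(s^t, \forget(\jointcluster_{\neq i}^{t-1}, \jointobs_{\neq i}^t) \mid \obs_i^{[v,t]})$: these beliefs are induced by the joint policy, yet we are modifying all agents' local cluster policies. I would handle this by an induction in which, at each stage, agents are modified one at a time, so that the other agents' behaviour (and hence the joint belief at stage $t$) remains fixed during each individual step. Iterating and appealing to the Hansen-style decomposition then yields the required equality of values.
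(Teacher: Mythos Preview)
Your approach is essentially the paper's: both reduce to Hansen et al.'s criterion that two histories can be merged losslessly whenever they induce the same multi-agent belief, and both use the ``finer than belief-equivalence'' hypothesis to verify that criterion for any two LOHs lying in the same $C'$-cluster. Your explicit backward construction and one-agent-at-a-time induction are a reasonable unpacking of what Hansen et al.\ already proves.

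One step you gloss over that the paper isolates: belief-equivalence in Def.~\ref{def:belief-equivalence} is stated for the distribution over $(s^t,\forget(\jointcluster_{\neq i}^{t-1},\jointobs_{\neq i}^t))$, i.e.\ over the \emph{pre-clustering} objects at stage $t$, not directly over the other agents' actual $C'$-clusters $\jointcluster_{\neq i}^t$. The paper bridges this via an auxiliary claim: because $C'$ is incremental at stage $t{-}1$ and coarser than sliding $k$-window memory, each $C'$-cluster at stage $t$ is a union of the $\forget(\cdot)$-clusters, so equal distributions over the latter aggregate to equal distributions over the former. Your sentence ``coarser-than sliding $k$-window memory is needed so that belief-equivalence \ldots\ is the right notion'' gestures at this, but you should make the aggregation step explicit, since this is precisely where the two structural hypotheses (incremental, coarser than sliding window) are jointly used before one can invoke the Hansen-style value argument.
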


Only demanding that these beliefs are equal for $v = \ell_t$ in Def.~\ref{def:belief-equivalence} does not give an incremental clustering (we give an example in App.~\ref{appA}). 
Instead, we need that the beliefs are equal for all $v \in \{\ell_t, \dots, t\}$. Intuitively, this means that the belief is also the same for the two suffixes when forgetting observations. 
Def.~\ref{def:belief-equivalence} alone is \emph{not} sufficient to establish incrementality of the clustering. However, when ensuring that each cluster contains precisely the LOHs with common suffix $\obs_i^{[m,t]}$, we can prove incrementality. We define these clusters using the following equivalence relation:
\begin{definition}
Consider suffixes $\obs_i^{[\ell_t,t]}$,  $\obsalt_i^{[\ell_t,t]}$ with largest common suffix $\obs_i^{[m,t]} = \obsalt_i^{[m,t]}$. They
are \emph{equivalent}, written $\obs_i^{[\ell_t,t]} \sim \obsalt_i^{[\ell_t,t]}$, if for all  $\obsaltalt_i^{[\ell_t,t]}$ with $\obsaltalt_i^{[m,t]} = \obs_i^{[m,t]}$, we have $\obs_i^{[\ell_t,t]} \sim_b\obsaltalt_i^{[\ell_t,t]}$.
\end{definition}
In this definition, we allow $m > t$, in which case the suffix is empty, and we cluster all LOHs of agent $i$ together.

We prove that $\sim$ is indeed an equivalence relation in App.~\ref{appA}. We write $[\obs_i^{[\ell_t,t]}]$ for the equivalence class of $\obs_i^{[\ell_t,t]}$. With this in hand, we can define the clustering corresponding to clustered sliding window memory.

\begin{definition}
      The clustering corresponding to clustered sliding window memory with window size $\windowsize$ consists of the sets defined by $C_{i,t} = \left\{[\obs_i^{[\ell_t,t]}] ~\Big|~\obs_i^{[\ell_t,t]} \in \observations_i^{\min\{t, k\}}\right\}$.
\end{definition}

Under this notion of equivalence, identical extensions of equivalent clusters are equivalent, which implies:

\begin{lemma}\label{incremental}
Clustered sliding window memory is incremental.
\end{lemma}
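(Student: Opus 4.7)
The plan is to verify incrementality directly from the definitions by showing: if $\obs_i^{[\ell_t,t]} \sim \obsalt_i^{[\ell_t,t]}$ with largest common suffix $\obs_i^{[m,t]}$, then for every $\obs\in\observations_i$ the $\ell_{t+1}$-truncated extensions $\obs_i^{[\ell_{t+1},t]}\cdot\obs$ and $\obsalt_i^{[\ell_{t+1},t]}\cdot\obs$ are $\sim$-equivalent at stage $t+1$. The argument rests on two ingredients: (i)~$\sim_b$ is preserved under extension by a common observation, which is the standard Bayes-filter update already underlying Lemma~\ref{lossless} via~\cite{hansen2004dynamic} --- the posterior over $(s^{t+1},\forget(\jointcluster_{\neq i}^t,\jointobs_{\neq i}^{t+1}))$ is a function of the prior over $(s^t,\forget(\jointcluster_{\neq i}^{t-1},\jointobs_{\neq i}^t))$ and the new observation alone --- and (ii)~a combinatorial step that tracks how the start of the largest common suffix $m$ and the window start $\ell_t$ evolve between consecutive stages.

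I would first state ingredient (i) as an auxiliary fact: $\obs_i^{[\ell_t,t]} \sim_b \obsalt_i^{[\ell_t,t]}$ implies $\obs_i^{[\ell_t,t]}\cdot\obs \sim_b \obsalt_i^{[\ell_t,t]}\cdot\obs$, because equal priors for every $v\in\{\ell_t,\ldots,t\}$ yield equal posteriors for every $v\in\{\ell_t,\ldots,t+1\}$, with the $v=t+1$ case being immediate. Let $m' = \max(m,\ell_{t+1})$ be the start of the common suffix of the truncated extensions and fix any $\obsaltalt_i^{[\ell_{t+1},t+1]}$ with $\obsaltalt_i^{[m',t+1]} = \obs_i^{[m',t+1]}$; this forces $\obsaltalt_i^{t+1}=\obs$, so $\obsaltalt_i^{[\ell_{t+1},t+1]} = \obsaltalt_i^{[\ell_{t+1},t]}\cdot\obs$, and the goal reduces to $\obs_i^{[\ell_{t+1},t+1]} \sim_b \obsaltalt_i^{[\ell_{t+1},t+1]}$.

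I would then split on the window regime. In the no-shift regime ($t+1\le k$, so $\ell_{t+1}=\ell_t$ and $m'=m$), $\obsaltalt_i^{[\ell_t,t]}$ already matches $\obs_i^{[m,t]}$ on the common-suffix window, so the stage-$t$ $\sim$-equivalence gives $\obs_i^{[\ell_t,t]} \sim_b \obsaltalt_i^{[\ell_t,t]}$, and ingredient~(i) closes the step. In the shift regime ($\ell_{t+1}=\ell_t+1$), the case $m=\ell_t$ is trivial (the suffixes already coincide); for $m\ge\ell_t+1$ I would lift $\obsaltalt_i^{[\ell_t+1,t]}$ to a $\ell_t$-suffix by prepending an arbitrary observation at position $\ell_t$, which preserves the $[m,t]$-common-suffix condition with $\obs_i$, then apply the stage-$t$ $\sim$-equivalence followed by~(i), and finally observe that the $v$-quantifier in Definition~\ref{def:belief-equivalence} at stage $t+1$ ranges only over $v\in\{\ell_t+1,\ldots,t+1\}$, so the conditioning never sees position $\ell_t$ and the prepended observation drops out.

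\textbf{Main obstacle.} The delicate step is exactly this lifting in the shift regime: one must reintroduce a $\ell_t$-position that the sliding window has just discarded in order to invoke the stage-$t$ equivalence, and then argue that the auxiliary choice is harmless at stage $t+1$. This is precisely why Definition~\ref{def:belief-equivalence} demands equality for every $v\in\{\ell_t,\ldots,t\}$ rather than only $v=\ell_t$; weakening the quantifier collapses the lifting step, matching the counterexample announced in App.~\ref{appA}.
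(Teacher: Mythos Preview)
Your overall decomposition---prove a belief-propagation step and then track how $m$ and $\ell_t$ interact across the window shift---matches the paper's proof. However, ingredient~(i) as you state it is not correct, and this is exactly the subtlety the paper isolates. You assert that the posterior over $(s^{t+1},\forget(\jointcluster_{\neq i}^t,\jointobs_{\neq i}^{t+1}))$ given $\obs_i^{[v,t+1]}$ is a function of the prior over $(s^t,\forget(\jointcluster_{\neq i}^{t-1},\jointobs_{\neq i}^t))$ given $\obs_i^{[v,t]}$ and the new observation alone. But that belief is over the state and the \emph{other} agents' clusters; agent~$i$'s own cluster is not part of it. The update from stage $t$ to $t{+}1$ goes through the joint action, and agent~$i$'s component is determined by agent~$i$'s cluster at stage $t$, which depends on the full window $\obs_i^{[\ell_t,t]}$. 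When you condition only on $\obs_i^{[v,t]}$ with $v>\ell_t$, that cluster---hence agent~$i$'s action---is not determined, and two suffixes that are merely $\sim_b$-equivalent can induce different action distributions and therefore different posteriors. So $\sim_b$ is not preserved under common extension as a general fact, and this is not a ``standard Bayes-filter update'' in the sense you invoke.

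What makes the argument go through in your actual applications is the additional structure of $\sim$ beyond $\sim_b$: by Claim~\ref{claim:suffix}, every $\sim$-class at stage $t$ consists of exactly the suffixes sharing a fixed tail $\obs_i^{[m,t]}$. Whenever you invoke~(i) for $\obs$ and $\obsaltalt$ agreeing on $[m,t]$, any partial suffix $\obs_i^{[v,t]}$ with $v<m$ already contains $\obs_i^{[m,t]}$ and hence determines agent~$i$'s cluster and action; for $v\ge m$ the two partial suffixes coincide. This is precisely the ``third factor'' case split in the paper's proof, and the discussion preceding Lemma~\ref{incremental} flags it as the reason one needs $\sim$ rather than $\sim_b$. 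Your lifting manoeuvre for the shift regime and your remark about the $v$-quantifier in Definition~\ref{def:belief-equivalence} are correct and correspond to the paper's treatment, but they are not the whole story: you also need Claim~\ref{claim:suffix} to pin down agent~$i$'s own action, and your plan does not mention it.
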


\noindent
Lemma \ref{lossless} and \ref{incremental}
 together imply:

\begin{theorem}
Clustered sliding window memory is lossless with respect to sliding window memory.
\end{theorem}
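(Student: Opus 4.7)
The plan is to chain Lemma~\ref{lossless} and Lemma~\ref{incremental}. Lemma~\ref{lossless} demands three properties of the clustering: (i) incrementality, (ii) being coarser than sliding $\windowsize$-window memory, and (iii) being finer than belief-equivalence. Property (i) is precisely Lemma~\ref{incremental}, so only (ii) and (iii) remain to be verified, and both should follow by carefully unrolling the definition of the equivalence relation $\sim$.

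For (ii), two LOHs lie in the same sliding-window cluster iff their length-$\windowsize$ suffixes coincide. If $\obs_i^{[\ell_t,t]} = \obsalt_i^{[\ell_t,t]}$, then the largest common suffix is the entire suffix ($m = \ell_t$), and the universal condition in the definition of $\sim$ reduces to requiring $\obs_i^{[\ell_t,t]} \sim_b \obs_i^{[\ell_t,t]}$, which holds trivially by reflexivity. Hence each sliding-window cluster is contained in a single clustered-sliding-window cluster. For (iii), suppose $\obs_i^{[\ell_t,t]} \sim \obsalt_i^{[\ell_t,t]}$ with largest common suffix starting at some $m$. Instantiate the universal quantifier in the definition of $\sim$ with the choice $\obsaltalt_i^{[\ell_t,t]} := \obsalt_i^{[\ell_t,t]}$; this is a legal instantiation since $\obsalt_i^{[m,t]} = \obs_i^{[m,t]}$ by construction of the common suffix. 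We directly obtain $\obs_i^{[\ell_t,t]} \sim_b \obsalt_i^{[\ell_t,t]}$, establishing belief-equivalence.

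The step I expect to require the most care is not any heavy calculation but rather disentangling the definition of $\sim$: it quantifies over a third suffix $\obsaltalt_i^{[\ell_t,t]}$ rather than comparing $\obs_i^{[\ell_t,t]}$ and $\obsalt_i^{[\ell_t,t]}$ directly, so one must verify that $\obsalt_i^{[\ell_t,t]}$ itself serves as a valid witness under the quantifier, and likewise that in the identical-suffix case the only admissible witness collapses to the trivial reflexivity. Once (ii) and (iii) are in place, feeding all three properties into Lemma~\ref{lossless} yields equality of the optimal cluster-policy values under clustered sliding window memory and under sliding window memory, which is exactly the claimed losslessness.
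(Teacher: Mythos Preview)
Your proposal is correct and follows essentially the same approach as the paper: the paper simply states that Lemmas~\ref{lossless} and~\ref{incremental} together imply the theorem, leaving the verification of conditions~(ii) and~(iii) implicit in the definitions. You spell out these two verifications explicitly, and your arguments for both---reflexivity of $\sim$ for~(ii), and instantiating the universal quantifier with $\obsaltalt_i^{[\ell_t,t]} := \obsalt_i^{[\ell_t,t]}$ for~(iii)---are exactly the right unpacking of the definition of $\sim$.
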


\paragraph{Probability-based clustering.}  To further limit the number of clusters, we can also cluster suffixes to a smaller common suffix together if the probability corresponding to the smaller common suffix is still smaller than some threshold $p_{\max}$. Formally, we consider two suffixes  $\obs_i^{[\ell_t,t]}$, $\obsalt_i^{[\ell_t,t]}$ with largest common suffix $\obs_i^{[m,t]} = \obsalt_i^{[m,t]}$ to be \emph{approximately equivalent} if $\obs_i^{[\ell_t,t]} \sim \obsalt_i^{[\ell_t,t]}$ or $\Pr(\obs_i^{[m,t]}) \leq p_{\max}$, and define the clusters \mbox{to be the equivalence classes of this equivalence relation.}

\section{Small-Step Multi-Agent \texorpdfstring{A$^{*}$}{A*}}

Our algorithm searches for a policy by incrementally fixing actions, for one clustered LOH at the time. Specifically, our algorithm uses small-step \maastar \cite{DBLP:conf/ijcai/Koops0JS23}, which in turn builds on \maastar \cite{szer2005maa}. We present the algorithm explicitly using clusters.
Small-step \maastar explores clusters in a fixed order: first by stage, then by agent, and then according to a given order of the clusters. 
\begin{definition} 
Given a total order $\preceq_{i,t}$ on $C_{i,t}$ for $t < \horizon$ and $i \in \agents$, we define an order $\preceq$ on $\bigcup_{i \in \agents} C_i$ by $\cluster_i^t \preceq \clusteralt_j^{t'}$ iff 
\[ \big(t < t'\big) \text{ or } \big(t = t' \land i < j\big) \text{ or } \big(t = t' \land i = j \land \cluster_i^t \preceq_{i,t} \clusteralt_j^{t'}\big).
\]
We call this order the \emph{expansion order}. 
\end{definition}

In \maastar and its derivatives, we incrementally construct policies. The intermediate policies are called \emph{partial}. In particular, a \emph{local partial policy} is a partial function $\ppol_i \colon C_i \!\rightarrow \actions_i$. A \emph{partial policy} is a tuple $\jointppol = \tuple{\ppol_1, \dots, \ppol_n}$ such that the local partial policies $\ppol_i$ are defined on precisely the clusters of agent $i$ among the first $d$ clusters in the expansion order for some $d$. Let $\ppolicies$ be the set of all partial policies. The \emph{stage $\stage{\jointppol}$  of $\jointppol$} is $u$ if $\jointppol$ is defined on all clusters of length $u-1$, but not on all clusters of length $u$.
A partial policy $\jointppol'$ \emph{extends} a partial policy $\jointppol$,  written $\jointppol \extendedby \jointppol'$, if $\jointppol'$ agrees with $\jointppol$ on all clusters on which $\jointppol$ is defined, formally:
\[\jointppol \extendedby \jointppol' \iff \forall i \in \agents. \forall\cluster_i \in C_i. \ppol_i(\cluster_i) \in \{ \bot, \ppol'_i(\cluster_i) \},\]
where $\ppol_i(\cluster_i) = \bot$ if $\ppol_i(\cluster_i)$ is not defined. The \emph{extensions of $\jointppol$} are the
 fully specified cluster policies extending~$\jointppol$, $ \ext{\jointppol} = 
\big\{ \jointpolicy \in \clusterpolicies \mid \jointppol \extendedby \jointclusterpol  \big\}$. Using this, we can define the small-step search tree:

\begin{definition}\label{def:search-tree}
The \emph{small-step search tree} for a Dec-POMDP is a tree whose nodes are the partial policies, the root node is the empty policy, and the children of a partial policy $\jointppol$ are exactly the partial policies $\jointppol'$ such that (1)~$\jointppol \extendedby \jointppol'$ and (2)~$\jointppol'$ is defined on one additional LOH compared to $\jointppol$.
\end{definition}

Small-step \maastar applies \astar to the small-step search tree. \astar expands nodes in a search tree guided by a heuristic $Q \colon \ppolicies \rightarrow \mathbb{R}$ \cite{DBLP:books/aw/RN2020}. The \astar algorithm keeps a priority queue of open nodes, and in each step, it expands the open node with the highest heuristic value by adding all children of that node to the queue. The algorithm terminates once a leaf is selected as node with the highest heuristic value. The algorithm finds a best clustered policy $\jointclusterpol $ if the heuristic is \emph{admissible}, i.e.\ an upper bound:
\[\textstyle
Q(\jointppol) \;\geq\; \max_{\jointclusterpol \in \ext{\jointppol}} V_{\jointclusterpol}(b, \horizon),\]
and matches the value for fully specified policies $\jointclusterpol$, i.e.\ $Q(\jointclusterpol) = V_{\jointclusterpol}(b, \horizon)$ for fully specified policies $\jointclusterpol$.

The algorithm is exact if in addition the clustering is lossless, i.e.\ $\max_{\jointpolicy \in \jointpolicies} V_{\jointpolicy}(b, \horizon) = \max_{\jointpolicy \in \clusterpolicies} V_{\jointpolicy}(b, \horizon)$.

 \newcommand{\progress}{\textit{prog}}

\section{Policy-Finding Multi-Agent \astar} \label{sec:loose_heuristics}
\label{sec:pfmaa}

In \maastar, to find the  optimal policy, one must expand all nodes in the queue whose heuristic value exceeds the value of the optimal policy. As a result, heuristics should be as tight as possible to limit the number of nodes expanded. 

With \emph{policy-finding multi-agent \astar} (\ourLB), we aim to find good policies fast. \ourLB applies small-step \maastar with clustered sliding window memory. To ensure timely termination,  we limit the number of policies expanded by pruning the priority queue. 
Furthermore, it is not essential that the \astar-heuristics for policy finding are tight. 
In this context, a useful heuristic is a heuristic that overestimates the value of good policies more than the value of bad policies, thereby steering the algorithm towards the good policies. To allow covering a larger fragment of the search space within a given time limit, the heuristics should also be easy to compute.

\paragraph{Finding good policies using \astar.}
In \ourLB, we prune the priority queue to find a good policy faster. In particular, we limit the number of policies expanded to $\horizon \cdot L$ for some $L$, while guaranteeing that we find a fully specified policy.
To do this, we define a progress measure $\progress$, and prune policies with a low progress.  That is, let $N$ denote the number of policies already expanded. We expand partial policy $\jointppol$ if $\progress(\jointppol) \geq N$ and prune it otherwise.  

Our progress measure $\progress$ satisfies two design goals. 
First, it limits the number of policies expanded up to \emph{each} level in the search tree, to ensure that the queue always contains a policy which is not pruned. This implies that we find a fully specified policy.
Second, it should prune the least promising partial policies. For this, note that admissible heuristics are more optimistic for partial policies for which fewer actions (or actions for clusters with lower probability) have been specified. Hence, a deep policy with some heuristic value is more likely to have a good policy as descendant than a shallow policy with the same heuristic value. Hence, $\progress$ should increase with the number of actions specified and the probability of the clusters for which an action is specified.

These design goals lead to the following progress measure.  Consider that  $\jointppol$ has specified an action for all clusters of length $\stage{\jointppol}-1$, for $i$ out of $n$ agents, and for $c$ out of $|C_{i+1,\stage{\jointppol}}|$ clusters of agent $i+1$. Let $p$ be the probability that the LOH of agent $i+1$ is in one of the first $c$ clusters. Then we define the \emph{progress} of $\jointppol$ as
\[
\progress(\jointppol) = \stage{\jointppol} \cdot L + i \cdot \tfrac{L}{n} + c + p \cdot\left(\tfrac{L}{n} - \left|C_{i+1,\stage{\jointppol}}\right| \right).
\]
We assume that  $L \geq n |C_{i, t}|$ for all $t < \horizon$ and all $i \in \agents$. In App.~\ref{appC}, we show that this progress measure indeed ensures that \ourLB finds a fully specified policy within $\horizon \cdot L$ policy expansions, and give further intuition.

\paragraph{Maximum reward heuristic.} We use the following simple \emph{maximum reward heuristic} $Q_{\textsf{maxr},r}(\jointppol)$, where $r > \stage{\jointppol}$. This heuristic computes a Dec-POMDP heuristic for $\jointppol$ with horizon $r$, and upper bounds the reward over the remaining $\horizon-r$ stages by the maximum reward (over all state-action pairs) per stage. 
Although this heuristic is clearly not tight in general, it is still useful when finding lower bounds: search guided by this heuristic is essentially a local search, choosing policies yielding good reward over the next $r - \stage{\jointppol}$ stages.

\paragraph{Terminal reward MDP heuristic.} To take into account the effect of actions on later stages to some extent, we can also use the \emph{terminal reward MDP heuristic}  $Q_{\textsf{MDP},r}(\jointppol)$, where $r > \stage{\jointppol}$. This is a Dec-POMDP heuristic for $\jointppol$ with horizon $r$ for a Dec-POMDP with terminal rewards, where these terminal rewards represent the MDP value for the remaining $\horizon-r$ stages. Formally, if $Q_{\textsf{MDP}}(s, \horizon')$ is the optimal value of the corresponding MDP with initial state $s$ and horizon $\horizon'$, then the terminal reward corresponding to a joint belief $b$ computed from the joint observation history at stage $r$ is
\begin{equation} \label{eq:terminalMDP}
\textstyle
\sum_{s \in \states} b(s) \cdot Q_{\textsf{MDP}}(s, \horizon-r).
\end{equation}
We then take the weighted average over all joint beliefs to compute the terminal reward.

\paragraph{Horizon reduction.} Heuristic values are computed recursively, similarly to the small-step \maastar implementation \rsmaa. To avoid having to compute heuristics for large horizons, \ourLB first applies a \emph{horizon reduction}, reducing the computation of a heuristic for a horizon $\horizon'$ Dec-POMDP to a computation for horizon $r$ Dec-POMDP with terminal rewards representing the remaining  $\horizon'-r$ stages. For \ourLB, this terminal reward is an MDP value or just $\horizon'-r$ times the maximal reward, as explained above.

\section{Terminal Reward Multi-Agent \astar} \label{sec:terminal_heuristic}

\newcommand{\statetrace}{\bm{s}}
\newcommand{\statetracealt}{\widetilde{\bm{s}}}
\newcommand{\constraint}{\mathcal{I}}
\newcommand{\constraintrelax}{\constraint'}

Among the heuristics used so far in the literature, only the MDP heuristic~\cite{DBLP:conf/icml/LittmanCK95} can be computed effectively for large horizons. This heuristic reveals the state to the agents in each step, and thereby overapproximates the value of a belief node drastically. 
The POMDP heuristic \cite{szer2005maa,DBLP:conf/atal/RothSV05} is tighter as it assumes that each agent receives the full joint observation, but the state information is not revealed.
The recursive heuristics considered by \citet{DBLP:conf/ijcai/Koops0JS23} reveal the joint observation only once during planning.
Both heuristics are too expensive to compute for large horizons on a variety of benchmarks (as remarked by \citet{oliehoek2013incremental} for the POMDP heuristic). 

We propose a computationally more tractable alternative that periodically reveals the state. This leads to a new family of admissible heuristics, which we call \emph{terminal reward heuristics}, which are empirically tighter than the POMDP heuristic, but computationally cheaper. Applying small-step \maastar with lossless clustering and this heuristic yields \emph{terminal reward multi-agent \astar} (\ourUB). With \ourUB, we aim to find a tight upper bound for the value of an optimal policy, which is also scalable.

\paragraph{Generalized policies.} To define the heuristics, we introduce a more general type of policy. Formally, a \emph{generalized local policy} $\policy_i \colon (\jointobservations \times \states)^{\leq h-1} \rightarrow \actions_i$ maps histories of states and joint observations to a local action. 
A \emph{generalized (joint) policy} is a tuple $\tuple{\pi_1, \ldots, \pi_n}$ of generalized local policies\footnote{Including past observations states seems unnecessary for policies that have access to the current state, however, we will add constraints that the policy cannot depend on the most recent state.}. Let $\genpolicies$ denote the set of all generalized policies. 
As before, a generalized policy $\jointpolicy$ \emph{extends} a partial policy $\jointppol$, denoted by $\jointppol \extendedby \jointpolicy $, if $\policy_i$ ignores the state information and agrees with $\ppol_i$ on all OHs corresponding to an LOH for which $\ppol_i$ specifies the action. Formally, $\jointppol \extendedby \jointpolicy$ iff
$\forall i \in \agents. \forall (\jointobstrace,\statetrace) \in (\jointobservations\times\states)^{\leq h-1}.~  \ppol_i(\obstrace_i) \in \{ \bot, \policy_i(\jointobstrace, \statetrace) \}$. 
Let $\extgen{\jointppol}  = 
\big\{ \jointpolicy \in \genpolicies \mid \jointppol \extendedby \jointpolicy  \big\}$ be the set of generalized extensions of a partial policy $\jointppol$ agreeing with $\jointppol$.

\paragraph{Terminal reward heuristic.} We write the Dec-POMDP optimization problem over generalized policies as
\begin{equation}
\begin{aligned}
&\max_{\jointpolicy \in \genpolicies}  V_{\jointpolicy}\left(b, \horizon\right) \\ &~ \quad\!\!\text{subject to} \quad \obstrace_i = \obstracealt_i \quad\!\!\text{implies}\quad\!\!  \policy_i\left(\jointobstrace, \statetrace\right) = \policy_i\left(\jointobstracealt, \statetracealt\right)  \\ &~
\quad\text{for all } (i,\jointobstrace, \jointobstracealt, \statetrace, \statetracealt)  \in \mathcal{I}, \text{ with} \\
\constraint  &= \bigcup_{v=0}^{\horizon-1} \Big\{(i,\jointobstrace, \jointobstracealt, \statetrace, \statetracealt) \mid i \in \agents, \;\jointobstrace, \jointobstracealt \in \jointobservations^v, \;\statetrace, \statetracealt \in \states^v\Big\}.\!\!\!\!\!\!\!\!
\end{aligned}
\label{eq:opt_generalized_policies}
\end{equation}
Using $\constraint$, we quantify over all agents and pairs of traces. The condition $\obstrace_i = \obstracealt_i$ selects the pairs that agent~$i$ cannot distinguish. On these pairs, the agent has to take the same action.
To compute a heuristic $Q(\jointppol)$, instead of maximizing over $\genpolicies$ in Eq.~\eqref{eq:opt_generalized_policies}, we  maximize over the generalized extensions $\extgen{\jointppol}$ of $\jointppol$.
Moreover, we relax some of these constraints, i.e.\ we only consider a subset $\constraintrelax \subseteq \constraint$ of the constraints. Denote the corresponding heuristic by $Q_{\constraintrelax}(\jointppol)$. Since relaxing constraints can only increase the maximum, we get:

\begin{theorem}
$Q_{\constraintrelax}$ is admissible for all  $\constraintrelax \subseteq \constraint$. 
\end{theorem}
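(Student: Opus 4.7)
The plan is a two-step monotonicity argument. First, I would verify that the unrelaxed version of the optimization problem from Eq.~\eqref{eq:opt_generalized_policies}, restricted to $\extgen{\jointppol}$, already upper-bounds $V_{\jointclusterpol}(b,\horizon)$ for every cluster policy $\jointclusterpol \in \ext{\jointppol}$. Second, I would invoke the elementary observation that replacing the full constraint set $\constraint$ by a subset $\constraintrelax \subseteq \constraint$ only enlarges the feasible region of a maximization problem, and hence cannot decrease its optimum.

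For the first step, given any $\jointclusterpol \in \ext{\jointppol}$, I construct a canonical generalized policy $\jointpolicy$ by
\[
\policy_i(\jointobstrace, \statetrace) \;=\; \clusterpol_i\bigl(\clusterfun(\obstrace_i)\bigr).
\]
This $\jointpolicy$ discards every state in $\statetrace$ and every observation outside of $\obstrace_i$. Therefore on any pair of traces $(\jointobstrace, \statetrace)$, $(\jointobstracealt, \statetracealt)$ in which agent $i$'s LOH agrees, the two action choices coincide, so every tuple in $\constraint$ is satisfied. One checks directly from the definition of $\extendedby$ for generalized policies that $\jointppol \extendedby \jointpolicy$, using $\jointppol \extendedby \jointclusterpol$ as cluster policies and the fact that $\policy_i$ factors through $\clusterfun$. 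Because $\jointpolicy$ and $\jointclusterpol$ induce identical distributions over state–action trajectories starting from $b$, I then have $V_{\jointpolicy}(b,\horizon) = V_{\jointclusterpol}(b,\horizon)$, so the unrelaxed maximum $Q_{\constraint}(\jointppol)$ dominates $\max_{\jointclusterpol \in \ext{\jointppol}} V_{\jointclusterpol}(b,\horizon)$.

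For the second step, any $\jointpolicy \in \extgen{\jointppol}$ that satisfies all constraints in $\constraint$ trivially satisfies the weaker set $\constraintrelax$. Hence the feasible set for $Q_{\constraintrelax}(\jointppol)$ contains that for $Q_{\constraint}(\jointppol)$, so
\[
Q_{\constraintrelax}(\jointppol) \;\geq\; Q_{\constraint}(\jointppol) \;\geq\; \max_{\jointclusterpol \in \ext{\jointppol}} V_{\jointclusterpol}(b,\horizon),
\]
which is exactly the admissibility inequality.

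The main obstacle is purely bookkeeping around the definitions: one must verify that the embedding $\jointclusterpol \mapsto \jointpolicy$ respects the extension relation $\extendedby$ in its generalized-policy version, and that the constraints in $\constraint$ really are tight enough to force a generalized policy satisfying all of them to act as a LOH-based policy. Both of these are immediate from the setup introduced just before the theorem, so the substantive content is the one-line monotonicity observation in the second step.
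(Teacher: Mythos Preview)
Your proposal is correct and follows essentially the same approach as the paper, which treats the theorem as an immediate consequence of the one-line observation that relaxing constraints in a maximization can only increase the optimum (the sentence preceding the theorem is the paper's entire argument). You simply spell out more explicitly what the paper leaves implicit in its setup of Eq.~\eqref{eq:opt_generalized_policies}, namely that every cluster extension of $\jointppol$ embeds as a feasible point of the fully constrained problem $Q_{\constraint}(\jointppol)$.
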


It remains to explain which constraints to relax to obtain a heuristic which is easier to compute. This is illustrated in Figure \ref{fig:terminal_reward} on the left. We propose to relax, what we call, \emph{late constraints}, i.e., we allow policies to depend on the $r$th state from some stage $r$ onwards.  We can then split the original problem with horizon $\horizon$ as a Dec-POMDP with horizon $r{<}h$ and a terminal reward representing the remaining $\horizon-r$ stages. The terminal reward is a Dec-POMDP value, with horizon $\horizon-r$ and the revealed state as initial belief. This is similar to Eq.\ \eqref{eq:terminalMDP}, but using Dec-POMDP values instead of MDP values.  Since we reveal the state, the agents can use more information to decide which actions they take. This therefore gives an upper bound. Choosing a larger $r$ typically yields a tighter, but more expensive heuristic. 

\begin{figure}[tbp]
    \centering
    \resizebox{.733\columnwidth}{!}{\def\svgwidth{207pt}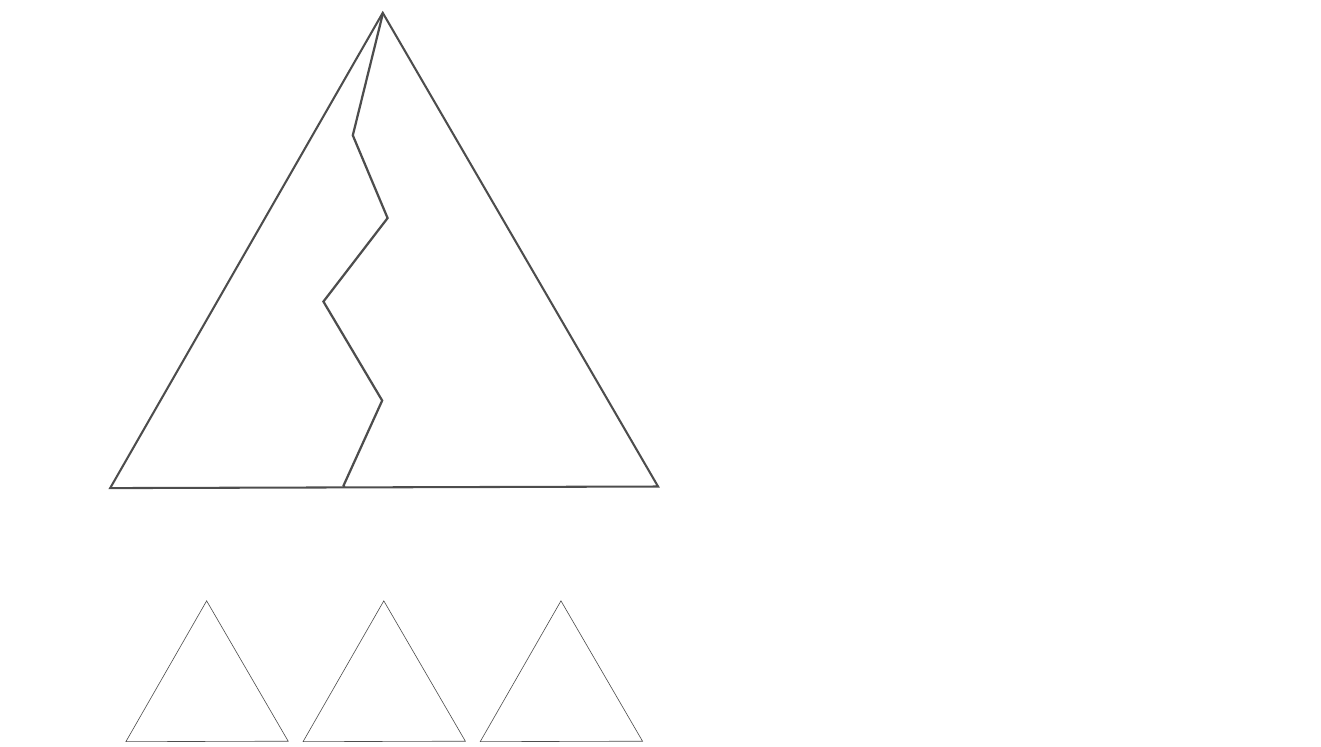}
    \caption{Revealing the state for a joint belief $b$ at stage $r$ over three states $s_1$, $s_2$, $s_3$. On the left the state is revealed at stage $r$, on the right it is revealed at stage $r+1$. In the latter case, the three policies are forced to take same action in stage $0$. }
    \label{fig:terminal_reward}
\end{figure}

\paragraph{Revealing the state at stage $r+1$.}  We now explain how to compute a tighter variant of this heuristic, where we allow policies to depend on the $r$th joint belief from stage $r$ onwards and on the $r$th state from stage $r+1$ onwards. This is illustrated in Figure~\ref{fig:terminal_reward} on the right. We consider this problem as a Dec-POMDP with horizon $r$ and a terminal reward depending on the joint belief. Since we reveal the joint belief $b$ in stage $r$, we can compute each of these terminal rewards separately. We compute for each state $s$ and each joint action $\jointact$, a heuristic $Q(s, \jointact, \horizon-r)$ for the Dec-POMDP with horizon $\horizon-r$ where the initial state is $s$ and the agents take action $\jointact$ in stage $0$. To model that the state information cannot be used in stage $0$, we now compute 
\begin{equation}\textstyle\max_{\jointact \in \jointactions}~ \sum_{s \in \states}\; b(s)\cdot Q(s, \jointact, \horizon-r) \label{eq:terminalDecPOMDP}
\end{equation} as a heuristic for the terminal reward corresponding to the joint belief $b$ revealed in stage $r$. Taking the maximum over the actions outside of the weighted sum in Eq. \eqref{eq:terminalDecPOMDP}  ensures that the same action is taken in stage $0$, i.e.\ that the state information is not used in stage $0$ (which is stage $r$ of the original Dec-POMDP).
In practice, since computing $Q(s, \jointact, \horizon-r)$ is relatively expensive, we first compute an MDP heuristic for the Dec-POMDP with horizon $\horizon-r$, and only compute  $Q(s, \jointact, \horizon-r)$ for the actions $\jointact \in \jointactions$ which based on the MDP values can possibly attain the maximum in Eq. \eqref{eq:terminalDecPOMDP}.

\paragraph{Horizon reduction.} \ourUB also applies a horizon reduction as explained in Sec.\ \ref{sec:loose_heuristics}. For \ourUB, the terminal reward is a heuristic computed as explained in the previous paragraph. If $\horizon > r$, we also apply the horizon reduction to compute a heuristic for the root node.

\section{Empirical Evaluation}\label{sec:empirical}

 This section provides an empirical evaluation of \ourLB (Sec.~\ref{sec:pfmaa}) and \ourUB (Sec.~\ref{sec:terminal_heuristic}). As baseline, we provide reference values from the state-of-the-art $\epsilon$-optimal solver \textsc{FB-HSVI} \cite{DibangoyeABC14,DBLP:journals/jair/DibangoyeABC16} (for $\epsilon=0.01$) and the state-of-the-art approximate solver, which is the genetic algorithm \textsc{GA-FSC} \cite{DBLP:journals/aamas/EkerA13}\footnote{The implementations for these solvers are not available, we copy the achieved results from the respective papers.}, and upper bounds found by \rsmaa \cite{DBLP:conf/ijcai/Koops0JS23}.
For \rsmaa, we give the best results among the heuristics $Q_{1, \infty}$, $Q_{25, \infty}$, $Q_{200, \infty}$ and $Q_{200, 3}$. 

\paragraph{Implementation.}
We base our implementation on the data structures and optimizations of \rsmaa \cite{DBLP:conf/ijcai/Koops0JS23}\footnote{See \url{https://zenodo.org/records/11160648}.}. Notably, we handle the last agent's last stage efficiently and abort heuristic computations after $M=200$ steps. 

\paragraph{Setup.} All experiments ran on a system with an Apple M1 Ultra using the PyPy environment. We ran \ourLB for six configurations: three configurations aimed at finding policies fast and limited to 60 seconds of CPU time each, and three configurations aimed at finding high-quality policies limited to 600 seconds. We ran \ourUB with a limit of 120 seconds and a limit of 1800 seconds, respectively. We used a global 16GB  memory limit. We validated \ourLB values reported in this paper with a separate code base. We gave both \rsmaa and \ourUB the values found by \ourLB.

\paragraph{Hyperparameter selection.} 

\begin{table}\centering
\resizebox{\columnwidth}{!}{
\begin{tabular}{@{}ccc||ccc||ccc@{}} \hline
\multicolumn{3}{c||}{comparison} & \multicolumn{3}{c||}{fast results} & \multicolumn{3}{c}{high-quality results} \\
heur. & $k$ & $L$ & heur. & $k$ & $L$ & heur. & $k$ & $L$ \\ \hline
$Q_{\textsf{MDP}}$ & 2 & 1000 & $Q_{\textsf{MDP},1}$ & 1 & 20 & $Q_{\textsf{MDP},2}$ & 2 & 1000 \\
$Q_{\textsf{MDP}}$ &  3 & 10\,000 &$Q_{\textsf{MDP},1}$ & 2 & 100 &  $Q_{\textsf{MDP},2}$ & 3 & 10\,000 \\ & &  & $Q_{\textsf{MDP},1}$ & 3 & 100 & $Q_{\textsf{maxr},3}$ & 3 & 10\,000 \\ \hline
\end{tabular}
}
\caption{Hyperparameters for \ourLB.}\label{tab:hyper}
\end{table}

For \ourLB, the main hyperparameters are the window size $\windowsize$, which of the heuristics $Q_{\mathsf{maxr}, r}$ or $Q_{\mathsf{MDP}, r}$ to use, the depth $r$ of the heuristic, and the iteration limit $L$ per stage. We report the configurations that we used in Table~\ref{tab:hyper}. Setting $r \geq 4$ or $k \geq 4$ is not feasible for all benchmarks. For \ourUB, we use the heuristic from Section \ref{sec:terminal_heuristic} with $r=3$ and $r=5$ respectively.  In each case, the $Q_3$ heuristic is used to solve Dec-POMDPs.

\paragraph{Benchmarks.}

 We used the standard benchmarks from the literature:
 \dectiger{}~\cite{nair2003taming}, \ff{}~\cite{oliehoek2008optimal} (3 fire levels, 3 houses), \grid{} with two observations~\cite{amato2006optimal}, \boxpush~\cite{DBLP:conf/uai/SeukenZ07}, \gridthree~\cite{amato2009incremental}, \mars~\cite{amato2009achieving}, \hotel~\cite{spaan2008interaction}, \recycling~\cite{amato2007optimizing}, and \broadcast~\cite{hansen2004dynamic}.

\newcommand{\highlightt}[1]{\textcolor{green!60!black}{\textbf{#1}}}

\begin{table}[t]
\resizebox{\columnwidth}{!}{
\begin{tabular}{@{}r|rr|r|rr|rr@{}} \hline
 $h$ & FB-HSVI & GA-FSC & \multicolumn{3}{|c|}{\ourLB} & random & upper \\ \hline
    &          &        & $Q_{\mathsf{MDP}}$ & fast & quality & & \\ \hline
\multicolumn{8}{c}{ \dectiger } \\ \hline
50 & 80.7 &  & 79.9 & 79.1 & \highlightt{81.0} &  \textminus 2311.1 & 101.3 \\ 
100 & \highlightt{170.9} & 169.3 & 169.3 & 169.3 & \highlightt{170.9} &  \textminus 4622.2 & 206.4 \\ \hline
\multicolumn{8}{c}{ \grid } \\ \hline
20 &  &  & 14.23 & 14.36 & \highlightt{14.69} & 4.67 & 17.13 \\ 
50 &  & \highlightt{40.49} & 36.86 & 37.46 & MO & 12.17 & 47.21 \\ \hline
\multicolumn{8}{c}{ \boxpush } \\ \hline
20 & 458.1 & 468.1 & 402.5 & 466.3 & \highlightt{475.0} &  \textminus 20.5 & 476.4 \\ 
50 & 1134.7 & 1201.0 & 949.8 & 1207.7 & \highlightt{1209.8} &  \textminus 57.9 & 1218.4 \\ 
100 &  & 2420.3 & 1864.5 & 2431.4 & \highlightt{2433.5} &  \textminus 120.5 & 2453.4 \\ \hline
\multicolumn{8}{c}{ \mars } \\ \hline
50 & \highlightt{128.9} &  & 116.7 & 117.0 & 122.6 &  \textminus 62.0 & 132.8 \\ 
100 & \highlightt{249.9} &  & 221.7 & 222.0 & 234.1 &  \textminus 122.7 & 265.7 \\ \hline
\end{tabular}
}
\caption{Lower bounds, i.e.\ the value of the policies obtained with different policy-finding algorithms. MO denotes memout (\textgreater 16GB). A full version is available in App.~\ref{appE}.}  \label{tab:resultsLB}
\end{table}

\subsubsection{Results}
Before we discuss the results, we present an overview of the most interesting data in 
Tables~\ref{tab:resultsLB} and~\ref{tab:resultsUB}, giving results for the lower and upper bound, respectively. 
Further data is given in App.~\ref{appE}. Each table presents results for various benchmarks and different horizons $h$. In Table~\ref{tab:resultsLB}, we first give the results for the baselines FB-HSVI and GA-FSC. 
The next three columns give results for \ourLB, using either the $Q_{\textsf{MDP}}$ heuristic (as comparison),  all fast configurations, or all  high-quality configurations.
For reference, we also give the value for the random policy (i.e.\ the policy that uniformly randomizes over all actions) and the best-known \emph{upper bound}\footnote{We took these bounds from either the literature or our methods, see App.~\ref{appE} for details.}.  
In Table~\ref{tab:resultsUB}, we give FB-HSVI's result (if $\epsilon$-optimal) and results for \rsmaa and \ourUB for time limits of 120s and 1800s, respectively. Finally, we give the best-known \emph{lower bound} (from a known policy) and the MDP value, which is a trivial upper bound.

\paragraph{State-of-the-art policies.} Using \ourLB, we compute policies comparable with or better than the state-of-the-art for most benchmarks.  In particular, for \boxpush{}, we provide the best known policies for $h \geq 10$. Although the improvement over GA-FSC in absolute sense is not large, the upper bound shows that our improvement is a significant step towards optimality.  On \dectiger{}, we find better solutions (with $k=3$) than FB-HSVI and GA-FSC. It is known that with $k=4$, better policies exist for e.g.\ $h=50$. However, the policy space with $k=4$ is so large that \ourLB does not find the (almost) optimal policies in that space.

\paragraph{Novel heuristics outperform $Q_{\mathsf{MDP}}$.} Our algorithm \ourLB consistently finds better policies using the novel heuristics $Q_{\mathsf{maxr}, r}$ and $Q_{\mathsf{MDP}, r}$ from Sec.\ \ref{sec:loose_heuristics} than using $Q_{\mathsf{MDP}}$ (for the same number of iterations $L$; see Table~\ref{tab:hyper}). We note that $Q_{\mathsf{MDP}}$ is the only heuristic in the literature that scales up to high horizons on all benchmarks.

\paragraph{Fast versus high-quality configurations.} Even the fast configuration typically yields policies that are reasonably close to the best-known policy. Nevertheless, the high-quality configuration is always able to find (slightly) better policies.

\paragraph{Challenges for \ourLB.} 
 On \grid{}, the best policies for large horizons require actions that are sub-optimal in the short run, and are hence hard to find using \ourLB. 
On \mars{}, FB-HSVI yields better results than \ourLB. We conjecture that this is due to the larger search space for \mars{}. To reduce the search space, we used probability-based clustering with $p_{\max} = 0.2$; this improves the values to 125.84 and 243.97 for $h=50$ and $h=100$ respectively.

\begin{table}[t]
\resizebox{\columnwidth}{!}{%
\begin{tabular}{@{}r|r|rr|rr|rr@{}} \hline
$h$ & FB-HSVI & \multicolumn{2}{|c|}{\rsmaa} & \multicolumn{2}{|c|}{\ourUB} & lower & MDP \\ \hline
    &          &  fast & quality & fast & quality & & \\ \hline
\multicolumn{8}{c}{ \grid } \\ \hline
7 & 4.48 & TO & MO & \highlightt{4.47} & 4.49 & 4.47 & 5.81 \\  
20 &  & TO & MO & \highlightt{17.13} & MO & 14.69 & 18.81 \\ 
50 &  & TO & MO & \highlightt{47.21} & MO & 40.49 & 48.81 \\ \hline
\multicolumn{8}{c}{ \boxpush } \\ \hline
20 &  & TO & MO & 481.2 & \highlightt{476.4} & 475.0 & 511.1 \\ 
50 &  & TO & MO & 1227.4 & \highlightt{1218.4} & 1209.8 & 1306.2 \\ 
100 &  & TO & MO & 2469.7 & \highlightt{2453.4} & 2433.5 & 2628.1 \\ \hline
\multicolumn{8}{c}{ \mars } \\ \hline 
50 &  & \highlightt{132.8} & \highlightt{132.8} & 136.5 & 136.9 & 128.9 & 145.0 \\ 
100 &  & 287.5 & \highlightt{265.7} & 273.4 & 276.5 & 249.9 & 289.0 \\ \hline
\end{tabular}
}
\caption{Upper bounds on the value of optimal policies, obtained with algorithms that find such bounds. TO and MO denote timeout (\textgreater 120s for the fast configuration) and memout (\textgreater 16GB). A full version is available in App.~\ref{appE}.}  \label{tab:resultsUB}
\end{table}

\paragraph{A scalable upper bound.} On all benchmarks, we provide an upper bound for horizons up to 100 which is better than $Q_{\textsf{MDP}}$. This is an improvement over \rsmaa, which only reaches horizon 10 on \boxpush{} and horizon 6 on \grid{}. In fact, for  larger horizons, \rsmaa{} cannot even compute  the cheapest recursive heuristic, $Q_{1, \infty}$, within the memory limit.  
On \boxpush{}, the bound is also tight. It is at least 9 times closer to the optimum than the MDP value. 

\paragraph{Revealing the state is too optimistic.} When \rsmaa can compute an upper bound, \ourUB generally gives worse upper bounds than \rsmaa with a cheap heuristic.

\paragraph{Timings.} \ourLB is generally faster than required by the time limit. For instance, \boxpush{} $h=100$ takes 9 and 401 seconds combined, respectively. For $r=5$, \ourUB often reaches its memory limit within 300 seconds.

\section{Conclusion}

We presented novel methods for finding
good policies for Dec-POMDPs and for finding upper bounds on the optimal value. Together, this allows us to find policies with values that are at least 99\% of the optimal value, on horizons an order of magnitude higher than for which exact solving is possible, as well as giving upper bounds for high horizons the first time. The main advancements leading to this result are clustered sliding window memory, pruning the priority queue and several new \astar-heuristics. Further research includes investigating different methods for clustering observation histories.

\newpage

\section*{Acknowledgements}

We would like to thank the anonymous reviewers for their useful comments.
This work has been partially funded by the ERC Starting Grant DEUCE (101077178), the NWO Veni grant ProMiSe (222.147), and the NWO grant PrimaVera (NWA.1160.18.238).

\section*{Contribution Statement}
Wietze Koops is the primary designer of the algorithm and implemented it and evaluated the algorithms. The other authors contributed discussions, ideas, and shaped the description of the algorithm.

\bibliographystyle{named}
\bibliography{references}

\setcounter{lemma}{0}
\setcounter{theorem}{0}
\renewcommand{\baselinestretch}{1.14}
\renewcommand{\arraystretch}{0.9}
\selectfont

\newcommand{\jointmultiagentbelief}{\bm{\gamma}}

\appendix

\noindent {\huge \textbf{Appendix}} \medskip

\noindent This appendix includes:
\begin{itemize}
\item Appendix \ref{appA}: Proofs for the lemmas and claims from the section on clustering (Sec.~\ref{sec:clustering}), and an empirical evaluation of clustered sliding window memory.
\item Appendix \ref{appB}: Pseudo-code of \ourLB (Sec.~\ref{sec:clustering}).
\item Appendix \ref{appC}: Additional explanation on queue pruning used in \ourLB (Sec.~\ref{sec:loose_heuristics}).
\item Appendix \ref{appD}: Additional explanation on the concept of horizon reduction (Sec.~\ref{sec:loose_heuristics} and Sec.~\ref{sec:terminal_heuristic}). 
\item Appendix \ref{appE}: Additional information on the Empirical Evaluation (Sec.~\ref{sec:empirical}), including information on hyperparameter selection, further discussion of the results and further tables with results. 
\end{itemize}

\section{Clustering}\label{appA}

In this section we provide proofs of the lemmas and claims from the section on clustering (Sec.~\ref{sec:clustering}), as well as an empirical evaluation of clustered sliding window memory.

\subsection{Proofs}

\subsubsection{Proof of Lemma \ref{lossless}}

To prove Lemma \ref{lossless}, we first introduce the \emph{multi-agent belief state} and the \emph{multi-agent belief} \cite{hansen2004dynamic}. 

\begin{definition}
A \emph{multi-agent belief state} for agent $i$ and stage $t$ is a pair $(s, \jointmultiagentbelief_{\neq i}^{h-t})$, where $s$ is a state and \[\jointmultiagentbelief_{\neq i}^{h-t} = \tuple{\gamma_{1}^{h-t}, \ldots, \gamma_{i-1}^{h-t}, \gamma_{i+1}^{h-t}, \ldots, \gamma_{n}^{h-t}}\] represents a policy for all agents except agent $i$ from stage $t$ onwards, where $\gamma_{j}^{h-t} \colon \observations_j^{\leq \horizon - t} \rightarrow \actions_j$ describes how agent $j$ acts given the observations they receive from stage $t$ onwards.

A \emph{multi-agent belief}  for agent $i$ and stage $t$ is a distribution over multi-agent belief states  for agent $i$ and stage $t$.
\end{definition}

Lemma \ref{lossless} in essence states that two suffixes $\obs_i^{[\ell_t,t]}$, $\obsalt_i^{[\ell_t,t]}$ can be clustered losslessly if they are belief-equivalent.
To prove Lemma \ref{lossless}, we use a result of \citet{hansen2004dynamic}, which states that we can losslessly cluster two nodes (in our context, suffixes) if the multi-agent beliefs are equal. To use this result, we first show that the condition in belief-equivalence (Def.~\ref{def:belief-equivalence}) for $v = \ell_t$, i.e.\ that 
\begin{equation}\label{eq:belief}
\begin{split} 
\Pr\Big(s^t, \forget(\jointcluster_{\neq i}^{t-1}, &\jointobs_{\neq i}^{t}) ~\big|~  \obs_i^{[\ell_t,t]}\Big) \\ &= \Pr\Big(s^t, \forget(\jointcluster_{\neq i}^{t-1}, \jointobs_{\neq i}^{t})  ~\big|~  \obsalt_i^{[\ell_t,t]}\Big), 
\end{split}
\end{equation}
implies that the multi-agent belief is the same for the two suffixes $\obs_i^{[\ell_t,t]}$ and $\obsalt_i^{[\ell_t,t]}$. We prove this in two steps.

\begin{claim} \label{cluster_claim1}
Let $C$ be a clustering which is incremental and coarser than sliding $k$-window memory clustering. Let $\jointcluster_{\neq i}^{v}$ denote the tuple of clusters of the agents except agent $i$ in stage $v$.
If two suffixes of agent $i$ induce the same distribution over pairs $(s, \forget(\jointcluster_{\neq i}^{t-1}, \jointobs_{\neq i}^{t}))$, then they induce the same distribution over the pairs $(s, \jointcluster_{\neq i}^t)$.
\end{claim}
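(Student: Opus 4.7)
The plan is to produce a deterministic map $f$ that sends the output of $\forget$ at stage $t$ to the $C$-cluster tuple $\jointcluster_{\neq i}^{t}$. Once such an $f$ is available, the claim follows by a pushforward argument: the two suffixes, by assumption, induce the same distribution over $(s, \forget(\jointcluster_{\neq i}^{t-1}, \jointobs_{\neq i}^{t}))$, so applying $(\mathrm{id}, f)$ to each sample yields identical distributions over $(s, \jointcluster_{\neq i}^{t})$, which is exactly what is needed.

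To construct $f$, I would unfold the definition of $\forget$: its output at stage $t$ is the cluster under the \emph{finest} clustering $C'$ of $\observations^{t}$ that is (i) incremental from $C$'s stage-$(t{-}1)$ partition and (ii) coarser than sliding $\windowsize$-window memory clustering at stage $t$. By hypothesis, $C$ itself satisfies both conditions: $C$ is incremental, so in particular its transition from stage $t{-}1$ to $t$ is incremental, and $C$ is coarser than sliding $\windowsize$-window memory clustering. Hence the stage-$t$ clustering of $C$ belongs to the family whose infimum is $C'$, and therefore $C'$ refines $C$ at stage $t$. Each $C'$-cluster is then contained in a unique $C$-cluster at stage $t$, and this inclusion defines $f$. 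Applying $f$ componentwise (one agent at a time) to $\forget(\jointcluster_{\neq i}^{t-1}, \jointobs_{\neq i}^{t})$ recovers exactly $\jointcluster_{\neq i}^{t}$.

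The main subtlety lies in the ``finest such clustering'' in the definition of $\forget$: one must check that the two-condition family is closed under common refinement, so that a unique finest element exists. This is routine: the common refinement of two clusterings incremental from the same stage-$(t{-}1)$ partition is again incremental (intersecting blocks preserves the defining implication), and the common refinement of two clusterings coarser than sliding $\windowsize$-window memory clustering is still coarser (each block of the refinement is an intersection of unions of sliding-window blocks, hence still a union of sliding-window blocks). With the existence of $C'$ secured and the membership of $C$'s stage-$t$ partition in the family confirmed, the construction of $f$ goes through, and the pushforward argument yields the claim.
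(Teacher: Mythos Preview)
Your proposal is correct and follows essentially the same approach as the paper: both arguments observe that $C$ satisfies the two defining conditions of the finest clustering $C^F$ (what you call $C'$) underlying $\forget$, conclude that $C^F$ refines $C$ at stage $t$, and then push forward (equivalently, aggregate) the distribution along the resulting coarsening map. Your treatment is slightly more explicit about the existence of the finest clustering via closure under common refinement, which the paper handles in a separate auxiliary claim, but the core reasoning is identical.
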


\begin{proof}
By definition, the clusters in $\forget(\jointcluster_{\neq i}^{t-1}, \jointobs_{\neq i}^{t})$ are the clusters in the finest clustering $C^F$ which is incremental at stage $t-1$ and coarser than sliding $k$-window memory clustering. Since $C$ is  incremental at stage $t-1$ and coarser than sliding $k$-window memory clustering, it follows that $C^F$ is finer than $C$, so clusters in $C$ are unions of clusters in $C^F$.

Hence, we can compute the distribution over the pairs $(s, \jointcluster_{\neq i}^t)$ from the distribution over the pairs $(s, \forget(\jointcluster_{\neq i}^{t-1}, \jointobs_{\neq i}^{t}))$ by aggregating over all tuples of clusters in $C^F$ that are clustered to the same cluster in $C$, which implies the claim.
\end{proof}

\begin{claim} \label{cluster_claim2}
Assume that the clustering is incremental. If two suffixes of agent $i$ induce the same distribution over pairs $(s, \jointcluster_{\neq i}^t)$, then they induce the same multi-agent belief.
\end{claim}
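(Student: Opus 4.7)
The plan is to show that, under incremental clustering, the map from a multi-agent cluster tuple $\jointcluster_{\neq i}^t$ to the tuple of continuation strategies $\jointmultiagentbelief_{\neq i}^{h-t}$ is deterministic once each agent's cluster policy $\clusterpol_j$ is fixed. Equal distributions over $(s, \jointcluster_{\neq i}^t)$ then push forward to equal distributions over multi-agent belief states $(s, \jointmultiagentbelief_{\neq i}^{h-t})$, which is precisely the statement that the two suffixes induce the same multi-agent belief.

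First I would unpack what the continuation strategy $\gamma_j^{h-t}$ of agent $j\neq i$ looks like in the clustered setting. Agent $j$ acts according to $\clusterpol_j$, so at any stage $t' \geq t$ the action is determined by the cluster $\cluster_j^{t'}$ of agent $j$'s LOH at stage $t'$. The key observation is that, by incrementality, if $\obs_j^{[1,t]} \equiv_{C_{j,t}} \obsalt_j^{[1,t]}$ then for every extension $\obs_j^{[t+1,t']}$ we have $\obs_j^{[1,t]} \cdot \obs_j^{[t+1,t']} \equiv_{C_{j,t'}} \obsalt_j^{[1,t]} \cdot \obs_j^{[t+1,t']}$; a straightforward induction on $t' - t$ establishes this from the single-stage incrementality property. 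Hence the future cluster trajectory of agent $j$ depends only on $\cluster_j^t$ and on the sequence of local observations received from stage $t+1$ onwards, not on the particular LOH that placed agent $j$ into $\cluster_j^t$.

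Consequently I can define, for each cluster $\cluster_j^t$ of agent $j$, a continuation strategy $\gamma_j^{h-t}[\cluster_j^t]$ by $\gamma_j^{h-t}[\cluster_j^t](\obs_j^{[t+1,t']}) = \clusterpol_j\bigl(C_{j,t'}(\obs_j^{[1,t]}\cdot\obs_j^{[t+1,t']})\bigr)$ for any representative LOH $\obs_j^{[1,t]}$ of $\cluster_j^t$; by the induction above this is well-defined. Assembling these across $j\neq i$ yields a deterministic function $f \colon (s, \jointcluster_{\neq i}^t) \mapsto (s, \jointmultiagentbelief_{\neq i}^{h-t})$.

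Finally, I would apply $f$ pointwise: if two suffixes $\obs_i^{[\ell_t,t]}$ and $\obsalt_i^{[\ell_t,t]}$ of agent $i$ induce the same distribution over $(s, \jointcluster_{\neq i}^t)$, then their pushforward distributions under $f$ are identical, so they induce the same distribution over multi-agent belief states, i.e., the same multi-agent belief. The only delicate step is the induction establishing that incrementality-at-a-single-stage propagates to multi-step extensions and hence makes $\gamma_j^{h-t}[\cluster_j^t]$ independent of the chosen representative; everything else is a bookkeeping application of the law of total probability to push the distribution through the deterministic map $f$.
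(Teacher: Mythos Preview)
Your proposal is correct and follows essentially the same approach as the paper: both arguments fix the agents' (cluster) policies, use incrementality to show that the continuation strategy $\gamma_j^{h-t}$ of each agent $j \neq i$ is determined by its current cluster $\cluster_j^t$ (independent of the representative LOH chosen), and then push the distribution over $(s,\jointcluster_{\neq i}^t)$ forward through this deterministic map to obtain the multi-agent belief. Your write-up is slightly more explicit in spelling out the induction that extends single-stage incrementality to multi-step extensions, which the paper leaves implicit.
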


\begin{proof} Throughout the proof, fix a joint policy $\jointpolicy$.\footnote{At the moment the clustering is computed, the algorithm has not yet specified actions beyond stage $t$. However, the argument holds for any (possible) joint policy $\jointpolicy$, and the agents do have full knowledge of $\jointpolicy$ when actually executing the policy.} We first show that we can compute the multi-agent belief state corresponding to a single pair $(s, \jointcluster_{\neq i}^t)$. We do this per agent $j \neq i$. Let $\cluster_j^t$ be the current cluster of agent $j$ and let $o_j^{[1,t]} \in \cluster_j^t$ be any observation history in that cluster. Then $\gamma_{j}^{h-t} \colon \observations_j^{\leq \horizon - t} \rightarrow \actions_j$ satisfies $\gamma_{j}^{h-t}(o_j^{[t+1,v]}) = \policy_j(o_j^{[1,v]})$. 

Since the clustering is incremental, the result does not depend on which $o_j^{[1,t]} \in \cluster_j^t$ we take. Hence, we can compute $\gamma_{j}^{h-t}$ if we know $\cluster_j^t$. This implies that we can compute the  multi-agent belief state $(s, \jointmultiagentbelief_{\neq i}^{h-t})$ from $(s, \jointcluster_{\neq i}^t)$.

It follows that if we have a distribution over pairs $(s, \jointcluster_{\neq i}^t)$, then we can compute the corresponding distribution over pairs  $(s, \jointmultiagentbelief_{\neq i}^{h-t})$, i.e.\ the corresponding multi-agent belief.
\end{proof}

\noindent With these claims in hand, we prove Lemma \ref{lossless}.

\begin{lemma}
If a clustering is incremental, coarser than sliding $k$-window memory and finer than belief-equivalence, it is lossless w.r.t.\ sliding $k$-window memory.
\end{lemma}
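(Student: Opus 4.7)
The plan is to chain together Claim \ref{cluster_claim1} and Claim \ref{cluster_claim2} of this appendix with the dynamic-programming (DP) identification result of \citet{hansen2004dynamic} cited immediately before the lemma. Fix a clustering $C$ satisfying the three hypotheses. My strategy is to show that any two LOH-suffixes placed together in the same $C_{i,t}$-cluster induce identical multi-agent beliefs at stage $t$; the DP argument of Hansen et al.\ then guarantees that merging such nodes cannot degrade the optimal expected value, and applying this identification cluster by cluster yields the equality of maxima that defines losslessness.

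First I would take two arbitrary suffixes $\obs_i^{[\ell_t,t]}, \obsalt_i^{[\ell_t,t]}$ placed in the same $C_{i,t}$-cluster. Because $C$ is finer than belief-equivalence, the two suffixes satisfy Definition \ref{def:belief-equivalence}; specialising it to $v = \ell_t$ gives
\[\Pr\!\bigl(s^t, \forget(\jointcluster_{\neq i}^{t-1}, \jointobs_{\neq i}^{t}) \,\big|\, \obs_i^{[\ell_t,t]}\bigr) \;=\; \Pr\!\bigl(s^t, \forget(\jointcluster_{\neq i}^{t-1}, \jointobs_{\neq i}^{t}) \,\big|\, \obsalt_i^{[\ell_t,t]}\bigr).\]
Next I would invoke Claim \ref{cluster_claim1}: since $C$ is incremental and coarser than the sliding $k$-window memory clustering, the $\forget$-clusters refine those of $C$, so aggregating over pre-images lifts the above equality to the joint distribution over $(s^t, \jointcluster_{\neq i}^{t})$. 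Claim \ref{cluster_claim2}, which needs only incrementality of $C$, then converts this into equality of the multi-agent beliefs at stage $t$, because given any fixed downstream joint policy the continuation $\jointmultiagentbelief_{\neq i}^{\horizon-t}$ is a deterministic function of $\jointcluster_{\neq i}^{t}$ once $C$ is incremental.

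With the multi-agent-belief equality established at every stage and for every $C$-cluster, the DP result of \citet{hansen2004dynamic} asserts that two such policy-tree nodes can be merged without affecting the optimal expected value. Iterating this identification across all agents, stages and clusters transforms the best policy consistent with sliding $k$-window memory into a cluster policy consistent with $C$ of equal value, which is exactly the defining equality of losslessness with respect to sliding $k$-window memory.

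I expect the main obstacle to be precisely this last step: interfacing the abstract DP identification lemma of Hansen et al.\ (originally phrased for bottom-up DP on policy trees) with the top-down clustered \maastar formulation used here. Claims 1 and 2 already isolate the local equality needed to invoke it, so the step is more bookkeeping than conceptually deep, but care is required because the multi-agent belief implicitly depends on the (unspecified) downstream joint policy, and the identifications must be applied in a consistent order---for instance from the final stage backwards---so that earlier equalities are not invalidated by later mergings.
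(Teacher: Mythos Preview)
Your proposal is correct and mirrors the paper's own proof essentially step for step: pick two suffixes in a common $C_{i,t}$-cluster, use finer-than-belief-equivalence to obtain the $v=\ell_t$ equality, apply Claim~\ref{cluster_claim1} then Claim~\ref{cluster_claim2} to upgrade this to equality of multi-agent beliefs, and conclude via the result of \citet{hansen2004dynamic}. The paper does not spell out the iteration/order-of-merging bookkeeping you flag as the main obstacle either; it simply invokes the Hansen et~al.\ identification as a black box, so your caution there goes slightly beyond what the paper provides.
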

\begin{proof}
Let $C$ be a clustering satisfying the conditions of the lemma. 
Fix an agent $i$, a stage $t$, and a cluster $c_i^t \in C_{i,t}$. 

Let $\obs_i^{[\ell_t,t]},  \obsalt_i^{[\ell_t,t]} \in c_i^t$ be given. 
Since $C$ is finer than belief-equivalence, the fact that $\obs_i^{[\ell_t,t]}$  and $\obsalt_i^{[\ell_t,t]}$ are in the same cluster implies that $\obs_i^{[\ell_t,t]} \sim_b \obsalt_i^{[\ell_t,t]}$. By the definition of belief-equivalence, this implies that  $\obs_i^{[\ell_t,t]}$ and $ \obsalt_i^{[\ell_t,t]}$  induce the same distribution over pairs $(s, \forget(\jointcluster_{\neq i}^{t-1}, \jointobs_{\neq i}^{t}))$. 

By Claim \ref{cluster_claim1} and \ref{cluster_claim2}, it follows that $\obs_i^{[\ell_t,t]}$ and $ \obsalt_i^{[\ell_t,t]}$  induce the same multi-agent beliefs. By the result of \citet{hansen2004dynamic} that two suffixes can be clustered losslessly if the multi-agent beliefs are equal, this implies that $\obs_i^{[\ell_t,t]}$ and $\obsalt_i^{[\ell_t,t]}$ can be clustered together losslessly.
\end{proof}

\subsubsection{Example: Only Eq.\ \eqref{eq:belief} is not sufficient for incrementality}

We now give an example that shows that using only Eq.\ \eqref{eq:belief} (i.e., only demanding that the beliefs are equal for $v = \ell_t$ in Def.~\ref{def:belief-equivalence}) is not sufficient to obtain an incremental clustering.

\begin{example}
Consider a Dec-POMDP with three local observations, $X$, $Y$, and $Z$ where LOHs $XY$ and $XZ$ induce the same belief, but $YYY$ and $YZY$ induce different beliefs. We use sliding windows of size $k=2$.  If we cluster $XY$ and $XZ$, then an incremental clustering requires that we also cluster $XYY$ and $XZY$, so the suffixes $YY$ and $ZY$ are clustered after applying sliding window memory (Def.~\ref{def:sliding}).
However, $YY$ and $ZY$ do not yield the same belief, since $YYY$ and $YZY$ do not yield the same belief. \\ 
\end{example}

\subsubsection{Auxiliary Claims}

We proceed by proving some small auxiliary claims. 
We start by proving that \emph{the finest} clustering which is incremental at stage $t-1$ and coarser than sliding $k$-window memory clustering (which defines  $\forget(\jointcluster_{\neq i}^{t-1}, \jointobs_{\neq i}^{t})$)  exists.

\begin{claim} Fix a clustering $C$ up to stage $t-1$. Then there exists a \emph{finest} clustering $C^F$ up to stage $t$ agreeing with $C$ up to stage $t-1$, which is incremental at stage $t-1$ and coarser than sliding $k$-window memory clustering. 
\end{claim}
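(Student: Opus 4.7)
The plan is to construct $C^F$ as the coarsest common refinement of two constraints, expressed as equivalence relations on $\observations_i^t$ for each agent $i$, and then to verify that it satisfies all three required properties.

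First, I would fix an agent $i$ and work on $\observations_i^t$. I would introduce two equivalence relations. Let $\sim_W$ be the equivalence relation whose classes are exactly the clusters of the sliding $k$-window memory clustering at stage $t$, i.e.\ $\obs_i^{[1,t]} \sim_W \obsalt_i^{[1,t]}$ iff their last $\min\{t,k\}$ observations agree. Let $\sim_I$ be defined by $\obs_i^{[1,t]} \sim_I \obsalt_i^{[1,t]}$ iff the prefixes $\obs_i^{[1,t-1]}$ and $\obsalt_i^{[1,t-1]}$ belong to the same cluster of $C_{i,t-1}$ and the last observations coincide. The relation $\sim_I$ encodes exactly the LOHs that the incrementality requirement at stage $t-1$ forces to sit in a common cluster.

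The candidate partition $C^F_{i,t}$ is the one induced by the smallest equivalence relation $\sim$ on $\observations_i^t$ that contains both $\sim_W$ and $\sim_I$, namely the transitive closure of $\sim_W \cup \sim_I$. The next step is to verify the three properties. Coarseness with respect to sliding $k$-window memory is immediate from $\sim_W \subseteq \sim$. Incrementality at stage $t-1$ follows because any two prefixes in a common cluster of $C_{i,t-1}$, extended by the same observation, are identified by $\sim_I$, hence by $\sim$. For the minimality (finestness) property, I would argue: any clustering $C'$ satisfying both conditions induces an equivalence relation on $\observations_i^t$ that contains both $\sim_W$ and $\sim_I$; since $\sim$ is the smallest such equivalence relation, $C'$ is coarser than $C^F$.

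Finally, I would note that this construction is carried out independently for every agent $i$ (and the clustering agrees with $C$ at all stages $\leq t-1$ by construction), so the resulting $C^F$ is a well-defined clustering up to stage $t$.

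The step that requires most care is the minimality argument: one must be explicit that ``coarser than sliding $k$-window memory'' and ``incremental at stage $t-1$'' are both closure conditions on equivalence relations (closed under enlarging an equivalence class), so that their conjunction is witnessed by a unique finest relation, rather than, say, a family of incomparable minima. Once this lattice-theoretic observation is made, the rest is routine verification.
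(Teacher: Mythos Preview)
Your proposal is correct, but takes the lattice-dual route to the paper's. The paper constructs $C^F$ \emph{top-down}: it lets $\mathcal{C}$ be the family of all clusterings agreeing with $C$ up to stage $t-1$ that are incremental at stage $t-1$ and coarser than sliding $k$-window memory, and defines $\equiv_F$ as the intersection of all the induced equivalence relations, i.e.\ $\obs_i^{[1,t]} \equiv_F \obsalt_i^{[1,t]}$ iff $\obs_i^{[1,t]} \equiv_{C'_{i,t}} \obsalt_i^{[1,t]}$ for every $C' \in \mathcal{C}$. Finestness is then immediate by definition, and the work lies in checking that $\equiv_F$ itself is coarser than sliding $k$-window memory and incremental at stage $t-1$. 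You instead build $C^F$ \emph{bottom-up} as the equivalence relation generated by $\sim_W \cup \sim_I$; the two closure properties are then immediate, and the work lies in the minimality argument. Your construction has the advantage of giving an explicit description of the clusters (as connected components of the graph with $\sim_W$- and $\sim_I$-edges), while the paper's avoids having to reason about transitive closures. Both are standard and essentially interchangeable here.
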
 

\begin{proof} Fix an agent $i$ and let $\equiv_{C'_{i,t}}$ denote the equivalence relation corresponding to a partition $C'_{i,t}$. Let  $\equiv_{k}$ denote the equivalence relation corresponding to sliding $k$-window memory clustering. 

Let $\mathcal{C}$ be the collection of all clusterings $C'$ agreeing with $C$ up to stage $t-1$ which are incremental at stage $t-1$ and coarser than sliding $k$-window memory clustering.   Define the relation $\equiv_F$ by \[\obs_i^{[1,t]} \equiv_{F} \obsalt_i^{[1,t]} \iff \forall C' \in \mathcal{C}: \obs_i^{[1,t]} \equiv_{C'_{i,t}} \obsalt_i^{[1,t]}. \]
Then $\equiv_F$ is an equivalence relation inducing a partition $C^F_{i,t}$. 

If $\obs_i^{[1,t]} \equiv_{k} \obsalt_i^{[1,t]}$, then $\obs_i^{[1,t]} \equiv_{C'_{i,t}} \obsalt_i^{[1,t]}$ for all $C' \in \mathcal{C}$ (since each $C'$ is coarser), so also $\obs_i^{[1,t]} \equiv_{F} \obsalt_i^{[1,t]}$, so $C^F_{i,t}$ is coarser than sliding $k$-window memory clustering. 

Now we show incrementality. Let $\obs_i^{[1,t-1]} \equiv_{C_{i, t-1}} \obsalt_i^{[1,t-1]}$ and let $o$ be an additional observation, then \[\forall C' \in \mathcal{C} : \big( \obs_i^{[1,t-1]} \cdot \obs \big) \equiv_{C_{i, t}} \big( \obsalt_i^{[1,t-1]} \cdot \obs \big)\] since each $C'$ is incremental at stage $t-1$. Hence, we have $\big( \obs_i^{[1,t-1]} \cdot \obs \big) \equiv_{F} \big( \obsalt_i^{[1,t-1]} \cdot \obs \big)$, so $C^F$ is also incremental at stage $t-1$, as required.

Finally, we note that from the definition of  $\equiv_F$ it directly follows that $C^F$ is finer than each $C'$, so $C^F$ is the finest clustering which is incremental at stage $t-1$ and coarser than sliding $k$-window memory clustering.
\end{proof}

\noindent Next, we show that $\sim$ is an equivalence relation. 

\begin{claim}\label{claim:eqrel}
$\sim$ is an equivalence relation. 
\end{claim}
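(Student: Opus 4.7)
My plan is to verify reflexivity, symmetry, and transitivity. Reflexivity is immediate: the largest common suffix of $\obs_i^{[\ell_t,t]}$ with itself is the whole string (so $m = \ell_t$), and the only $\obsaltalt_i^{[\ell_t,t]}$ matching it is $\obs_i^{[\ell_t,t]}$, so $\obs_i^{[\ell_t,t]} \sim_b \obs_i^{[\ell_t,t]}$ holds trivially. For symmetry, note first that $\sim_b$ is itself an equivalence relation, being defined by the equality of two probability distributions. If $\obs_i^{[\ell_t,t]} \sim \obsalt_i^{[\ell_t,t]}$ with largest common suffix starting at $m$, then picking $\obsaltalt = \obsalt$ in the definition of $\sim$ yields $\obs \sim_b \obsalt$. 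For any other $\obsaltalt$ with $\obsaltalt_i^{[m,t]} = \obsalt_i^{[m,t]} = \obs_i^{[m,t]}$, the definition also gives $\obs \sim_b \obsaltalt$, and transitivity of $\sim_b$ then gives $\obsalt \sim_b \obsaltalt$, which is exactly what is required for $\obsalt \sim \obs$.

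The main work is transitivity. My plan is to characterize each $\sim$-equivalence class via a canonical suffix. For each LOH $\obs_i^{[\ell_t,t]}$, let $\sigma^*(\obs)$ denote the shortest suffix of $\obs_i^{[\ell_t,t]}$ such that all LOHs sharing this suffix are pairwise $\sim_b$-equivalent; this exists since the full suffix $\obs_i^{[\ell_t,t]}$ trivially works (only $\obs$ itself shares it). I will then show that $\obs \sim \obsalt$ if and only if $\obsalt$ shares the suffix $\sigma^*(\obs)$ with $\obs$. In the forward direction, by the definition of $\sim$ combined with transitivity of $\sim_b$, all LOHs sharing the largest common suffix $\obs_i^{[m,t]}$ are pairwise $\sim_b$-equivalent, so $\sigma^*(\obs)$ is no longer than $\obs_i^{[m,t]}$ and is therefore automatically a suffix of $\obsalt$ as well. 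In the backward direction, if $\obsalt$ shares $\sigma^*(\obs)$ with $\obs$, then the largest common suffix $\obs_i^{[m,t]}$ is at least as long as $\sigma^*(\obs)$, so the set of LOHs sharing $\obs_i^{[m,t]}$ is contained in the pairwise $\sim_b$-equivalent set of LOHs sharing $\sigma^*(\obs)$, and is therefore itself pairwise $\sim_b$-equivalent, giving $\obs \sim \obsalt$.

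With this characterization in hand, transitivity follows. Suppose $\obs \sim \obsalt$. Then $\sigma^*(\obs)$ is also a suffix of $\obsalt$, and by construction the LOHs sharing it are pairwise $\sim_b$-equivalent, so $\sigma^*(\obsalt)$ is no longer than $\sigma^*(\obs)$; symmetry of $\sim$ gives the converse inequality, so the two canonical suffixes have equal length, and since they are both suffixes of $\obsalt$ of the same length they coincide. Thus $\obs \sim \obsalt$ implies $\sigma^*(\obs) = \sigma^*(\obsalt)$ and that this common string is a shared suffix of $\obs$ and $\obsalt$. Applied twice, $\obs \sim \obsalt$ and $\obsalt \sim \obsaltalt$ give $\sigma^*(\obs) = \sigma^*(\obsalt) = \sigma^*(\obsaltalt)$ together with the fact that $\obsaltalt$ shares this suffix with $\obs$, hence $\obs \sim \obsaltalt$ by the characterization. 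The main obstacle is careful bookkeeping around suffix lengths versus starting indices: a shorter suffix corresponds to a \emph{larger} set of matching LOHs, so the monotonicity of the sets with respect to suffix length must be tracked consistently when verifying set inclusions.
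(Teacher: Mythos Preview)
Your proof is correct. Reflexivity and symmetry match the paper's argument essentially verbatim. For transitivity you take a genuinely different route: you introduce the canonical minimal suffix $\sigma^*(\obs)$ and prove the characterization ``$\obs \sim \obsalt$ iff $\obsalt$ shares the suffix $\sigma^*(\obs)$'', then derive transitivity (and along the way the invariance $\sigma^*(\obs)=\sigma^*(\obsalt)$ for equivalent pairs) from it. The paper instead argues transitivity directly by a case split on the lengths of the two largest common suffixes ($m \ge m'$ versus $m < m'$), using symmetry to reduce the second case to the first; only afterwards, in a separate Claim, does it establish that each $\sim$-class is the set of LOHs sharing some fixed suffix. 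Your approach front-loads that structural fact and gets transitivity as a corollary, which is slightly more conceptual and essentially merges the paper's Claim~\ref{claim:eqrel} and Claim~\ref{claim:suffix} into one argument; the paper's approach is more elementary and avoids introducing the auxiliary object $\sigma^*$. Both are clean; your bookkeeping about suffix length versus set inclusion is handled correctly.
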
 
\begin{proof}
Note that $\sim_b$ is clearly an equivalence relation. 
Throughout the proof, fix an agent $i$ and a stage $t$.

We first show that $\sim$ is reflexive. Let $\obs_i^{[\ell_t,t]}$ be given. We need to show that $\obs_i^{[\ell_t,t]} \sim \obs_i^{[\ell_t,t]}$. Since the largest common suffix of $\obs_i^{[\ell_t,t]}$ and $\obs_i^{[\ell_t,t]}$ is $\obs_i^{[\ell_t,t]}$, we only have to show that $\obs_i^{[\ell_t,t]} \sim_b\obs_i^{[\ell_t,t]}$, which holds since $\sim_b$ is reflexive. 

We now show that $\sim$ is symmetric. Let $\obs_i^{[\ell_t,t]}$ and $ \obsalt_i^{[\ell_t,t]}$ with $\obs_i^{[\ell_t,t]} \sim \obsalt_i^{[\ell_t,t]}$ be given. Let  $\obs_i^{[m,t]} = \obsalt_i^{[m,t]}$  be the largest common suffix of $\obs_i^{[\ell_t,t]} $ and $\obsalt_i^{[\ell_t,t]}$. Hence, $\obs_i^{[\ell_t,t]} \sim_b \obsaltalt_i^{[\ell_t,t]}$ for all $\obsaltalt_i^{[\ell_t,t]}$ such that $\obs_i^{[m,t]} = \obsaltalt_i^{[m,t]}$. In particular, we have $\obs_i^{[\ell_t,t]} \sim_b \obsalt_i^{[\ell_t,t]}$. By symmetry and transitivity of $\sim_b$ this implies $\obsalt_i^{[\ell_t,t]} \sim_b \obs_i^{[\ell_t,t]} \sim_b \obsaltalt_i^{[\ell_t,t]}$, so $\obsalt_i^{[\ell_t,t]} \sim_b \obsaltalt_i^{[\ell_t,t]}$. Since $\obs_i^{[m,t]} = \obsalt_i^{[m,t]}$  this implies $\obsalt_i^{[\ell_t,t]} \sim_b \obsaltalt_i^{[\ell_t,t]}$ for all $\obsaltalt_i^{[\ell_t,t]}$ with $\obsalt_i^{[m,t]} = \obsaltalt_i^{[m,t]}$. Hence,  $\obsalt_i^{[\ell_t,t]} \sim \obs_i^{[\ell_t,t]}$, showing symmetry. 

Finally, we show transitivity. Let  $\obs_i^{[\ell_t,t]}$ and $ \obsalt_i^{[\ell_t,t]}$ and $ \obsaltaltalt_i^{[\ell_t,t]}$ with $\obs_i^{[\ell_t,t]} \sim \obsalt_i^{[\ell_t,t]}$ and $\obsalt_i^{[\ell_t,t]} \sim \obsaltaltalt_i^{[\ell_t,t]}$ be given. 

Let  $\obs_i^{[m,t]} = \obsalt_i^{[m,t]}$  be the largest common suffix of $\obs_i^{[\ell_t,t]} $ and $\obsalt_i^{[\ell_t,t]}$ and let $\obsalt_i^{[m',t]} = \obsaltaltalt_i^{[m',t]}$ be the largest common suffix of $\obsalt_i^{[\ell_t,t]} $ and $\obsaltaltalt_i^{[\ell_t,t]}$. 
If $m \geq m'$, then $\obs_i^{[m,t]} = \obsalt_i^{[m,t]} = \obsaltaltalt_i^{[m,t]}$. We have $\obs_i^{[\ell_t,t]} \sim_b \obsaltalt_i^{[\ell_t,t]}$ for all $\obsaltalt_i^{[\ell_t,t]}$ such that $\obs_i^{[m,t]} = \obsaltalt_i^{[m,t]}$. Since $\obs_i^{[m,t]} = \obsaltaltalt_i^{[m,t]}$ is a common suffix of $\obs_i^{[\ell_t,t]}$ and $\obsaltaltalt_i^{[\ell_t,t]}$, this shows that $\obs_i^{[\ell_t,t]} \sim \obsaltaltalt_i^{[\ell_t,t]}$. 

If $m < m'$, then we can use the symmetry of $\sim$ to obtain $\obsaltaltalt_i^{[\ell_t,t]} \sim \obsalt_i^{[\ell_t,t]}$ and $\obsalt_i^{[\ell_t,t]} \sim \obs_i^{[\ell_t,t]}$. This reverses $m$ and $m'$, so now we can use the previous case to conclude that $\obsaltaltalt_i^{[\ell_t,t]} \sim \obs_i^{[\ell_t,t]}$. Again using symmetry, we get $\obs_i^{[\ell_t,t]} \sim \obsaltaltalt_i^{[\ell_t,t]}$. 
Hence, we conclude that transitivity holds and hence that $\sim$ is an equivalence relation.
\end{proof}

We also claimed that equivalence classes of $\sim$ are exactly the suffixes with some fixed suffix $\obs_i^{[m,t]}$. 

\begin{claim} \label{claim:suffix}
Let $c_i^t$ be a cluster, i.e.\ an equivalence class of $\sim$. Then there exists a suffix $\obs_i^{[m,t]}$ such that $c_i^t$ contains precisely the suffixes with suffix $\obs_i^{[m,t]}$.
\end{claim}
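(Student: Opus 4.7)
The plan is to fix an arbitrary representative $\obs_i^{[\ell_t,t]} \in c_i^t$ and single out the longest suffix of this representative for which the $\sim_b$-condition of the definition of $\sim$ is still satisfied by every completion to its left; that suffix will then characterize the whole cluster.

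For each $m \in \{\ell_t, \ldots, t+1\}$ (where $m = t+1$ stands for the empty suffix, which is admitted by the definition of $\sim$ as noted after that definition), introduce the predicate
\[
P(m): \quad \forall \obsaltalt_i^{[\ell_t,t]} \text{ with } \obsaltalt_i^{[m,t]} = \obs_i^{[m,t]}, \ \obs_i^{[\ell_t,t]} \sim_b \obsaltalt_i^{[\ell_t,t]}.
\]
The first step is to record two easy facts about $P$. First, $P(\ell_t)$ holds by reflexivity of $\sim_b$, since in that case the only $\obsaltalt$ satisfying the condition is $\obs_i^{[\ell_t,t]}$ itself. Second, $P$ is \emph{downward-monotone}: if $P(m)$ holds then $P(m')$ holds for every $m' \le m$, simply because demanding a longer common suffix can only shrink the set of competitors $\obsaltalt$. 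Together these facts make $m^* := \max\{m : P(m) \text{ holds}\}$ well defined.

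Next I would prove the two inclusions that identify $c_i^t$ with $S_{m^*} := \{\obsalt_i^{[\ell_t,t]} : \obsalt_i^{[m^*,t]} = \obs_i^{[m^*,t]}\}$. For $(\supseteq)$, take any $\obsalt_i^{[\ell_t,t]} \in S_{m^*}$. Its largest common suffix with $\obs_i^{[\ell_t,t]}$ starts at some position $m' \le m^*$; downward-monotonicity gives $P(m')$, which is exactly the hypothesis needed to invoke the definition of $\sim$ and conclude $\obs_i^{[\ell_t,t]} \sim \obsalt_i^{[\ell_t,t]}$, hence $\obsalt \in c_i^t$. For $(\subseteq)$, take any $\obsalt \in c_i^t$ and let $m'$ be the start position of its largest common suffix with $\obs_i^{[\ell_t,t]}$. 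From $\obs_i^{[\ell_t,t]} \sim \obsalt_i^{[\ell_t,t]}$ the definition of $\sim$ yields $P(m')$ directly; maximality of $m^*$ then forces $m' \le m^*$, so the common suffix already extends back to position $m^*$ and therefore $\obsalt_i^{[m^*,t]} = \obs_i^{[m^*,t]}$.

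The proof is essentially bookkeeping. The points that require a little care are (a) verifying the precise direction of monotonicity of $P$—namely that a longer shared suffix yields a strictly smaller set of $\obsaltalt$ to check, which is exactly the direction that makes both inclusions go through—and (b) the boundary case $m^* = t+1$, which corresponds to the empty suffix and collapses the cluster to the set of all LOHs of agent $i$, a case the claim already anticipates.
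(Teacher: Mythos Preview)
Your proof is correct and takes a genuinely different route from the paper's. The paper defines the target suffix \emph{extrinsically}, by scanning all pairs in the cluster and picking the shortest of their pairwise largest common suffixes; it then argues containment in one direction by contradiction and in the other via transitivity of $\sim_b$. You instead define $m^*$ \emph{intrinsically} from a single representative through the monotone predicate $P$, and both inclusions then fall out directly from the definition of $\sim$ together with the monotonicity and maximality of $m^*$. Your approach is arguably cleaner: it avoids the auxiliary step of selecting a pair that realises the extremal common suffix and needs no separate contradiction argument. The paper's approach, on the other hand, makes it immediately visible that the suffix depends only on the cluster and not on the chosen representative, although this is of course implicit in your argument as well once $c_i^t = S_{m^*}$ is established.
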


\begin{proof}
Consider the largest common suffix for each pair of suffixes $\obs_i^{[\ell_t,t]},\obsalt_i^{[\ell_t,t]}$ in $c_i^t$ and take a shortest $\obs_i^{[m,t]}$ among them. We prove by contraction that each suffix in $c_i^t$ has this suffix $\obs_i^{[m,t]}$. Suppose that there exists a suffix that does not have suffix $\obs_i^{[m,t]}$. Take this suffix and another suffix that does have suffix $\obs_i^{[m,t]}$. Then these two suffixes have a shorter largest common suffix than $\obs_i^{[m,t]}$, contradiction. Hence, all suffixes in  $c_i^t$ have suffix $\obs_i^{[m,t]}$.

Hence, it remains to prove that $c_i^t$ contains all suffixes with suffix $\obs_i^{[m,t]}$. Let $\obs_i^{[\ell_t,t]}$ and $\obsalt_i^{[\ell_t,t]}$ be two suffixes in $c_i^t$ with largest common suffix $\obs_i^{[m,t]}$. Then $\obs_i^{[\ell_t,t]} \sim_b \obsaltalt_i^{[\ell_t,t]}$ for all $\obsaltalt_i^{[\ell_t,t]}$ with  suffix $\obs_i^{[m,t]}$. If $\obsaltaltalt_i^{[\ell_t,t]}$ is any suffix with suffix $\obs_i^{[m,t]}$, then we hence have $\obs_i^{[\ell_t,t]} \sim_b \obsaltaltalt_i^{[\ell_t,t]}$. By symmetry and transitivity of $\sim_b$ this implies $\obsaltaltalt_i^{[\ell_t,t]} \sim_b \obs_i^{[\ell_t,t]} \sim_b \obsaltalt_i^{[\ell_t,t]}$, so $\obsaltaltalt_i^{[\ell_t,t]} \sim_b \obsaltalt_i^{[\ell_t,t]}$ for all $\obsaltalt_i^{[\ell_t,t]}$ such that $\obsaltaltalt_i^{[m,t]} = \obsaltalt_i^{[m,t]}$. Hence,  $\obsaltaltalt_i^{[\ell_t,t]} \sim \obs_i^{[\ell_t,t]}$, which shows that $\obsaltaltalt_i^{[\ell_t,t]} \in c_i^t$, so $c_i^t$  contains precisely the suffixes with suffix $\obs_i^{[m,t]}$.
\end{proof}

\vspace{30pt}

\subsubsection{Incrementality}

Next, we discuss Lemma 2. We first give some intuition why an additional condition is needed on top of belief-equivalence to obtain an incremental clustering. As shown before, if two suffixes of agent $i$ are belief-equivalent, then they induce the same belief over the pairs  $(s, \jointmultiagentbelief_{\neq i}^{h-t})$, which give the state and future policy of other agents. However, to show incrementality, we also need that the suffixes induce the same belief over agent $i$'s own future policy. This is trivial if we condition on the full suffix $\obs_i^{[\ell_t,t]}$, since we can compute agent $i$'s current cluster from $\obs_i^{[\ell_t,t]}$. However, it is no longer automatic if we condition on $\obs_i^{[v,t]}$ for some $v > \ell_t$. 

To solve this, the extra condition imposed by equivalence ensures that for `large' $v$ the two suffixes $\obs_i^{[v,t]}$ and $\obsalt_i^{[v,t]}$ are the same, while for `small' $v$ the missing observations are not required for determining agent $i$'s policy.

\begin{lemma}
Clustered sliding window memory is incremental.
\end{lemma}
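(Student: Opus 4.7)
The plan is to use Claim~\ref{claim:suffix} to characterize clusters by their maximal common suffix, reduce the incrementality condition to a statement about belief-equivalence, and then establish the latter by a Markovian belief-update argument.

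First, by Claim~\ref{claim:suffix}, the stage-$t$ cluster containing $\obs_i^{[1,t]}$ consists precisely of all LOHs sharing a characteristic common suffix $\obs_i^{[m,t]}$ with $m$ maximal. Thus, given $\obs_i^{[1,t]} \equiv_{C_{i,t}} \obsalt_i^{[1,t]}$ and any $\obs \in \observations_i$, both $\obs_i^{[1,t]}$ and $\obsalt_i^{[1,t]}$ share the suffix $\obs_i^{[m,t]}$, and after appending $\obs$ the two extended LOHs share the suffix $\obs_i^{[m,t+1]} := \obs_i^{[m,t]} \cdot \obs$ at stage $t+1$. So it suffices to show that every pair of LOHs at stage $t+1$ sharing this common suffix lies in the same stage-$(t+1)$ cluster.

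The heart of the argument is a belief-equivalence preservation lemma: if all LOHs with common suffix $\obs_i^{[m,t]}$ are pairwise belief-equivalent at stage $t$ (which holds by Claim~\ref{claim:suffix}), then all LOHs with common suffix $\obs_i^{[m,t+1]}$ are pairwise belief-equivalent at stage $t+1$. I would prove this by expressing the distribution of $\big(s^{t+1},\forget(\jointcluster_{\neq i}^{t},\jointobs_{\neq i}^{t+1})\big)$ conditioned on any $\obs_i^{[v,t+1]}$ with $v \in \{\ell_{t+1},\ldots,t+1\}$ as a deterministic Markovian update of the stage-$t$ distribution conditioned on $\obs_i^{[v,t]}$: marginalize over $s^t$ using the transition induced by the joint action (which, by incrementality of the clustering up to stage $t$, is a function of each agent's cluster), then incorporate the observation model for $\obs$ and sum over $\jointobs_{\neq i}^{t+1}$. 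Since the stage-$t$ conditional distributions agree for every $v \in \{\ell_t,\ldots,t\}$ by belief-equivalence and the same $\obs$ is appended, the stage-$(t+1)$ distributions agree for every $v \in \{\ell_{t+1},\ldots,t\}$; the case $v = t+1$ is trivial because one conditions only on $\obs$.

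Given the preservation lemma, the conclusion follows by the maximality clause of Claim~\ref{claim:suffix} at stage $t+1$: the characteristic index $m''$ of the stage-$(t+1)$ cluster containing $\obs_i^{[1,t]} \cdot \obs$ satisfies $m'' \geq m$, so $\obs_i^{[m'',t+1]}$ is a suffix of $\obs_i^{[m,t+1]}$, and both $\obs_i^{[1,t]} \cdot \obs$ and $\obsalt_i^{[1,t]} \cdot \obs$ share this shorter suffix and hence lie in the same cluster. The main obstacle will be the preservation lemma itself, because belief-equivalence requires agreement for every truncated conditioning window $v$ simultaneously, and the $\forget$ update on the other agents' clusters depends on $\jointcluster_{\neq i}^{t}$ and $\jointobs_{\neq i}^{t+1}$ in a way that must be verified to be compatible with conditioning only on agent $i$'s extended suffix.
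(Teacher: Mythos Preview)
Your overall strategy matches the paper's: reduce incrementality to a belief-update computation showing that belief-equivalence is preserved when appending a common observation, and wrap up via the suffix characterization of clusters. The core computation you sketch is exactly the one the paper carries out.

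There is, however, a genuine gap in your preservation-lemma sketch. You write that the stage-$(t{+}1)$ conditional is a ``deterministic Markovian update of the stage-$t$ distribution conditioned on $\obs_i^{[v,t]}$,'' with the joint action being ``a function of each agent's cluster.'' For the other agents this is fine, since $\jointcluster_{\neq i}^{t}$ is part of the quantity you track. But agent $i$'s action at stage $t$ depends on agent $i$'s own cluster $c_i^t$, which is \emph{not} in general determined by the truncated conditioning $\obs_i^{[v,t]}$. The paper resolves this by a case split on $v$ versus the characteristic index $m$: if $v < m$ then $\obs_i^{[v,t]}$ contains $\obs_i^{[m,t]}$, and since (by Claim~\ref{claim:suffix}) every LOH with suffix $\obs_i^{[m,t]}$ lies in the same stage-$t$ cluster, $c_i^t$ and hence $a_i^{t}$ is determined; if $v \ge m$ then the two $[v,t{+}1]$-suffixes being compared coincide, so the equality is trivial. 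This case split is precisely why the $\sim$ relation quantifies over \emph{all} $\obsaltalt$ sharing the common suffix, and it cannot be absorbed into a single Markovian update formula. Without it, your update is not well-defined for $v>m$.

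A smaller issue: your final step appeals to a ``maximality clause of Claim~\ref{claim:suffix}'' to get $m'' \ge m$, but Claim~\ref{claim:suffix} has no such clause, and as written the argument is circular (you need the two extensions in the same cluster to conclude $m'' \ge m$). The clean fix is direct: once all stage-$(t{+}1)$ suffixes sharing $\obs_i^{[m,t+1]}$ are pairwise $\sim_b$-equivalent, any two of them have largest common suffix starting at some $m' \le m$, and every $\obsaltalt$ with that (longer) common suffix in particular has suffix $\obs_i^{[m,t+1]}$, so the $\sim$ definition is satisfied. This is exactly how the paper closes the argument.
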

\begin{proof}
The structure of the proof is similar to the proof of Lemma 1 of \citet{oliehoek2013incremental}. 

Fix an agent $i$ and a stage $t$. Let $\obs_i^{[\ell_t,t]}$ and $\obsalt_i^{[\ell_t,t]}$ with $\obs_i^{[\ell_t,t]} \sim \obsalt_i^{[\ell_t,t]}$ be given. To show incrementality, we have to show that when extending both suffixes with the same observation $\obs_i^{t+1} = \obsalt_i^{t+1}$, we have $\obs_i^{[\ell_{t+1},t+1]} \sim \obsalt_i^{[\ell_{t+1},t+1]}$.

Let $\obs_i^{[m,t+1]} = \obsalt_i^{[m,t+1]}$ be the largest common suffix of $\obs_i^{[\ell_{t+1},t+1]}$ and $\obsalt_i^{[\ell_{t+1},t+1]}$. Then $m\leq t{+}1$ since  $\obs_i^{t+1} = \obsalt_i^{t+1}$.  
Note that $\obs_i^{[\ell_t,t]}$ and $\obsalt_i^{[\ell_t,t]}$ have common suffix $\obs_i^{[m,t]}\!=\!\obsalt_i^{[m,t]}$. If $m = \ell_{t+1}$, then we have $\obs_i^{[\ell_{t+1},t+1]} = \obsalt_i^{[\ell_{t+1},t+1]}$ and hence trivially $\obs_i^{[\ell_{t+1},t+1]} \sim \obsalt_i^{[\ell_{t+1},t+1]}$. From now on assume that $m > \ell_{t+1}$. Then $\obs_i^{[m,t]} = \obsalt_i^{[m,t]}$ is the largest common suffix of $\obs_i^{[\ell_t,t]}$ and $\obsalt_i^{[\ell_t,t]}$. Since $\obs_i^{[\ell_t,t]} \sim \obsalt_i^{[\ell_t,t]}$, this implies 
\begin{align}\label{eq:lemma2result}
\obs_i^{[\ell_t,t]} \sim_b \obsaltalt_i^{[\ell_t,t]}\text{ for all }\obsaltalt_i^{[\ell_t,t]}\text{ with }\obs_i^{[m,t]} = \obsaltalt_i^{[m,t]} .
\end{align}

 Let $\jointact_{\neq i}^{t+1}$ denote the joint action of all agents except agent $i$. Let $v \in \{\ell_{t+1}, \ldots, t\}$. We first compute
\begin{align*}
&\Pr(s^{t+1}, \jointcluster_{\neq i}^{t}, \jointobs_{\neq i}^{t+1}, \obs_i^{t+1} \mid \obs_i^{[v,t]}) \\ &\quad= \sum_{s^t} \Pr(s^{t+1}, \jointcluster_{\neq i}^{t}, \jointobs_{\neq i}^{t+1}, \obs_i^{t+1}, s^t \mid \obs_i^{[v,t]}) 
\\ &\quad = \sum_{s^t}  \Pr(s^{t+1},\jointobs^{t+1} \mid  \jointcluster_{\neq i}^{t},\obs_i^{[v,t]}, s^t) \Pr(\jointcluster_{\neq i}^{t}, s^t \mid \obs_i^{[v,t]})  
\\ &\quad = \sum_{s^t, \jointact^{t+1}} \Big[ \Pr(s^{t+1},\jointobs^{t+1} \mid  \jointact^{t+1}, \jointcluster_{\neq i}^{t},\obs_i^{[v,t]}, s^t) \cdot \\[-10pt]
 & \qquad\qquad\qquad \Pr(\jointact^{t+1} \mid \jointcluster_{\neq i}^{t},\obs_i^{[v,t]}, s^t) \Pr(\jointcluster_{\neq i}^{t}, s^t \mid \obs_i^{[v,t]})  \Big]
\\ &\quad = \sum_{s^t, \jointact^{t+1}} \Big[ \Pr(s^{t+1},\jointobs^{t+1} \mid  \jointact^{t+1}, s^t) \Pr(\jointact_{\neq i}^{t+1} \mid \jointcluster_{\neq i}^{t}) \cdot \\[-10pt] 
& \qquad\qquad\qquad \Pr(\act_i^{t+1} \mid \jointcluster_{\neq i}^{t},\obs_i^{[v,t]}, s^t) \Pr(\jointcluster_{\neq i}^{t}, s^t \mid \obs_i^{[v,t]}) \Big].  
\end{align*}
In the final step, we remove the conditioning on $\jointcluster_{\neq i}^{t}$ and $\obs_i^{[v,t]}$ in $\Pr(s^{t+1},\jointobs^{t+1} \mid  \jointact^{t+1}, s^t)$ due to the Markov property of Dec-POMDPs and we remove the conditioning on $\obs_i^{[v,t]}$ and $s^t$ in $\Pr(\jointact_{\neq i}^{t+1} \mid \jointcluster_{\neq i}^{t})$ since $\jointcluster_{\neq i}^{t}$ already completely determines the actions the agents except agent $i$ take. 

To show that $\obs_i^{[\ell_{t+1},t+1]} \sim \obsalt_i^{[\ell_{t+1},t+1]}$, we have to show that  $\obs_i^{[\ell_{t+1},t+1]} \sim_b \obsaltalt_i^{[\ell_{t+1},t+1]}$ for all $\obsaltalt_i^{[\ell_{t+1},t+1]}$ such that $\obs_i^{[m,t+1]} = \obsaltalt_i^{[m,t+1]}$. Let $\obsaltalt_i^{[\ell_{t+1},t+1]}$ such that $\obs_i^{[m,t+1]} = \obsaltalt_i^{[m,t+1]}$ be given arbitarily.

Now we show that the final expression for the probability $\Pr(s^{t+1}, \jointcluster_{\neq i}^{t}, \jointobs_{\neq i}^{t+1}, \obs_i^{t+1} \mid \obs_i^{[v,t]})$ given above is the same when we use $\obsaltalt_i^{[v,t]}$ instead of $\obs_i^{[v,t]}$. We show this for each of the four factors individually:  
\begin{itemize} 
\item The first two factors do not depend on $\obs_i^{[v,t]}$. 
\item For the third factor we have to show that \[\Pr(\act_i^{t+1} \mid \jointcluster_{\neq i}^{t},\obs_i^{[v,t]}, s^t) = \Pr(\act_i^{t+1} \mid \jointcluster_{\neq i}^{t},\obsaltalt_i^{[v,t]}, s^t).\] We make a case distinction on $v$. If $v \geq m$, then $\obs_i^{[v,t]} = \obsaltalt_i^{[v,t]}$, so the expressions are trivially equal. If $v < m$, then  $\obs_i^{[v,t]} = \obsaltalt_i^{[v,t]}$ implies  $\obs_i^{[m,t]} = \obsaltalt_i^{[m,t]}$. Since all observation histories with suffix $\obs_i^{[m,t]}$ are clustered together (by Claim \ref{claim:suffix}), this means that $\obs_i^{[m,t]}$ determines the action $\act_i^{t+1}$ that agent $i$ takes. Hence, also in this case the factors are equal. 
\item
For the fourth factor, we have to show that \[\Pr(\jointcluster_{\neq i}^{t}, s^t \mid \obs_i^{[v,t]}) = \Pr(\jointcluster_{\neq i}^{t}, s^t \mid \obsaltalt_i^{[v,t]}).\] This holds since $\obs_i^{[\ell_t,t]} \sim_b \obsaltalt_i^{[\ell_t,t]}$ (see \eqref{eq:lemma2result}), which implies
\begin{equation*} \begin{split} \Pr(\forget(\jointcluster_{\neq i}^{t-1}, &\jointobs_{\neq i}^{t}), s^t \mid \obs_i^{[v,t]}) \\ &= \Pr(\forget(\jointcluster_{\neq i}^{t-1}, \jointobs_{\neq i}^{t}), s^t \mid \obsaltalt_i^{[v,t]}).\end{split}\end{equation*}
We use Claim \ref{cluster_claim1} to go from a distribution over pairs $(s^t, \forget(\jointcluster_{\neq i}^{t-1}, \jointobs_{\neq i}^{t}))$ to distribution over pairs $(s^t, \jointcluster_{\neq i}^{t})$.
\end{itemize}
This implies that \[\Pr\Big(s^t, \forget(\jointcluster_{\neq i}^{t-1}, \jointobs_{\neq i}^{t})\! ~\big|~  \!\obs_i^{[v,t]}\Big) = \Pr\Big(s^t, \forget(\jointcluster_{\neq i}^{t-1}, \jointobs_{\neq i}^{t})\!  ~\big|~  \!\obsaltalt_i^{[v,t]}\Big).\]

This then implies $\Pr(\jointcluster_{\neq i}^{t}, s^t \mid \obs_i^{[v,t]}) = \Pr(\jointcluster_{\neq i}^{t}, s^t \mid \obsaltalt_i^{[v,t]}) $ by aggregating over all clusters in $\forget(\jointcluster_{\neq i}^{t-1}, \jointobs_{\neq i}^{t})$ which end up in the same cluster $\jointcluster_{\neq i}^{t}$.

Finally, note that 
\begin{align*}
&\Pr(s^{t+1}, \jointcluster_{\neq i}^{t}, \jointobs_{\neq i}^{t+1} \mid \obs_i^{[v,t+1]}) \\&\quad= \frac{\Pr(s^{t+1}, \jointcluster_{\neq i}^{t}, \jointobs_{\neq i}^{t+1}, \obs_i^{t+1} \mid \obs_i^{[v,t]}) }{\Pr(\obs_i^{t+1} \mid \obs_i^{[v,t]}) } \\ &\quad= \frac{\Pr(s^{t+1}, \jointcluster_{\neq i}^{t}, \jointobs_{\neq i}^{t+1}, \obs_i^{t+1} \mid \obs_i^{[v,t]}) }{\sum_{s^{t+1}, \jointcluster_{\neq i}^{t}, \jointobs_{\neq i}^{t+1}}\Pr(s^{t+1}, \jointcluster_{\neq i}^{t}, \jointobs_{\neq i}^{t+1}, \obs_i^{t+1} \mid \obs_i^{[v,t]}) }.
\end{align*}
In this fraction, the numerator and denominator do not change when using $\obsaltalt_i^{[v,t]}$ instead of $\obs_i^{[v,t]}$ as shown above. So  \begin{align*} &\Pr(s^{t+1}, \jointcluster_{\neq i}^{t}, \jointobs_{\neq i}^{t+1} \mid \obs_i^{[v,t+1]}) \\ &\quad= \Pr(s^{t+1}, \jointcluster_{\neq i}^{t}, \jointobs_{\neq i}^{t+1} \mid \obsaltalt_i^{[v,t+1]}).\end{align*} 
We can now sum over all $\jointcluster_{\neq i}^{t}, \jointobs_{\neq i}^{t+1}$ that end up in the same cluster $\forget(\jointcluster_{\neq i}^{t}, \jointobs_{\neq i}^{t+1})$. 
Hence, we conclude that \begin{align*} &\Pr(s^{t+1}, \forget(\jointcluster_{\neq i}^{t}, \jointobs_{\neq i}^{t+1}) \mid \obs_i^{[v,t+1]}) \\ &\quad= \Pr(s^{t+1}, \forget(\jointcluster_{\neq i}^{t}, \jointobs_{\neq i}^{t+1}) \mid \obsaltalt_i^{[v,t+1]}) \end{align*} for  $v \in \{\ell_{t+1}, \ldots, t\}$. Since $\obs_i^{t+1} = \obsaltalt_i^{t+1}$, which holds since $m \leq t+1$,  it also holds for $v = t+1$.

This implies that  $\obs_i^{[\ell_{t+1},t+1]} \sim_b \obsaltalt_i^{[\ell_{t+1},t+1]}$ for all $\obsaltalt_i^{[\ell_{t+1},t+1]}$ such that $\obs_i^{[m,t+1]} = \obsaltalt_i^{[m,t+1]}$, which finally implies that  $\obs_i^{[\ell_{t+1},t+1]} \sim \obsalt_i^{[\ell_{t+1},t+1]}$. This means that identical extensions of equivalent clusters are equivalent, which is what we need to show that the clustering is incremental.
\end{proof}

\subsubsection{Probability-based clustering}

Finally, we show that approximate equivalence (used in probability-based clustering) is an equivalence relation. We denote approximate equivalence by $\approx$. By definition, we have $\obs_i^{[\ell_t,t]} \approx \obsalt_i^{[\ell_t,t]}$ iff $\obs_i^{[\ell_t,t]} \sim \obsalt_i^{[\ell_t,t]}$ or $\Pr(\obs_i^{[m,t]}) \leq p_{\max}$, where $\obs_i^{[m,t]}$ is the largest common suffix of $\obs_i^{[\ell_t,t]}$ and $\obsalt_i^{[\ell_t,t]}$ 

\begin{claim} $\approx$ is an equivalence relation.
\end{claim}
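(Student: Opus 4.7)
The plan is to verify the three axioms of an equivalence relation, with reflexivity and symmetry being immediate and transitivity requiring a careful case analysis on how the two hypotheses are witnessed.

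For reflexivity, since $\sim$ is reflexive by Claim \ref{claim:eqrel}, every suffix satisfies $\obs_i^{[\ell_t,t]} \sim \obs_i^{[\ell_t,t]}$, hence $\obs_i^{[\ell_t,t]} \approx \obs_i^{[\ell_t,t]}$. For symmetry, note that the largest common suffix of a pair is unchanged when the pair is swapped, so the probability condition $\Pr(\obs_i^{[m,t]}) \leq p_{\max}$ is symmetric in the two arguments, and $\sim$ is symmetric by Claim \ref{claim:eqrel}.

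For transitivity, suppose $\obs \approx \obsalt$ and $\obsalt \approx \obsaltaltalt$. Let $m_1, m_2, m_3$ denote the starting positions of the largest common suffixes of $(\obs, \obsalt)$, $(\obsalt, \obsaltaltalt)$, and $(\obs, \obsaltaltalt)$, respectively. A key auxiliary fact I will use throughout is $m_3 \leq \max(m_1, m_2)$: at any position $p \geq \max(m_1, m_2)$ all three suffixes agree, so the common suffix of $\obs$ and $\obsaltaltalt$ extends at least that far back. I then split into cases. If both $\obs \sim \obsalt$ and $\obsalt \sim \obsaltaltalt$ hold, transitivity of $\sim$ (Claim \ref{claim:eqrel}) yields $\obs \sim \obsaltaltalt$. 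If both equivalences are witnessed only by probability, let $M = \max(m_1, m_2)$; whether $M = m_1$ or $M = m_2$, the corresponding probability bound on $\obs^{[M,t]}$ applies (in the second case via $\obs^{[m_2,t]} = \obsalt^{[m_2,t]}$, which holds because $m_2 \geq m_1$), giving $\Pr(\obs^{[m_3,t]}) \leq \Pr(\obs^{[M,t]}) \leq p_{\max}$.

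The subtle mixed case is the main obstacle: $\obs \sim \obsalt$ is witnessed by $\sim$ while $\obsalt \approx \obsaltaltalt$ is witnessed only by $\Pr(\obsalt^{[m_2,t]}) \leq p_{\max}$ (and the case with the roles swapped is symmetric). Here I invoke Claim \ref{claim:suffix}: the $\sim$-class of $\obs$ and $\obsalt$ consists of precisely the suffixes ending with a fixed representative $\obs^{[m^*,t]}$, with $m^* \geq m_1$. If $m^* \geq m_2$, then using $\obsalt^{[m_2,t]} = \obsaltaltalt^{[m_2,t]}$ we obtain $\obs^{[m^*,t]} = \obsalt^{[m^*,t]} = \obsaltaltalt^{[m^*,t]}$, so $\obsaltaltalt$ ends with the representative suffix and lies in the same $\sim$-cluster, giving $\obs \sim \obsaltaltalt$. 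Otherwise $m_1 \leq m^* < m_2$, and I show $m_3 = m_2$: at position $m_2{-}1 \geq m^* \geq m_1$ we have $\obs^{m_2-1} = \obsalt^{m_2-1}$ since $\obs$ and $\obsalt$ agree on $[m_1,t]$, yet $\obsalt^{m_2-1} \neq \obsaltaltalt^{m_2-1}$ by maximality of $m_2$, so $\obs$ and $\obsaltaltalt$ differ at $m_2{-}1$, and together with $m_3 \leq m_2$ this forces equality. Since $m_2 \geq m_1$ implies $\obs^{[m_2,t]} = \obsalt^{[m_2,t]}$, I conclude $\Pr(\obs^{[m_3,t]}) = \Pr(\obsalt^{[m_2,t]}) \leq p_{\max}$, so $\obs \approx \obsaltaltalt$. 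This dichotomy driven by the position of the cluster representative $m^*$ is where the real content of the argument lies.
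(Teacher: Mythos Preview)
Your proof is correct, but it takes a longer route than the paper. The paper exploits the symmetry of $\approx$ to assume without loss of generality that $m_1 \geq m_2$ (your notation). With that single reduction in place, only the first hypothesis $\obs \approx \obsalt$ needs to be analysed: since $m_1 \geq m_2$ one has $\obs^{[m_1,t]} = \obsalt^{[m_1,t]} = \obsaltaltalt^{[m_1,t]}$ and $m_3 \leq m_1$. If $\Pr(\obs^{[m_1,t]}) \leq p_{\max}$ then $\Pr(\obs^{[m_3,t]}) \leq p_{\max}$ immediately. Otherwise $\obs \sim \obsalt$, and Claim~\ref{claim:suffix} says the $\sim$-cluster of $\obs$ contains \emph{every} suffix ending in $\obs^{[m_1,t]}$; since $\obsaltaltalt$ is such a suffix, $\obs \sim \obsaltaltalt$. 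The second hypothesis is used only to guarantee that $m_2$ exists.

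Your argument instead performs a three-way case split (both $\sim$, both probability, mixed) and, in the mixed case, a further dichotomy on the position of the cluster representative $m^*$ relative to $m_2$. This is valid but adds work; in particular your second mixed subcase (proving $m_3 = m_2$ exactly) is unnecessary once the symmetry reduction is made. The paper's proof never needs to track $m^*$ separately from $m_1$: the observation that the cluster contains \emph{all} suffixes with the common suffix $\obs^{[m_1,t]}$ (not just those with the possibly shorter $\obs^{[m^*,t]}$) is what makes the argument collapse to two lines. Both approaches rely on Claim~\ref{claim:suffix} at the same essential point, but the paper extracts more from it.
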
 
\begin{proof}
Fix $p_{\max}$. 

Reflexivity of $\approx$ follows directly from the reflexivity of $\sim$. 

We now show that $\approx$ is symmetric. Let $\obs_i^{[\ell_t,t]}$ and $ \obsalt_i^{[\ell_t,t]}$ with $\obs_i^{[\ell_t,t]} \approx \obsalt_i^{[\ell_t,t]}$ be given. Let $\obs_i^{[m,t]} = \obsalt_i^{[m,t]}$ be their largest common suffix. Then $\obs_i^{[\ell_t,t]} \sim \obsalt_i^{[\ell_t,t]}$ or $\Pr(\obs_i^{[m,t]}) \leq p_{\max}$. By symmetry of $\sim$ this implies $\obsalt_i^{[\ell_t,t]} \sim \obs_i^{[\ell_t,t]}$ or $\Pr(\obsalt_i^{[m,t]}) \leq p_{\max}$, so $\obsalt_i^{[\ell_t,t]} \approx \obs_i^{[\ell_t,t]}$, showing symmetry.

We now show that $\approx$ is transitive.  Let  $\obs_i^{[\ell_t,t]}$, $ \obsalt_i^{[\ell_t,t]}$ and $ \obsaltaltalt_i^{[\ell_t,t]}$ with $\obs_i^{[\ell_t,t]} \approx \obsalt_i^{[\ell_t,t]}$ and $\obsalt_i^{[\ell_t,t]} \approx \obsaltaltalt_i^{[\ell_t,t]}$ be given. 

Let  $\obs_i^{[m,t]} = \obsalt_i^{[m,t]}$  be the largest common suffix of $\obs_i^{[\ell_t,t]} $ and $\obsalt_i^{[\ell_t,t]}$ and let $\obsalt_i^{[m',t]} = \obsaltaltalt_i^{[m',t]}$ be the largest common suffix of $\obsalt_i^{[\ell_t,t]} $ and $\obsaltaltalt_i^{[\ell_t,t]}$. Since $\approx$ is symmetric we can assume without loss of generality that $m \geq m'$, as in the proof of Claim \ref{claim:eqrel}. Then we have $\obs_i^{[m,t]} = \obsalt_i^{[m,t]} = \obsaltaltalt_i^{[m,t]}$, so if $\obs_i^{[m'',t]} = \obsaltaltalt_i^{[m'',t]}$ is the largest common suffix of $\obs_i^{[\ell_t,t]} $ and $\obsaltaltalt_i^{[\ell_t,t]}$, then $m'' \leq m$. If $\Pr(\obs_i^{[m,t]}) \leq p_{\max}$, then also $\Pr(\obs_i^{[m'',t]}) \leq p_{\max}$, so $\obs_i^{[\ell_t,t]} \approx \obsaltaltalt_i^{[\ell_t,t]}$. Otherwise, we have  $\obs_i^{[\ell_t,t]} \sim \obsalt_i^{[\ell_t,t]}$. Hence, the cluster containing $\obs_i^{[\ell_t,t]}$ contains all suffixes with suffix $\obs_i^{[m,t]}$ by Claim \ref{claim:suffix}. Since $\obs_i^{[m,t]} = \obsaltaltalt_i^{[m,t]}$, it then also contains $\obsaltaltalt_i^{[m,t]}$, so $\obs_i^{[m,t]} \sim \obsaltaltalt_i^{[m,t]}$ and hence $\obs_i^{[m,t]} \approx \obsaltaltalt_i^{[m,t]}$. Hence, we conclude that transitivity holds and hence that $\approx$ is an equivalence relation.
\end{proof}

\subsection{Clustering: Empirical Evaluation}

\begin{table}
\centering
\begin{tabular}{@{}r|r|rrr@{}} \hline
 & & & \multitwo{only possible} & \\
benchmark & $|\observations_i|$ & $|\observations_i|^3$ & & clustered \\ \hline
\dectiger & 2 & 8 & 8 & 8 \\
\grid & 2 & 8 & 8 & 8 \\
\boxpush & 5 & 125 & 30 & 30 \\
\ff & 2 & 8 & 8 & 8 \\
\gridthree & 9 & 729 & 111 & 9 \\
\mars & 8 & 512 & 38 & 34 \\
\hotel & 4 & 64 & 8 & 4 \\
\recycling & 2 & 8 & 5 & 2 \\
\broadcast & 2 & 8 & 8 & 1 \\ \hline
\end{tabular}
\caption{Maximum number of clusters for a single stage/agent, for sliding window memory with $k=3$ for a good policy with $\horizon=10$.} \label{tab:cluster}
\end{table}

Table \ref{tab:cluster} provides a short empirical evaluation of (clustered) sliding window memory. We consider a window size of $k=3$ observations. For each benchmark, we compute a good policy for horizon $h=10$ using the $Q_{\mathsf{MDP},2}$ heuristic and $L = 10\,000$ using clustered sliding window memory, and report the maximum number of clusters for a single stage and a single agent for that policy. For comparison, we also give the number of clusters without any clustering, which is $|\observations_i|^k$, and the maximum number of clusters for a policy computed using what we call \emph{only-possible} clustering, which does not assign a cluster (equivalently, assigns an arbitrary cluster) to suffixes which have probability 0, using the same heuristic and limit $L$. We note that Table \ref{tab:cluster} only gives the number of clusters for near-optimal policies, while the number of clusters in sub-optimal policies can be different. We now discuss the results, split over three cases: (1) a case where clustering helps a lot, (2) a case where clustering does not help, and (3) a case where clustering is somewhat helpful.

\paragraph{Case 1: Substantially fewer clusters for all (good) policies.} On the benchmarks where lossless clustering \cite{oliehoek2009lossless} performs well, namely \gridthree{}, \hotel{}, \recycling{}, and \broadcast{}, clustered sliding window memory is also able to substantially reduce the number of clusters, despite the conditions for clustered sliding window memory being stricter than those for lossless clustering. In contrast, the only-possible clustering yields (much) smaller reduction.  Running times are considerably lower when using clustered sliding window memory: a factor 4 speedup for \hotel{} and \recycling{}, a factor 8 speedup for \gridthree{} and a factor 50 speedup for \broadcast{}, compared to only-possible clustering. Moreover, clustered sliding window memory finds a better value for \gridthree{}: 4.685 instead of 4.678.

\paragraph{Case 2: Low overhead when clustering is not possible.} On \dectiger{} and \boxpush{},  the algorithm returns the same policy when using only-possible clustering as when using clustered sliding window memory. For \boxpush{}, using only-possible clustering does have a substantial effect. Due to the overhead of clustering, the running time is around 50\% larger on \dectiger{} and 10\% larger on \boxpush{} than for only-possible clustering.  For \dectiger{}, lossless clustering is able to cluster some observation histories where the order of the observations does not matter. This is not the case with sliding window memory as the order determines when the observation leaves the window. On \ff{}, clustering does help somewhat (but not for all stages/agents), and the benefit approximately offsets the overhead.

\paragraph{Case 3: Smaller search space leading to better policies.} On \grid{} and \mars{}, clustered sliding window memory helps substantially to reduce the search space compared to only-possible clustering. For \grid{}, clustering is mainly possible for sub-optimal policies, and some stages in near-optimal policies. Since the number of policies is exponential in the number of clusters, even a small reduction can have an effect. Using clustered sliding window memory provides better policies than using only-possible clustering: an improvement from 6.811 to 6.821 on \grid{} and from 26.307 to 26.310 on \mars{}. However, for \mars{} this does come at the cost of a running time that is 80\% larger. This increase is primarily because more heuristics need to be computed due to the larger exploration of the search space, and only to a lesser extend due to the time required for the clustering itself. On \grid, the running time is only 15\% larger.

\section{\ourLB: Pseudocode}\label{appB}

Algorithm \ref{alg:rsmaa} presents pseudocode for \ourLB (Sec.~\ref{sec:loose_heuristics}), showing in particular where queue pruning (App.~\ref{appC}) and horizon reduction (App.~\ref{appD}) are done. We do not show terminating heuristic computations after $M$ steps. 

\begin{algorithm}[H]
    \caption{\ourLB}
    \label{alg:rsmaa}
    
    \begin{algorithmic}[1]
    \Require{$\horizon$: horizon, $\jointppol$: initial policy (can be nonempty for computing heuristics), $k$: window size, $L$: limit used in $\progress$, $r$: maximum horizon after horizon reduction, $\textit{mode}$: `heuristic' or `policy-finding'}
            \Function{\ourLB}{$\horizon,\jointppol, L, r, \textit{mode}$}
            \State $t \gets 0$
            \If{\textit{mode} = \textit{heuristic}} \Comment{horizon reduction}
                \State $t \gets \max\{h-r, 0\}$; $h \gets h-t$
            \EndIf
            \State $q\gets \textsc{PriorityQueue}()$\Comment{sorted in descending order}
            \State $q.\text{push}(\langle\min(\jointppol.\text{heuristics}), \jointppol\rangle)$; $N \gets 0$
            \While{true}
                \State $v, \jointppol \gets q.\text{pop}()$
                \If{$\jointppol$ is fully specified}
                     \textbf{return} $v, \jointppol$ \EndIf
                \If{$\progress(\jointppol) < N$}
                     \textbf{continue} \Comment{queue pruning}  \EndIf  
                \State $N \gets N+1$ 
                \State $\obstrace \gets$ largest LOH for which $\jointppol(\obstrace) \neq \bot$
                \If{$\obstrace$ is largest LOH of some length}
                \State $\jointppol.\text{cluster\_policy}(\text{window} = k)$ 
                \EndIf
                \State $\obstrace \gets \text{next}(\obstrace)$; $a \gets \text{agent}(\obstrace)$
                \If{$a = n$ and $\stage{\jointppol} = h-1$}  \Comment{last stage}
                    \State $\jointpolicy \gets \jointppol$ 
                    \ForAll{LOHs $\obstrace'$ of agent $n$ of length $h-1$} 
                        \For{$\textit{act} \in \actions_a$}
                            \State compute reward $R[\textit{act}]$ in stage $h{-}1$ plus horizon $t$ terminal reward  for action $\textit{act}$ for LOH $\obstrace'$
                        \EndFor
                        \State select $\jointpolicy(\obstrace')$ from $\argmax_{\textit{act} \in \actions_a} R[\textit{act}]$
                    \EndFor
                    \State $q.\text{push}(\langle \jointpolicy.\text{evaluate\_policy}(), \jointpolicy\rangle)$
                \Else \Comment{extend policy and compute new heuristics}
                    \For{$\textit{act} \in \actions_a$}
                        \State $\jointppol' \gets \jointppol.\text{copy()}$; $\jointppol'(\obstrace) \gets \textit{act}$
                        \State update $\jointppol'.\text{heuristics}$
                        \State $q.\text{push}(\langle\min(\jointppol'.\text{heuristics}), \jointppol'\rangle)$
                    \EndFor
                \EndIf
            \EndWhile
        \EndFunction
    \end{algorithmic}
\end{algorithm}

\section{\ourLB: Queue pruning}\label{appC}

In this section, we give further intuition for the queue pruning in \ourLB.  We first briefly recap how queue pruning works. Then we explain the necessity of queue pruning, and provide intuition for the progress measure $\progress$. In addition, we prove that the progress measure ensures that the queue always contains a policy which is not pruned, and hence that \ourLB always finds a fully specified policy. 

\paragraph{Queue pruning.} We prune the \astar priority queue using a progress measure $\progress$. Partial policies in the priority queue are still processed in descending order of heuristic value. However, a partial policy $\jointppol$ is only expanded if $\progress(\jointppol) \geq N$, where $N$ is the number of policies expanded already. Otherwise, the partial policy $\jointppol$ is pruned, i.e.\ removed from the queue without further processing.

\paragraph{Necessity of queue pruning.} When running  (small-step)  \maastar to find an optimal policy, it is not possible to prune the priority queue. It can be observed empirically that too many nodes are expanded even before the first fully specified policy is added to the priority queue, to be able to find a policy in reasonable time. 
Hence, we have to limit the number of policies expanded in the shallow levels of the search tree.
In fact, we really have to limit the number of policies expanded up to each level of the tree, to ensure that the algorithm will explore deeper policies sufficiently fast.

\subsection{Intuition for $\progress$}

We first recall the two design goals for $\progress$:
\begin{itemize}
\item It has to limit the number of policies expanded up to each level of the search tree, to ensure that a fully specified policy is eventually found.
\item It should prune the least promising policies.
\end{itemize}

The first design goal is met by using a progress measure that increases by at least 1 each time we go down a level in the tree. We now focus on the intuition why our progress measure meets the second design goal.

When using an admissible heuristic for policy finding, the heuristic values typically go down as we go deeper in the search tree. Namely, a larger part of the heuristic value becomes the actual realized reward in the stages for which the policy is already specified, rather than the overestimation provided by the heuristic for remaining stages (for which the policy is not specified yet). Hence, for admissible heuristics, a deep policy with some heuristic value is more likely to have a good policy as descendant than a shallow policy with the same heuristic value. Hence, it is preferable to prune the shallow node when aiming to find a good policy.

\paragraph{First attempt: progress measure $\progress'$.} Due to differences in the number of clusters per stage between different policies, we should however not directly use the depth of the node in the search tree (which is just the total number of LOHs to which an action is assigned), but rather take into account how many stages this represents, and within each stage, for how many agents the policy has been fully specified. A first idea for a progress measure could therefore be as follows: Let $\jointppol$ be a partial policy such that $\jointppol$ has specified an action for all clusters of length $\stage{\jointppol}-1$, for all clusters of length $\stage{\jointppol}$ for $i$ out of $n$ agents, and for $c$ out of $|C_{i+1,\stage{\jointppol}}|$ clusters of length $\stage{\jointppol}$ of agent $i+1$. Then we could consider \[
\progress'(\jointppol) = \stage{\jointppol} \cdot L + i \cdot \frac{L}{n} + c \cdot \frac{L}{n \left|C_{i+1,\stage{\jointppol}}\right|}.
\]
The progress measure $\progress'$ aims to divide the available iterations evenly over the stages, within each stage evenly over the agents, and per agent evenly over the different clusters of the agent in that stage.

\newpage

\paragraph{Progress measure $\progress$.} The progress measure $\progress'$ gives comparable, but slightly worse results than the progress measure $\progress$. With $\progress$, we also take into account the probability of each of the clusters for the agent $i+1$ for which we currently specify actions.  We recall that $\progress$ is defined as
\[
\progress(\jointppol) = \stage{\jointppol} \cdot L + i \cdot \frac{L}{n} + c + p \cdot\left(\frac{L}{n} - \left|C_{i+1,\stage{\jointppol}}\right| \right).
\]
where $p$ is  the probability that the LOH of agent $i+1$ is in one of the first $c$ clusters, i.e.\ the probability of the clusters of agent $i+1$ to which $\jointppol$ already assigns an action.

To provide intuition for the progress measure $\progress$, we note the following. The contribution of the overestimation of the heuristic on a given cluster to the overall heuristic value, is proportional to the probability of that cluster. 
Hence,  if so far we mostly assigned actions to clusters with small probability (for the current agent), the heuristic value will be more optimistic, because the overestimations that we reduced by assigning actions have a smaller weight. Hence, a smaller $p$ should correspond to a smaller progress measure, so that it is pruned rather than a partial policy with the same heuristic value progressed to the same stage and agent, but which has assigned actions to clusters with a larger probability.

However, when completely removing $c$ from the progress measure and replacing it by $p \cdot \frac{L}{n}$, we no longer ensure that the queue always contains a policy which is not pruned. Therefore, we also include $c$ in the progress measure (which is at most $\left|C_{i+1,\stage{\jointppol}}\right|$), and include the probability $p$ with the remaining weight $\left(\frac{L}{n} - \left|C_{i+1,\stage{\jointppol}}\right| \right)$. This  ensures that progress measure always increases by 1 when going down a level in the tree, and hence that the queue always contains a policy which is not pruned. We prove this next.

\subsection{Proofs}

The definition of the progress measure has been chosen to ensure that $\progress(\jointppol') \geq \progress(\jointppol)+1$ if $\jointppol'$ is a child of $\jointppol$ in the search tree. Hence, if $\jointppol$ is expanded, then its child $\jointppol'$ is a possible candidate for expansion in the next iteration. We formalize this in two claims and a corollary.

\begin{claim}\label{claim:prog_increases}
 Assume that $L \geq n |C_{i, t}|$ for all $0 \leq t \leq \horizon-1$ and all $i \in \agents$.  Let $\jointppol$ be a partial policy and let $\jointppol'$ be a child of $\jointppol$ in the small-step search tree. Then $\progress(\jointppol') \geq \progress(\jointppol)+1$. 
\end{claim}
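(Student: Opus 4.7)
The plan is by case analysis on how the child $\jointppol'$ extends $\jointppol$ in the small-step search tree. Since $\jointppol'$ differs from $\jointppol$ by exactly one specified cluster, and the expansion order fixes the next cluster to be the $(c{+}1)$-th cluster of agent $i{+}1$ in stage $\stage{\jointppol}$, I know precisely which components of $\progress$ are affected. Let $q \geq 0$ denote the probability that the LOH of agent $i{+}1$ lies in this newly specified cluster. The argument verifies $\progress(\jointppol') - \progress(\jointppol) \geq 1$ in two cases.

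\emph{Interior case:} $c{+}1 < |C_{i{+}1,\stage{\jointppol}}|$. Here $\stage{\jointppol'} = \stage{\jointppol}$, $i' = i$, $c' = c{+}1$ and $p' = p + q$. Direct subtraction cancels the $\stage{\jointppol}\cdot L$ and $i\cdot L/n$ contributions, yielding $\progress(\jointppol') - \progress(\jointppol) = 1 + q\bigl(L/n - |C_{i{+}1,\stage{\jointppol}}|\bigr)$, which is at least $1$ by $q \geq 0$ together with the standing hypothesis $L \geq n|C_{i{+}1,\stage{\jointppol}}|$.

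\emph{Boundary case:} $c{+}1 = |C_{i{+}1,\stage{\jointppol}}|$, so agent $i{+}1$ becomes fully specified, whence $c' = 0$ and $p' = 0$. Two sub-cases arise: either $i{+}1 < n$ and the child stays in the same stage with $i' = i{+}1$, or $i{+}1 = n$ and the child starts the next stage with $\stage{\jointppol'} = \stage{\jointppol}+1$ and $i' = 0$. Substituting $c = |C_{i{+}1,\stage{\jointppol}}| - 1$ into the progress formulas, both sub-cases collapse to $\progress(\jointppol') - \progress(\jointppol) = (1-p)\bigl(L/n - |C_{i{+}1,\stage{\jointppol}}|\bigr) + 1 \geq 1$, using $p \leq 1$ and the hypothesis.

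The only subtlety is seeing that the two boundary sub-cases yield the same expression: this relies on the identity $L - (n{-}1)L/n = L/n$, which makes the stage increment in the second sub-case contribute the same amount as the agent increment in the first. For the degenerate case where $\jointppol'$ is already fully specified (so $\stage{\jointppol}+1 = h$ and no clusters of length $h$ exist), the trailing probability-weighted term in $\progress(\jointppol')$ vanishes because $p' = 0$, so the same calculation goes through unchanged.
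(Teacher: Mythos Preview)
Your proof is correct and follows essentially the same case analysis as the paper's own proof: an interior case where the next cluster for agent $i{+}1$ is not the last, and a boundary case (with two sub-cases depending on whether $i{+}1 < n$) where it is. The only difference is presentational: you compute the difference $\progress(\jointppol') - \progress(\jointppol)$ directly and simplify it to the closed form $(1-p)\bigl(L/n - |C_{i+1,\stage{\jointppol}}|\bigr) + 1$ in the boundary case, whereas the paper instead rewrites $\progress(\jointppol')$ and bounds it below by $\progress(\jointppol)+1$ via a chain of equalities and one inequality; your explicit mention of the degenerate fully-specified case is a nice touch the paper leaves implicit.
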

\begin{proof}
Assume that $\jointppol$ has specified an action for all clusters of length $\stage{\jointppol}-1$, for $i$ out of $n$ agents, and for $c$ out of $|C_{i+1,\stage{\jointppol}}|$ clusters of agent $i+1$, and let $p$ be the probability that the LOH of agent $i+1$ is in one of the first $c$ clusters. Define $\stage{\jointppol'}, i', c'$, and $p'$ similarly for $\jointppol'$.

If $c < |C_{i+1,\stage{\jointppol}}| - 1$, then $\jointppol'$ just specifies an action for an additional cluster for agent $i+1$, so $\stage{\jointppol'} = \stage{\jointppol}$, $i' = i$, $c' = c + 1$ and $p' \geq p$. Then 
\begin{align*}
&\progress(\jointppol') = \stage{\jointppol'} \!\cdot\! L + i' \!\cdot\!  \frac{L}{n} + c' + p'\left(\!\frac{L}{n} \!-\! \left|C_{i'+1,\stage{\jointppol'}}\right|\! \right) \\
&\quad\geq \stage{\jointppol} \cdot L + i \cdot \frac{L}{n} + (c+1) + p\left(\frac{L}{n} - \left|C_{i+1,\stage{\jointppol}}\right| \right) \\
&\quad = \progress(\jointppol) + 1.
\end{align*}

If $c = |C_{i+1,\stage{\jointppol}}| - 1$, then $\jointppol'$ specifies the last action for agent $i+1$. If $i+1 < n$, then $\stage{\jointppol'} = \stage{\jointppol}$, $i' = i+1$, $c' = 0$ and $p' = 0$. Then 
\begin{align*}
&\progress(\jointppol') = \stage{\jointppol'} \cdot L + i' \cdot  \frac{L}{n} = \stage{\jointppol} \cdot L + i \cdot \frac{L}{n} + \frac{L}{n} \\\
&\quad = \stage{\jointppol}  \!\cdot\! L + i  \!\cdot\! \frac{L}{n} + \left|C_{i+1,\stage{\jointppol}}\right| + \left(\!\frac{L}{n} \!-\! \left|C_{i+1,\stage{\jointppol}}\right|\! \right) \\
&\quad \geq \stage{\jointppol}  \!\cdot\! L + i  \!\cdot\! \frac{L}{n} + (c+1) + p\left(\!\frac{L}{n} \!-\! \left|C_{i+1,\stage{\jointppol}}\right|\! \right) \\
&\quad = \progress(\jointppol) + 1. 
\end{align*}

Finally, if $c = |C_{i+1,\stage{\jointppol}}| - 1$ and $i+1 = n$, then $\stage{\jointppol'} = \stage{\jointppol}+1$, $i' = 0$, $c' = 0$ and $p' = 0$. Then 
\begin{align*}
&\progress(\jointppol') = \stage{\jointppol'} \cdot L = \stage{\jointppol} \cdot L + (n\!-\!1) \!\cdot\! \frac{L}{n} + \frac{L}{n} \\
&\quad = \stage{\jointppol}  \!\cdot\! L + i \!\cdot\! \frac{L}{n} + \left|C_{i+1,\stage{\jointppol}}\right| + \left(\!\frac{L}{n} \!-\! \left|C_{i+1,\stage{\jointppol}}\right|\! \right) \\
&\quad \geq \stage{\jointppol}  \!\cdot\! L + i  \!\cdot\! \frac{L}{n} + (c+1) + p\left(\!\frac{L}{n} \!-\! \left|C_{i+1,\stage{\jointppol}}\right|\! \right) \\
&\quad = \progress(\jointppol) + 1.
\end{align*}

In each case, we use that $L \geq n |C_{i, t}|$ when using that the weight for the probability,  $\left(\frac{L}{n} \!-\! \left|C_{i+1,\stage{\jointppol}}\right| \right)$, is nonnegative.

Hence, we always have  $\progress(\jointppol') \geq \progress(\jointppol)+1$. 
\end{proof}

This claim implies that it is an invariant of the algorithm that the queue always contains a policy which is not pruned.

\begin{claim}\label{claim:invariant}
 Assume that $L \geq n |C_{i, t}|$ for all $0 \leq t \leq \horizon-1$ and all $i \in \agents$. Until a fully specified policy is found, the queue always contains a policy $\jointppol$ such that $\progress(\jointppol) \geq N$, where $N$ is the number of policies expanded already.
\end{claim}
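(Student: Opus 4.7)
The plan is to prove the invariant by induction on the iteration count of the main loop, where the invariant at the end of iteration $t$ (equivalently, at the start of iteration $t{+}1$) is that the queue contains some partial policy $\jointppol$ with $\progress(\jointppol) \geq N$. The base case is immediate: before any iteration, $N=0$ and the queue contains only the empty policy, whose formula evaluates to $0$, so the invariant holds.

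For the inductive step, let $N_{\text{old}}$ be the value of $N$ at the start of an iteration and consider the three branches of the pseudocode. If the popped policy is fully specified, the algorithm returns and there is nothing more to show, so assume this does not occur. If the popped policy $\jointppol$ satisfies $\progress(\jointppol) < N_{\text{old}}$, it is pruned and $N$ is not incremented; since a policy with progress strictly below $N_{\text{old}}$ cannot be the inductive witness, the witness remains in the queue and the invariant is preserved. The substantive case is when $\jointppol$ passes the pruning check, so $\progress(\jointppol) \geq N_{\text{old}}$, the counter becomes $N_{\text{new}} = N_{\text{old}} + 1$, and at least one successor is pushed.

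Here I use Claim \ref{claim:prog_increases} to provide a witness. In the generic branch, for each local action a search-tree child $\jointppol'$ of $\jointppol$ is pushed, and by Claim \ref{claim:prog_increases},
\[\progress(\jointppol') \geq \progress(\jointppol) + 1 \geq N_{\text{old}} + 1 = N_{\text{new}},\]
so any such child witnesses the invariant. In the last-stage branch, where $a=n$ and $\stage{\jointppol} = h-1$, the algorithm bypasses intermediate search-tree nodes and directly pushes a fully specified policy $\jointpolicy^F$ extending $\jointppol$. Since $\jointpolicy^F$ can be obtained from $\jointppol$ by a chain of at least one search-tree extension step, iterating Claim \ref{claim:prog_increases} along the chain (and extending $\progress$ to fully specified policies by setting $\stage{\jointpolicy^F} = h$, which yields $h \cdot L$) gives $\progress(\jointpolicy^F) \geq \progress(\jointppol) + 1 \geq N_{\text{new}}$. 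In both branches a suitable witness is in the queue after the iteration.

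The main obstacle is the last-stage branch: because the algorithm constructs $\jointpolicy^F$ in one shot rather than as a direct search-tree child of $\jointppol$, Claim \ref{claim:prog_increases} does not apply verbatim, and one must either chain the per-step inequality along the implicit sequence of single-LOH extensions or extend the $\progress$ formula past its nominal domain. Both interpretations rely on the standing hypothesis $L \geq n\,|C_{i,t}|$, which is the same condition assumed in Claim \ref{claim:prog_increases} and which keeps the coefficient $\tfrac{L}{n} - |C_{i+1,\stage{\jointppol}}|$ nonnegative so that progress is monotone along parent-to-child steps.
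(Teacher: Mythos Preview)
Your proof is correct and follows essentially the same inductive argument as the paper: establish the base case, then use Claim~\ref{claim:prog_increases} to show that any expanded policy's children (or the directly-constructed fully specified policy in the last-stage branch) have progress at least $N+1$. Your treatment is in fact more thorough than the paper's, which glosses over both the pruning case and the last-stage shortcut; the only minor deviation is that your base case assumes the initial queue holds the empty policy, whereas the paper notes (in a footnote) that in heuristic mode the initial policy need not be empty and so states the base case simply as $\progress(\jointppol)\geq 0$.
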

\begin{proof}
Initially, we have $N=0$ and the queue starts with a policy $\jointppol$ with $\progress(\jointppol) \geq 0$.\footnote{For heuristic computations, the initial policy $\jointppol$ in the queue is not necessarily the empty policy.} Now suppose that we expand some policy $\jointppol$ when $N$ policies are expanded already. Then $\progress(\jointppol) \geq N$. If $\jointppol$ is not fully specified, then we add its children $\jointppol'$ to the queue which satisfy  \[\progress(\jointppol') \geq \progress(\jointppol)+1 \geq N+1\] by Claim \ref{claim:prog_increases}. Since $N+1$ policies are expanded after expanding  $\jointppol$, this completes the proof.
\end{proof}

\begin{corollary}\label{cor:prune}
Assume that $L \geq n |C_{i, t}|$ for all $0 \leq t \leq \horizon-1$ and all $i \in \agents$. Then
\ourLB finds a fully specified policy within $\horizon \cdot L$ policy expansions.
\end{corollary}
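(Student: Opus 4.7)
The plan is to combine the invariant from Claim~\ref{claim:invariant} with a tight upper bound on $\progress(\jointppol)$ for non-fully-specified partial policies. Specifically, I want to show that $\progress(\jointppol) \leq \horizon L - 1$ whenever $\jointppol$ is not fully specified, so that once $N$ reaches $\horizon L$, the policy whose existence is guaranteed by the invariant must already be fully specified, at which point the algorithm is forced to return.

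For the bound, I would argue as follows. If $\jointppol$ is not fully specified, then $\stage{\jointppol} \leq \horizon - 1$, since otherwise every cluster of length at most $h-1$ would already have an assigned action. Within stage $\stage{\jointppol}$, at least one agent still has an unassigned cluster, so we may take $i \leq n - 1$ and $c \leq |C_{i+1, \stage{\jointppol}}| - 1$. The hypothesis $L \geq n|C_{i+1, \stage{\jointppol}}|$ makes the coefficient $L/n - |C_{i+1, \stage{\jointppol}}|$ of $p$ nonnegative, so maximizing over $p \in [0,1]$ is attained at $p = 1$. Substituting these worst-case values into $\progress$ and collecting terms gives
\[
\progress(\jointppol) \leq (\horizon{-}1)L + (n{-}1)\tfrac{L}{n} + (|C_{i{+}1,\stage{\jointppol}}|{-}1) + \big(\tfrac{L}{n} - |C_{i{+}1,\stage{\jointppol}}|\big) = \horizon L - 1.
\]

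With the bound in hand, the corollary follows by a short termination argument. Immediately after $N$ reaches $\horizon L$, Claim~\ref{claim:invariant} guarantees a queue element $\jointppol^{\star}$ with $\progress(\jointppol^{\star}) \geq \horizon L$, which by the bound must be fully specified. From then on, every non-fully-specified policy popped satisfies $\progress < \horizon L = N$ and is therefore pruned without triggering any expansion, so no new children enter the queue and its size strictly decreases with each pop; since the queue is finite yet, by Claim~\ref{claim:invariant}, non-empty until return, a fully specified policy must eventually be popped and returned. The only nontrivial step is the progress bound: once one checks that the coefficient of $p$ is nonnegative (exactly what the hypothesis $L \geq n|C_{i,t}|$ delivers) and that the maximum values of $\stage{\jointppol}$, $i$, $c$, and $p$ telescope to $\horizon L - 1$, the invariant together with this termination argument assembles the result.
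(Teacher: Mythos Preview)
Your proposal is correct and follows essentially the same approach as the paper: both combine Claim~\ref{claim:invariant} with the observation that $\progress(\jointppol)$ is strictly below $hL$ for non-fully-specified policies, so that after $hL$ expansions the policy guaranteed by the invariant must be fully specified and all remaining non-fully-specified policies are pruned. Your version is somewhat more explicit---you actually derive the bound $\progress(\jointppol)\le hL-1$ by maximizing each term (using the hypothesis to ensure the coefficient of $p$ is nonnegative) and spell out the termination argument, whereas the paper simply asserts that $\progress(\jointppol)\le hL$ with equality iff $\jointppol$ is fully specified---but the logical structure is identical.
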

\begin{proof}
For any partial policy $\jointppol$, we have $\progress(\jointppol) \leq h \cdot L$, with equality if and only if $\jointppol$ is fully specified. Now suppose that we have already expanded $\horizon \cdot L$ policies, but have not yet found a fully specified policy. Then Claim \ref{claim:invariant} implies that the queue contains a policy with  $\progress(\jointppol) \leq h \cdot L$, which is hence a fully specified policy. Moreover, we prune all policies which are not fully specified policy, so after $\horizon \cdot L$ policy expansions \ourLB returns the fully specified policy with the highest value in the queue if it has not returned a fully specified policy earlier already.
\end{proof}

\section{Horizon reduction}\label{appD}
  
We now give further explanation for the concept of horizon reduction. For both \ourLB and \ourUB, we have a \emph{policy-finding mode}, a \emph{horizon reduction step} and a \emph{heuristic (computation) mode}. In the initial call, both algorithms are in the policy-finding mode. In all further (recursive) calls, \ourLB and \ourUB make a horizon reduction step when called with a horizon $h' > r$ and are in heuristic mode when called with a horizon $h' \leq r$. 
In policy-finding mode, no horizon reduction is applied, since this precludes finding a policy for the full horizon $h$.\footnote{For \ourUB, it would be possible to start in horizon reduction mode. Then \ourUB terminates with a policy for a horizon $r$ Dec-POMDP with terminal rewards, and the corresponding policy value is an upper bound. Initial experiments indicate that this is typically faster, but gives worse bounds.} 
In a horizon reduction step, we reduce computing a heuristic for a horizon $\horizon'$ Dec-POMDP to computing a heuristic for a horizon $r$ Dec-POMDP with terminal rewards representing the remaining $\horizon'-r$ stages (as explained in Section~\ref{sec:loose_heuristics} and \ref{sec:terminal_heuristic} of the main paper). In heuristic mode, we do not have to find a policy, but only return an upper bound for the value of an optimal policy.

In policy-finding and heuristic mode, \ourLB and \ourUB behave similarly to \rsmaa \cite{DBLP:conf/ijcai/Koops0JS23}. 
In particular, we compute a heuristic corresponding to revealing the $d$th cluster for each agent, which in practice means that we consider all possible joint clusters $\jointcluster \in \bigtimes_{i \in \agents} C_{i,d}$ at stage $d$. 
For each cluster $\jointcluster$, we consider the  shortened partial policy $\jointppol|_{\jointcluster}$, which gives the actions the agents take when receiving particular observations from stage $d$ onwards. 
Then we compute heuristics for horizon $h' - d$ with initial policy $\jointppol|_{\jointcluster}$, and weigh these with the probabilities of the clusters $\jointcluster$ to compute a heuristic. In both modes, $d$ starts at 1. 
The difference between these modes is as follows.
In heuristic mode, $d$ increases to $d'$ when using the $Q_{d'}$ heuristic. In policy-finding mode for \ourLB, $d$ increases further as the stage of the partial policies $\jointppol$ in the queue increase.

\begin{figure}[t]
\centering
\begin{tikzpicture}[x=0.75pt,y=0.75pt,yscale=-0.9,xscale=1]

\draw   (90,11) -- (160,11) -- (160,60.5) -- (90,60.5) -- cycle ;
\draw    (124,60) -- (124.15,77.5) ;
\draw [shift={(124.17,79.5)}, rotate = 269.51] [color={rgb, 255:red, 0; green, 0; blue, 0 }  ][line width=0.75]    (10.93,-3.29) .. controls (6.95,-1.4) and (3.31,-0.3) .. (0,0) .. controls (3.31,0.3) and (6.95,1.4) .. (10.93,3.29)   ;
\draw   (90,80) -- (160,80) -- (160,129.5) -- (90,129.5) -- cycle ;
\draw    (125,129) -- (125.16,158.5) ;
\draw [shift={(125.17,160.5)}, rotate = 269.7] [color={rgb, 255:red, 0; green, 0; blue, 0 }  ][line width=0.75]    (10.93,-3.29) .. controls (6.95,-1.4) and (3.31,-0.3) .. (0,0) .. controls (3.31,0.3) and (6.95,1.4) .. (10.93,3.29)   ;
\draw   (90,160) -- (160,160) -- (160,209.5) -- (90,209.5) -- cycle ;
\draw    (160.5,173) .. controls (178,170) and (178,145)  .. (178,130) ;
\draw    (178,130) .. controls (180, 95) and (215,95)  .. (215.14,139.05) ;
\draw [shift={(215.17,140.67)}, rotate = 270] [color={rgb, 255:red, 0; green, 0; blue, 0 }  ][line width=0.75]    (10.93,-3.29) .. controls (6.95,-1.4) and (3.31,-0.3) .. (0,0) .. controls (3.31,0.3) and (6.95,1.4) .. (10.93,3.29)   ;
\draw   (180,140) -- (250,140) -- (250,171.67) -- (180,171.67) -- cycle ;
\draw  [dash pattern={on 0.84pt off 2.51pt}]  (215.14,172.02) .. controls (214.75,196.94) and (206.37,195.65) .. (161.4,194.21) ;
\draw [shift={(160.05,194.17)}, rotate = 1.8] [color={rgb, 255:red, 0; green, 0; blue, 0 }  ][line width=0.75]    (10.93,-3.29) .. controls (6.95,-1.4) and (3.31,-0.3) .. (0,0) .. controls (3.31,0.3) and (6.95,1.4) .. (10.93,3.29)   ;

\draw (125, 20) node [anchor=center][inner sep=0.75pt]   [align=center] {\ourLB};
\draw (125, 37) node [anchor=center][inner sep=0.75pt]   [align=center] {{\small policy-finding}};
\draw (125, 50) node [anchor=center][inner sep=0.75pt]   [align=center] {$\horizon = 100$};
\draw (125, 89) node [anchor=center][inner sep=0.75pt]   [align=center] {\ourLB};
\draw (125, 104) node [anchor=center][inner sep=0.75pt]   [align=center] {{\small hor. reduction}};
\draw (125, 119) node [anchor=center][inner sep=0.75pt]   [align=center] {$\horizon = 99$};
\draw (125, 169) node [anchor=center][inner sep=0.75pt]   [align=center] {\ourLB};
\draw (125, 184) node [anchor=center][inner sep=0.75pt]   [align=center] {{\small heuristic}};
\draw (125, 199) node [anchor=center][inner sep=0.75pt]   [align=center] {$\horizon = 3$};
\draw (215, 149) node [anchor=center][inner sep=0.75pt]   [align=center] {MDP};
\draw (215, 163) node [anchor=center][inner sep=0.75pt]   [align=center] {$\horizon = 96$};
\draw (128, 138) node [anchor=north west][inner sep=0.75pt]   [align=left] {{\small horizon}};
\draw (128, 148) node [anchor=north west][inner sep=0.75pt]   [align=left] {{\small reduction}};
\draw (170, 173) node [anchor=north west][inner sep=0.75pt]   [align=left] {{\small terminal}};
\draw (170, 182) node [anchor=north west][inner sep=0.75pt]   [align=left] {{\small reward}};
\end{tikzpicture}
\caption{Schematic call graph for \ourLB, for $r=3$. A dotted line indicates that a result is provided.}\label{fig:callgraph}
\end{figure}
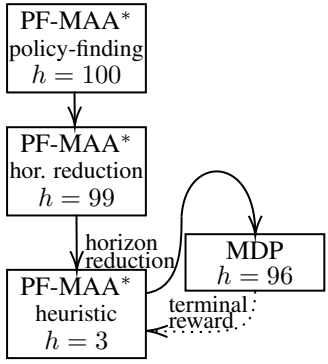

\paragraph{Call graph for \ourLB.} Figure \ref{fig:callgraph} shows a schematic call graph for \ourLB (for $r=3$). Initially, we have $d=1$ and call \ourLB in horizon reduction mode with horizon 99. Then the horizon is reduced to 3, and the corresponding call to \ourLB in heuristic mode uses the terminal rewards, which are values of an MDP with horizon 96, when needed.  In the figure, we only show these initial calls. Later in the execution of the algorithm, $d$ increases; then the initial call of \ourLB in policy-finding mode makes calls to \ourLB with horizon $100-d$, which will in turn make a horizon reduction if $100-d \geq r$, and use values of an MDP with horizon $100-d-r$ as terminal rewards.

\paragraph{Call graph for \ourUB.}  Figure \ref{fig:callgraphUB} shows a schematic call graph for \ourUB (for $r=5$). Initially, we have $d=1$ and call \ourUB in horizon reduction mode with horizon 99. Then the horizon is reduced to 5, and the corresponding call to \ourUB in heuristic mode uses the terminal rewards, for which it makes calls to \ourUB with horizon 94. This again makes a horizon reduction, and the corresponding call to \ourUB in heuristic mode with horizon 5 uses terminal rewards with horizon 89, which in turn uses  terminal rewards with horizon 84, etc., until we reach horizon 4 for which we directly compute a heuristic. As for \ourLB, the figure only shows the case where $d=1$, while $d$ can increase during the  execution of the algorithm.

\begin{figure}[t]
\centering
\begin{tikzpicture}[x=0.75pt,y=0.75pt,yscale=-0.9,xscale=1]

\draw   (179.82,20) -- (249.82,20) -- (249.82,69.15) -- (179.82,69.15) -- cycle ;
\draw   (179.82,90) -- (249.82,90) -- (249.82,139.15) -- (179.82,139.15) -- cycle ;
\draw   (179.82,160) -- (249.82,160) -- (249.82,209.15) -- (179.82,209.15) -- cycle ;
\draw   (279.82,90) -- (349.82,90) -- (349.82,139.15) -- (279.82,139.15) -- cycle ;
\draw   (279.82,160) -- (349.82,160) -- (349.82,209.15) -- (279.82,209.15) -- cycle ;
\draw   (380.82,90) -- (450.82,90) -- (450.82,139.15) -- (380.82,139.15) -- cycle ;
\draw   (380.82,160) -- (450.82,160) -- (450.82,209.15) -- (380.82,209.15) -- cycle ;
\draw    (214.82,69) -- (214.91,88.15) ;
\draw [shift={(214.92,90.15)}, rotate = 269.73] [color={rgb, 255:red, 0; green, 0; blue, 0 }  ][line width=0.75]    (10.93,-3.29) .. controls (6.95,-1.4) and (3.31,-0.3) .. (0,0) .. controls (3.31,0.3) and (6.95,1.4) .. (10.93,3.29)   ;
\draw    (214.82,139) -- (214.91,158.15) ;
\draw [shift={(214.92,160.15)}, rotate = 269.73] [color={rgb, 255:red, 0; green, 0; blue, 0 }  ][line width=0.75]    (10.93,-3.29) .. controls (6.95,-1.4) and (3.31,-0.3) .. (0,0) .. controls (3.31,0.3) and (6.95,1.4) .. (10.93,3.29)   ;
\draw    (314.82,139) -- (314.91,158.15) ;
\draw [shift={(314.92,160.15)}, rotate = 269.73] [color={rgb, 255:red, 0; green, 0; blue, 0 }  ][line width=0.75]    (10.93,-3.29) .. controls (6.95,-1.4) and (3.31,-0.3) .. (0,0) .. controls (3.31,0.3) and (6.95,1.4) .. (10.93,3.29)   ;
\draw    (414.82,139) -- (414.91,158.15) ;
\draw [shift={(414.92,160.15)}, rotate = 269.73] [color={rgb, 255:red, 0; green, 0; blue, 0 }  ][line width=0.75]    (10.93,-3.29) .. controls (6.95,-1.4) and (3.31,-0.3) .. (0,0) .. controls (3.31,0.3) and (6.95,1.4) .. (10.93,3.29)   ;
\draw  [dash pattern={on 0.84pt off 2.51pt}]  (279.63,190.13) -- (251.63,190.13) ;
\draw [shift={(249.63,190.13)}, rotate = 360] [color={rgb, 255:red, 0; green, 0; blue, 0 }  ][line width=0.75]    (10.93,-3.29) .. controls (6.95,-1.4) and (3.31,-0.3) .. (0,0) .. controls (3.31,0.3) and (6.95,1.4) .. (10.93,3.29)   ;
\draw  [dash pattern={on 0.84pt off 2.51pt}]  (379.63,190.13) -- (351.63,190.13) ;
\draw [shift={(349.63,190.13)}, rotate = 360] [color={rgb, 255:red, 0; green, 0; blue, 0 }  ][line width=0.75]    (10.93,-3.29) .. controls (6.95,-1.4) and (3.31,-0.3) .. (0,0) .. controls (3.31,0.3) and (6.95,1.4) .. (10.93,3.29)   ;
\draw    (269.9,90.22) .. controls (286.48,10.62) and (313.27,64.81) .. (314.6,87.96) ;
\draw [shift={(314.65,89.68)}, rotate = 270] [color={rgb, 255:red, 0; green, 0; blue, 0 }  ][line width=0.75]    (10.93,-3.29) .. controls (6.95,-1.4) and (3.31,-0.3) .. (0,0) .. controls (3.31,0.3) and (6.95,1.4) .. (10.93,3.29)   ;
\draw    (249.8,170) .. controls (260.07,150.65) and (255.07,175.65) .. (269.9,90.22) ;
\draw    (369.9,91.22) .. controls (386.48,11.62) and (413.27,65.81) .. (414.6,88.96) ;
\draw [shift={(414.65,90.68)}, rotate = 270] [color={rgb, 255:red, 0; green, 0; blue, 0 }  ][line width=0.75]    (10.93,-3.29) .. controls (6.95,-1.4) and (3.31,-0.3) .. (0,0) .. controls (3.31,0.3) and (6.95,1.4) .. (10.93,3.29)   ;
\draw    (349.8,171) .. controls (360.07,151.65) and (355.07,176.65) .. (369.9,91.22) ;

\draw (214.82, 29) node [anchor=center][inner sep=0.75pt]   [align=center] {\ourUB};
\draw (214.82, 46) node [anchor=center][inner sep=0.75pt]   [align=center] {{\small policy-finding}};
\draw (214.82, 59) node [anchor=center][inner sep=0.75pt]   [align=center] {$\horizon = 100$};
\draw (214.82, 100) node [anchor=center][inner sep=0.75pt]   [align=center] {\ourUB};
\draw (214.82, 115) node [anchor=center][inner sep=0.75pt]   [align=center] {{\small hor. reduction}};
\draw (214.82, 130) node [anchor=center][inner sep=0.75pt]   [align=center] {$\horizon = 99$};
\draw (214.82, 170) node [anchor=center][inner sep=0.75pt]   [align=center] {\ourUB};
\draw (214.82, 185) node [anchor=center][inner sep=0.75pt]   [align=center] {{\small heuristic}};
\draw (214.82, 200) node [anchor=center][inner sep=0.75pt]   [align=center] {$\horizon = 5$};
\draw (314.82, 100) node [anchor=center][inner sep=0.75pt]   [align=center] {\ourUB};
\draw (314.82, 115) node [anchor=center][inner sep=0.75pt]   [align=center] {{\small hor. reduction}};
\draw (314.82, 130) node [anchor=center][inner sep=0.75pt]   [align=center] {$\horizon = 94$};
\draw (314.82, 170) node [anchor=center][inner sep=0.75pt]   [align=center] {\ourUB};
\draw (314.82, 185) node [anchor=center][inner sep=0.75pt]   [align=center] {{\small heuristic}};
\draw (314.82, 200) node [anchor=center][inner sep=0.75pt]   [align=center] {$\horizon = 5$};
\draw (415.82, 100) node [anchor=center][inner sep=0.75pt]   [align=center] {\ourUB};
\draw (415.82, 115) node [anchor=center][inner sep=0.75pt]   [align=center] {{\small hor. reduction}};
\draw (415.82, 130) node [anchor=center][inner sep=0.75pt]   [align=center] {$\horizon = 89$};
\draw (415.82, 170) node [anchor=center][inner sep=0.75pt]   [align=center] {\ourUB};
\draw (415.82, 185) node [anchor=center][inner sep=0.75pt]   [align=center] {{\small heuristic}};
\draw (415.82, 200) node [anchor=center][inner sep=0.75pt]   [align=center] {$\horizon = 5$};
\draw (447, 150) node [anchor=center][inner sep=0.75pt]   [align=center] {{\Huge ...}};
\end{tikzpicture}

\caption{Schematic call graph for \ourUB, for $r=5$. A dotted line indicates that a result is provided.}\label{fig:callgraphUB}
\end{figure}
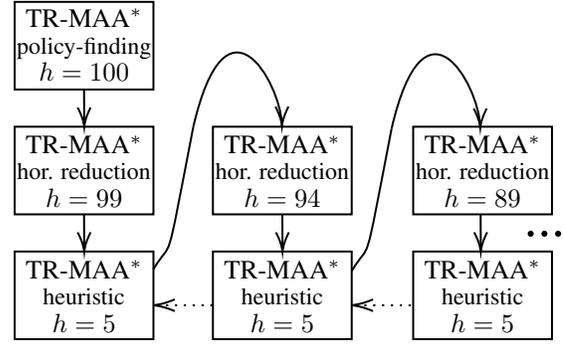

\section{Empirical Evaluation}\label{appE}

In this section we give further details regarding the empirical evaluation. In particular, we discuss the hyperparameter selection and explain how we computed the best known lower/upper bounds provided in the tables. Finally, we give detailed results (including timings) for all benchmarks we tested on and discuss the results further.

\subsubsection{Hyperparameter Selection}

We give additional information on the selection of the hyperparameters. 
For \ourLB, the main hyperparameters are the window size $\windowsize$ for the sliding-window policies (Sec.~\ref{sec:clustering}), 
whether to use the maximum reward heuristic $Q_{\textsf{maxr}, r}$ or the terminal reward MDP heuristic $Q_{\textsf{MDP}, r}$ (Sec.~\ref{sec:loose_heuristics}), the depth $r$ of the heuristic, and the iteration limit $L$ per stage (Sec.~\ref{sec:loose_heuristics} and App.~\ref{appC}). 
Initial experiments indicated that $\windowsize=1$ is too small for most benchmarks, whereas for $\windowsize = 4$, the search space becomes too large, which generally led to worse results than using $\windowsize = 3$.
We also found that $r=1$ clearly guides the search worse than larger $r$, whereas $r \geq 4$ is too expensive (since it needs to be computed for all $\horizon \cdot L$ policies).
This yielded eight different configurations:
Using the heuristics $Q_{\textsf{maxr}, r}$ and $Q_{\textsf{MDP}, r}$ with $\windowsize \in \{2,3\}$ and $r \in \{2,3\}$. 
Based on our tests, we selected  three configurations for the quality runs with a 600 second time limit: ($k=2$, $Q_{\textsf{MDP}, 2}$), ($k=3$, $Q_{\textsf{MDP}, 2}$), and ($k=3$, $Q_{\textsf{maxr}, 3}$). 
For $k=2$ we use $L = 1000$, while for $k=3$ we use $L=10000$. Increasing $L$ yields higher running times, and typically also better results. However, for the smaller search space for $k=2$, using $L = 1000$ is usually sufficient. For the fast runs with a time limit of 60 seconds, we selected the cheapest heuristic $Q_{\textsf{MDP}, 1}$ and chose $L$ small, but still large enough to amply satisfy the constraint $L \geq n|C_{i,t}|$ from Corollary \ref{cor:prune}. In each case, the $Q_3$ heuristic \cite{DBLP:conf/ijcai/Koops0JS23} is used to solve the Dec-POMDPs.

For \ourUB, the main parameter is the depth $r$ and the heuristic $Q_{d'}$ chosen for solving the horizon-$r$ Dec-POMDPs. 
Higher values of $r$ and lower values of $d'$ lead to tighter, more expensive heuristics. For small horizons, it can be better to choose a slightly cheaper heuristic that allows for more exploration in the search tree and hence better bounds for the reward in the initial stages. For larger horizons (e.g.\ 50), it appears in general to be best to choose $r$ as large as possible, while still ensuring that the heuristic for the root of the tree can be computed within the time limit.  We selected $r=5$ with the $Q_3$ heuristic. Like in \rsmaa, $Q_1$ and $Q_2$ are too expensive and therefore require lower values of $r$, while $Q_\infty$ leads to relatively weak bounds on the horizon-$r$ Dec-POMDPs, especially for large $r$.

\subsubsection{Best Known Lower/Upper Bounds}

The best known lower bound (in Table \ref{tab:resultsUB} and \ref{tab:upperall}) was computed as the largest policy value found by FB-HSVI, GA-FSC, or any configuration of \rsmaa or \ourLB used in this paper or in \citet{DBLP:conf/ijcai/Koops0JS23}. The best known upper bound (in Table \ref{tab:resultsLB} and \ref{tab:lowerall}) was computed as the smallest bound computed by any configuration of \rsmaa or \ourUB used in this paper or in \citet{DBLP:conf/ijcai/Koops0JS23}.

We remark that for FB-HSVI, some upper bounds are not valid as we have found policies that exceed these bounds.

\subsubsection{Further Discussion of the Results}

Table \ref{tab:allconfigs} shows the values and timings for each of the configurations for \ourLB.
Table \ref{tab:lowerall} and \ref{tab:upperall} are similar to Table \ref{tab:resultsLB} and \ref{tab:resultsUB} in the main text, but now also show the timings, and list the two baseline configurations for \ourLB using $Q_{\mathsf{MDP}}$ separately in Table \ref{tab:lowerall}. An (f) indicates fast results. We now provide further observations regarding the results.

\paragraph{Providing a solution helps for \ourUB and \rsmaa.} Providing a near-optimal solution can significantly improve the performance of \ourUB and \rsmaa, yielding up to a factor 2 reduction in running time and a factor 4 reduction in (peak) memory usage.

\paragraph{Exact solving.} \ourUB is the first solver to compute the exact value for \grid{} $h = 7$, namely 4.4739533. This paper is also the first to report the exact value for \gridthree{} $h = 7$, namely 2.1923737. However, this result can also be achieved by \rsmaa when providing a near-optimal solution.

\paragraph{$Q_{\mathsf{MDP}, r}$ vs. $Q_{\mathsf{maxr}, r}$.} The terminal reward MDP heuristic $Q_{\mathsf{MDP}, 2}$ works well on all benchmarks except \dectiger for $k=3$. On the other hand, $Q_{\mathsf{maxr}, 3}$ works very well for \dectiger and \boxpush, but poorly on other benchmarks. This is primarily due to the increase to $r=3$: $Q_{\mathsf{maxr}, 2}$ performs poorly on these benchmarks. However, for \dectiger and \boxpush $Q_{\mathsf{maxr}, 3}$ also yields (slightly) better policies than $Q_{\mathsf{MDP}, 3}$, besides being more scalable.

\paragraph{For \grid{}, heuristic reuse is very hard.} For \grid{}, relatively few action-observation histories lead to the same belief. As a result, \rsmaa reaches the memory limit for $h \geq 7$, and also \ourUB reaches the memory limit for the more expensive $r=5$ and higher horizons. 

\paragraph{For \ourUB, $r$ can be benchmark-specific for further improvements.} Usually larger $r$ yield better upper bounds, but some benchmarks have a specific period for stages at which revealing the state does not give too much information. This is for example the case for \mars, where $r=3$ gives better bounds than $r=5$. However, $r=6$ gives better results than $r=3$ (as expected due to the divisibility): upper bounds of 134.38 for $h=50$ and 269.24 for $h = 100$.

\begin{table}[h!]
\resizebox{\columnwidth}{!}{
\addtolength{\tabcolsep}{-0.17253em}
\begin{tabular}{@{}r|rr|rr|rr|rr|rr|rr@{}} \hline
& \multicolumn{2}{|c}{1,$Q_{\mathsf{MDP},1}$,20} & \multicolumn{2}{|c}{2,$Q_{\mathsf{MDP},1}$,100} & \multicolumn{2}{|c}{3,$Q_{\mathsf{MDP},1}$,100} & \multicolumn{2}{|c}{2,$Q_{\mathsf{MDP},2}$,$10^3$} & \multicolumn{2}{|c}{3,$Q_{\mathsf{MDP},2}$,$10^4$} & \multicolumn{2}{|c}{3,$Q_{\mathsf{maxr},3}$,$10^4$} \\ \hline
$h$ & value & time & value & time & value & time & value & time & value & time & value & time \\ \hline
\multicolumn{13}{c}{ \dectiger } \\ \hline
6 &  \textminus 10.68 & \textless 1 & 10.38 & \textless 1 & 10.38 & \textless 1 & 10.38 & 1 & 10.38 & 3 & 10.38 & 1 \\
7 &  \textminus 12.68 & \textless 1 & 8.38 & \textless 1 & 8.37 & \textless 1 & 8.38 & 2 & 7.45 & 4 & 9.99 & 4 \\
8 &  \textminus 14.68 & \textless 1 & 6.38 & \textless 1 & 6.58 & \textless 1 & 7.25 & 2 & 5.05 & 6 & 8.36 & 8 \\
9 &  \textminus 16.68 & \textless 1 & 15.57 & \textless 1 & 10.37 & 1 & 15.57 & 2 & 9.16 & 6 & 15.57 & 6 \\
10 &  \textminus 18.68 & \textless 1 & 13.57 & 1 & 11.66 & 1 & 13.57 & 2 & 8.92 & 7 & 15.18 & 9 \\
12 &  \textminus 22.68 & \textless 1 & 20.76 & 1 & 8.49 & 1 & 20.76 & 2 & 7.58 & 8 & 20.76 & 10 \\
15 &  \textminus 27.55 & \textless 1 & 25.95 & 1 & 7.14 & 1 & 25.95 & 2 & 6.74 & 10 & 25.95 & 12 \\
20 &  \textminus 38.68 & \textless 1 & 27.14 & 1 &  \textminus 0.09 & 1 & 28.01 & 3 & 7.02 & 13 & 29.12 & 17 \\
30 &  \textminus 58.68 & \textless 1 & 51.91 & 1 &  \textminus 18.93 & 1 & 51.91 & 3 & 3.20 & 21 & 51.91 & 17 \\
40 &  \textminus 78.68 & \textless 1 & 65.48 & 1 &  \textminus 31.03 & 1 & 65.48 & 4 & 2.60 & 28 & 67.09 & 22 \\
50 &  \textminus 98.68 & \textless 1 & 79.05 & 1 &  \textminus 49.27 & 2 & 79.92 & 5 & 8.86 & 34 & 81.03 & 29 \\
100 &  \textminus 198.68 & 1 & 169.30 & 1 &  \textminus 131.09 & 2 & 169.30 & 7 & 15.43 & 69 & 170.91 & 47 \\\hline
\multicolumn{13}{c}{ \grid } \\ \hline
4 & 2.176 & \textless 1 & 2.232 & \textless 1 & 2.225 & \textless 1 & 2.241 & 1 & 2.242 & 1 & 2.242 & 1 \\
5 & 2.916 & \textless 1 & 2.915 & 1 & 2.854 & 1 & 2.969 & 2 & 2.970 & 2 & 2.970 & 3 \\
6 & 3.616 & \textless 1 & 3.664 & 1 & 3.623 & 1 & 3.715 & 7 & 3.717 & 7 & 3.717 & 13 \\
7 & 4.319 & \textless 1 & 4.414 & 1 & 4.352 & 1 & 4.471 & 11 & 4.474 & 16 & 4.474 & 34 \\
8 & 5.023 & \textless 1 & 5.168 & 1 & 5.140 & 1 & 5.248 & 18 & 5.229 & 24 & 5.234 & 64 \\
9 & 5.727 & \textless 1 & 5.925 & 1 & 5.898 & 1 & 6.032 & 27 & 6.032 & 41 & 5.999 & 128 \\
10 & 6.431 & \textless 1 & 6.684 & 1 & 6.661 & 1 & 6.817 & 37 & 6.821 & 54 & 6.764 & 164 \\
12 & 7.840 & 1 & 8.203 & 2 & 8.195 & 2 & 8.391 & 57 & 8.414 & 84 & MO & 216 \\
15 & 9.953 & 1 & 10.485 & 2 & 10.507 & 2 & 10.754 & 82 & 10.810 & 154 & MO & 215 \\
20 & 13.475 & 1 & 14.288 & 2 & 14.357 & 2 & 14.693 & 139 & MO & 161 & MO & 215 \\
50 & 34.606 & 2 & 37.105 & 4 & 37.461 & 4 & MO & 155 & MO & 161 & MO & 216 \\
100 & 69.825 & 2 & 75.133 & 6 & 75.966 & 8 & MO & 150 & MO & 159 & MO & 215 \\\hline
\multicolumn{13}{c}{ \boxpush } \\ \hline
4 & 98.17 & \textless 1 & 98.59 & \textless 1 & 98.59 & \textless 1 & 98.59 & 1 & 98.59 & 1 & 98.59 & 1 \\
5 & 107.65 & \textless 1 & 107.73 & \textless 1 & 107.73 & \textless 1 & 107.71 & 1 & 107.73 & 1 & 107.69 & 3 \\
6 & 114.40 & \textless 1 & 115.90 & \textless 1 & 115.63 & \textless 1 & 119.74 & 1 & 120.00 & 4 & 120.68 & 8 \\
7 & 150.09 & \textless 1 & 155.25 & 1 & 155.99 & 1 & 155.35 & 2 & 155.99 & 3 & 155.91 & 9 \\
8 & 186.03 & \textless 1 & 187.09 & 1 & 191.20 & 1 & 187.09 & 2 & 191.20 & 5 & 191.23 & 10 \\
9 & 203.06 & \textless 1 & 203.87 & 1 & 210.26 & 1 & 203.87 & 2 & 210.26 & 5 & 210.26 & 11 \\
10 & 218.32 & \textless 1 & 219.12 & 1 & 221.12 & 1 & 219.12 & 2 & 223.70 & 5 & 224.28 & 11 \\
12 & 265.11 & \textless 1 & 266.77 & 1 & 282.70 & 1 & 270.59 & 3 & 283.79 & 8 & 284.15 & 13 \\
15 & 326.87 & \textless 1 & 325.52 & 1 & 342.99 & 1 & 324.88 & 2 & 346.95 & 10 & 349.18 & 14 \\
20 & 421.35 & 1 & 423.19 & 1 & 466.34 & 1 & 420.94 & 3 & 471.64 & 14 & 474.97 & 17 \\
30 & 625.60 & 1 & 612.03 & 1 & 718.76 & 2 & 619.01 & 4 & 720.68 & 59 & 721.28 & 22 \\
40 & 844.82 & 1 & 811.27 & 1 & 963.02 & 2 & 815.93 & 5 & 964.54 & 121 & 965.09 & 27 \\
50 & 991.02 & 1 & 977.79 & 2 & 1207.67 & 3 & 1012.03 & 5 & 1209.26 & 105 & 1209.80 & 33 \\
100 & 1905.81 & 2 & 1892.52 & 2 & 2431.40 & 5 & 1993.41 & 9 & 2432.96 & 331 & 2433.51 & 61 \\\hline
\multicolumn{13}{c}{ \ff } \\ \hline
4 &  \textminus 6.626 & \textless 1 &  \textminus 6.672 & \textless 1 &  \textminus 6.586 & \textless 1 &  \textminus 6.626 & 1 &  \textminus 6.579 & 1 &  \textminus 6.579 & 1 \\
5 &  \textminus 7.096 & \textless 1 &  \textminus 7.294 & 1 &  \textminus 7.148 & 1 &  \textminus 7.095 & 3 &  \textminus 7.073 & 3 &  \textminus 7.073 & 4 \\
6 &  \textminus 7.379 & \textless 1 &  \textminus 7.661 & 1 &  \textminus 7.673 & 1 &  \textminus 7.196 & 1 &  \textminus 7.176 & 1 &  \textminus 7.176 & 14 \\
10 &  \textminus 7.196 & \textless 1 &  \textminus 7.196 & \textless 1 &  \textminus 7.255 & 1 &  \textminus 7.196 & 1 &  \textminus 7.176 & 1 &  \textminus 7.176 & 14 \\
100 &  \textminus 7.196 & \textless 1 &  \textminus 7.196 & 1 &  \textminus 7.256 & 1 &  \textminus 7.196 & 1 &  \textminus 7.176 & 1 &  \textminus 7.176 & 14 \\
2000 &  \textminus 7.196 & 2 &  \textminus 7.196 & 2 &  \textminus 7.256 & 3 &  \textminus 7.196 & 3 &  \textminus 7.176 & 3 &  \textminus 7.176 & 15 \\\hline
\multicolumn{13}{c}{ \gridthree } \\ \hline
5 & 0.90 & \textless 1 & 0.90 & 1 & 0.90 & 1 & 0.90 & 1 & 0.90 & 1 & 0.90 & 7 \\
6 & 1.49 & \textless 1 & 1.49 & 1 & 1.49 & 1 & 1.49 & 2 & 1.49 & 3 & 1.49 & 16 \\
7 & 2.17 & 1 & 2.17 & 1 & 2.17 & 1 & 2.19 & 4 & 2.19 & 7 & 2.18 & 19 \\
8 & 2.93 & 1 & 2.96 & 1 & 2.96 & 1 & 2.97 & 6 & 2.97 & 11 & 2.92 & 29 \\
9 & 3.75 & 1 & 3.80 & 1 & 3.80 & 1 & 3.81 & 9 & 3.81 & 21 & 3.70 & 34 \\
10 & 4.60 & 1 & 4.67 & 1 & 4.67 & 1 & 4.68 & 12 & 4.68 & 23 & 4.50 & 46 \\
12 & 6.50 & 1 & 6.50 & 1 & 6.50 & 1 & 6.53 & 18 & 6.53 & 45 & 6.13 & 56 \\
15 & 9.40 & 1 & 9.39 & 2 & 9.39 & 2 & 9.42 & 26 & 9.42 & 70 & 8.61 & 83 \\
20 & 14.34 & 1 & 14.33 & 2 & 14.33 & 3 & 14.37 & 46 & 14.37 & 122 & 12.76 & 123 \\
30 & 24.32 & 1 & 24.32 & 3 & 24.32 & 4 & 24.35 & 69 & 24.35 & 246 & 21.07 & 201 \\
40 & 34.32 & 1 & 34.32 & 4 & 34.32 & 6 & 34.35 & 102 & 34.35 & 416 & 29.39 & 273 \\
50 & 44.32 & 1 & 44.32 & 6 & 44.32 & 8 & 44.35 & 130 & 44.35 & 556 & 37.70 & 366 \\
100 & 94.32 & 2 & 94.32 & 9 & 94.32 & 14 & 94.35 & 202 & TO & 600 & 79.26 & 554 \\\hline
\multicolumn{13}{c}{ \mars } \\ \hline
6 & 16.98 & \textless 1 & 16.99 & 1 & 16.99 & 1 & 18.62 & 2 & 18.62 & 3 & 18.62 & 3 \\
7 & 20.60 & \textless 1 & 20.60 & 1 & 20.60 & 1 & 20.90 & 2 & 20.90 & 3 & 20.90 & 10 \\
8 & 20.57 & \textless 1 & 20.65 & 1 & 20.41 & 1 & 22.48 & 2 & 22.48 & 4 & 21.13 & 12 \\
9 & 23.92 & \textless 1 & 24.02 & 1 & 24.02 & 1 & 24.31 & 3 & 24.32 & 5 & 20.99 & 12 \\
10 & 23.58 & 1 & 24.44 & 1 & 24.44 & 1 & 26.31 & 4 & 26.31 & 14 & 20.80 & 12 \\
12 & 30.94 & 1 & 31.01 & 1 & 31.00 & 1 & 32.09 & 4 & 32.05 & 25 & 20.40 & 13 \\
15 & 33.66 & 1 & 34.23 & 1 & - & 2 & 37.72 & 6 & 37.47 & 53 & 19.80 & 15 \\
20 & 49.08 & 1 & 44.87 & 2 & - & 1 & 49.37 & 13 & 49.15 & 81 & 18.80 & 20 \\
30 & 74.54 & 1 & 75.15 & 3 & - & 2 & 77.47 & 21 & 77.21 & 199 & 16.80 & 36 \\
40 & 95.39 & 1 & 96.06 & 3 & - & 1 & 100.02 & 25 & 94.12 & 269 & 14.80 & 91 \\
50 & 116.35 & 1 & 117.03 & 4 & - & 2 & 122.56 & 37 & 117.33 & 266 & 29.40 & 376 \\
100 & 221.14 & 2 & 221.97 & 9 & - & 2 & 234.06 & 89 & TO & 600 & TO & 600 \\\hline
\multicolumn{13}{c}{ \hotel } \\ \hline
100 & 502.2 & \textless 1 & 502.2 & 1 & 502.2 & 1 & 502.2 & 3 & 502.2 & 19 & 502.2 & 18 \\
500 & 2502.2 & 1 & 2502.2 & 2 & 2502.2 & 2 & 2502.2 & 21 & MO & 22 & 2502.2 & 138 \\
1000 & 5002.2 & 2 & 5002.2 & 5 & 5002.2 & 5 & MO & 22 & MO & 21 & 5002.2 & 388 \\
2000 & 10002.2 & 4 & 10002.2 & 16 & 10002.2 & 17 & MO & 20 & MO & 21 & TO & 600 \\\hline
\multicolumn{13}{c}{ \recycling } \\ \hline
100 & 308.79 & 1 & 308.79 & 1 & 308.79 & 1 & 308.79 & 5 & 308.79 & 34 & 308.79 & 29 \\
500 & 1539.56 & 1 & 1539.56 & 2 & 1539.56 & 2 & 1539.56 & 19 & 1539.56 & 183 & 1539.56 & 158 \\
1000 & 3078.02 & 1 & 3078.02 & 4 & 3078.02 & 4 & 3078.02 & 46 & 3078.02 & 511 & 3078.02 & 470 \\
2000 & 6154.94 & 3 & 6154.94 & 10 & 6154.94 & 9 & 6154.94 & 135 & TO & 600 & TO & 600 \\\hline
\multicolumn{13}{c}{ \broadcast } \\ \hline
100 & 90.74 & \textless 1 & 90.56 & 1 & 90.56 & 1 & 90.76 & 2 & 90.75 & 17 & 90.76 & 14 \\
500 & 452.69 & 1 & 450.56 & 2 & 450.56 & 2 & 452.72 & 11 & 452.73 & 117 & 452.05 & 100 \\
1000 & 905.09 & 1 & 900.56 & 3 & 900.56 & 3 & 905.18 & 28 & MO & 222 & 903.17 & 287 \\
2000 & 1809.87 & 2 & 1800.56 & 10 & 1800.56 & 9 & 1810.07 & 89 & MO & 200 & TO & 600 \\\hline
\end{tabular}
}
\caption{Results for each configuration of \ourLB listed separately. MO and TO denote memout (\textgreater 16GB) and timeout (\textgreater 600s). For \mars, with $k=3$ and $L=100$ there are more clusters than iterations for $h \geq 15$, and hence the solver does not provide a result. } \label{tab:allconfigs}
\end{table}

\newpage

\begin{table}
\resizebox{\columnwidth}{!}{
\addtolength{\tabcolsep}{-0.3747em}
\begin{tabular}{@{}r|rr|rr|rr|rr|rr|rr@{}} \hline
 & FB-HSVI & GA-FSC & \multicolumn{2}{|c}{2,$Q_{\mathsf{MDP}}$,$10^3$} & \multicolumn{2}{|c}{3,$Q_{\mathsf{MDP}}$,$10^4$} & \multicolumn{2}{|c}{\ourLB(f)} & \multicolumn{2}{|c|}{\ourLB} & random  & upper \\ \hline
$h$ & value & value & value & time & value & time & value & time & value & time & value & value \\ \hline
\multicolumn{13}{c}{ \dectiger } \\ \hline
6 & 10.38 & 10.38 & 10.38 & 1 & 10.38 & 2 & 10.38 & 1 & 10.38 & 5 &  \textminus 277.3 & 10.38 \\ 
7 & 9.99 &  & 8.38 & 1 & 8.38 & 2 & 8.38 & 1 & 9.99 & 10 &  \textminus 323.6 & 9.99 \\ 
8 & 12.22 & 8.00 & 7.25 & 1 & 6.38 & 3 & 6.58 & 1 & 8.36 & 16 &  \textminus 369.8 & 12.22 \\ 
9 & 15.57 &  & 15.57 & 1 & 15.57 & 3 & 15.57 & 1 & 15.57 & 14 &  \textminus 416.0 & 15.57 \\ 
10 & 15.18 & 13.57 & 13.57 & 1 & 13.57 & 4 & 13.57 & 1 & 15.18 & 18 &  \textminus 462.2 & 15.18 \\ 
12 &  &  & 20.76 & 1 & 20.76 & 4 & 20.76 & 1 & 20.76 & 20 &  \textminus 554.7 & 20.76 \\ 
15 &  &  & 25.95 & 1 & 25.95 & 5 & 25.95 & 2 & 25.95 & 25 &  \textminus 693.3 & 25.95 \\ 
20 & 28.75 &  & 28.01 & 1 & 27.14 & 5 & 27.14 & 2 & 29.12 & 33 &  \textminus 924.4 & 38.28 \\ 
30 & 51.90 &  & 51.91 & 2 & 51.91 & 7 & 51.91 & 2 & 51.91 & 42 &  \textminus 1386.7 & 59.47 \\ 
40 & 67.09 &  & 65.48 & 2 & 65.48 & 8 & 65.48 & 3 & 67.09 & 54 &  \textminus 1848.9 & 80.35 \\ 
50 & 80.66 &  & 79.92 & 2 & 79.05 & 10 & 79.05 & 3 & 81.03 & 67 &  \textminus 2311.1 & 101.32 \\ 
100 & 170.90 & 169.30 & 169.30 & 2 & 169.30 & 18 & 169.30 & 4 & 170.91 & 123 &  \textminus 4622.2 & 206.44 \\ \hline
\multicolumn{13}{c}{ \grid } \\ \hline
4 & 2.24 & 2.24 & 2.240 & \textless 1 & 2.242 & 1 & 2.232 & 1 & 2.242 & 3 & 0.684 & 2.242 \\ 
5 & 2.97 & 2.97 & 2.960 & 1 & 2.957 & 2 & 2.916 & 2 & 2.970 & 7 & 0.928 & 2.970 \\ 
6 & 3.71 & 3.72 & 3.696 & 1 & 3.689 & 2 & 3.664 & 2 & 3.717 & 27 & 1.176 & 3.717 \\ 
7 & 4.47 &  & 4.441 & 1 & 4.429 & 3 & 4.414 & 2 & 4.474 & 61 & 1.425 & 4.474 \\ 
8 & 5.23 &  & 5.190 & 1 & 5.170 & 3 & 5.168 & 3 & 5.248 & 106 & 1.675 & 5.351 \\ 
9 & 6.03 &  & 5.942 & 1 & 5.913 & 3 & 5.925 & 3 & 6.032 & 195 & 1.924 & 6.277 \\ 
10 &  &  & 6.695 & 1 & 6.657 & 3 & 6.684 & 3 & 6.821 & 256 & 2.174 & 7.222 \\ 
12 &  &  & 8.202 & 1 & 8.146 & 4 & 8.203 & 4 & 8.414 & 356 & 2.674 & 9.172 \\ 
15 &  &  & 10.464 & 2 & 10.382 & 4 & 10.507 & 4 & 10.810 & 450 & 3.424 & 12.143 \\ 
20 &  &  & 14.234 & 2 & 14.108 & 5 & 14.357 & 5 & 14.693 & 515 & 4.674 & 17.132 \\ 
50 &  & 40.49 & 36.858 & 3 & 36.463 & 11 & 37.461 & 9 & MO & 532 & 12.174 & 47.210 \\ 
100 &  &  & 74.564 & 4 & 73.721 & 23 & 75.966 & 16 & MO & 524 & 24.674 & 97.564 \\ \hline
\multicolumn{13}{c}{ \boxpush } \\ \hline
4 & 98.59 & 98.17 & 98.59 & \textless 1 & 98.59 & 1 & 98.59 & 1 & 98.59 & 2 &  \textminus 1.69 & 98.59 \\ 
5 & 107.72 &  & 107.73 & \textless 1 & 107.73 & 1 & 107.73 & 1 & 107.73 & 5 &  \textminus 2.67 & 107.73 \\ 
6 & 120.67 &  & 119.11 & 1 & 119.78 & 2 & 115.90 & 1 & 120.68 & 13 &  \textminus 3.73 & 120.69 \\ 
7 & 156.42 &  & 155.74 & 1 & 155.99 & 2 & 155.99 & 1 & 155.99 & 14 &  \textminus 4.84 & 156.71 \\ 
8 & 191.22 & 189.05 & 186.86 & 1 & 191.07 & 2 & 191.20 & 1 & 191.23 & 18 &  \textminus 5.97 & 191.29 \\ 
9 & 210.27 &  & 203.87 & 1 & 208.48 & 3 & 210.26 & 1 & 210.26 & 18 &  \textminus 7.12 & 210.35 \\ 
10 & 223.74 & 216.98 & 218.33 & 1 & 220.35 & 3 & 221.12 & 2 & 224.28 & 19 &  \textminus 8.30 & 224.51 \\ 
12 &  &  & 260.64 & 1 & 270.01 & 4 & 282.70 & 2 & 284.15 & 24 &  \textminus 10.67 & 285.08 \\ 
15 &  &  & 310.47 & 1 & 316.76 & 7 & 342.99 & 2 & 349.18 & 26 &  \textminus 14.31 & 350.37 \\ 
20 & 458.10 & 468.15 & 402.46 & 1 & 393.19 & 11 & 466.34 & 3 & 474.97 & 34 &  \textminus 20.46 & 476.43 \\ 
30 & 636.28 &  & 579.44 & 1 & 497.20 & 28 & 718.76 & 4 & 721.28 & 85 &  \textminus 32.91 & 724.78 \\ 
40 & 774.14 &  & 766.74 & 2 & 560.54 & 41 & 963.02 & 5 & 965.09 & 153 &  \textminus 45.42 & 971.40 \\ 
50 & 1134.70 & 1201.05 & 949.75 & 2 & 599.58 & 53 & 1207.67 & 6 & 1209.80 & 143 &  \textminus 57.94 & 1218.39 \\ 
100 &  & 2420.26 & 1864.49 & 3 & 651.03 & 93 & 2431.40 & 9 & 2433.51 & 401 &  \textminus 120.55 & 2453.43 \\ \hline
\multicolumn{13}{c}{ \ff } \\ \hline
4 &  & \textminus6.58 &  \textminus 6.702 & 1 &  \textminus 6.579 & 1 &  \textminus 6.586 & 1 &  \textminus 6.579 & 3 &  \textminus 9.026 & \textminus6.579 \\ 
5 &  & \textminus7.07 &  \textminus 7.095 & 1 &  \textminus 7.073 & 2 &  \textminus 7.096 & 2 &  \textminus 7.073 & 9 &  \textminus 10.849 & \textminus7.070 \\ 
6 &  & \textminus7.18 &  \textminus 7.458 & 1 &  \textminus 7.176 & 2 &  \textminus 7.379 & 2 &  \textminus 7.176 & 17 &  \textminus 12.556 & \textminus7.176 \\ 
10 &  &  &  \textminus 7.612 & 1 &  \textminus 7.176 & 1 &  \textminus 7.196 & 1 &  \textminus 7.176 & 15 &  \textminus 18.413 & \textminus7.176 \\ 
100 &  &  &  \textminus 7.612 & 1 &  \textminus 7.176 & 2 &  \textminus 7.196 & 2 &  \textminus 7.176 & 16 &  \textminus 40.028 & \textminus7.176 \\ 
2000 &  &  &  \textminus 7.612 & 2 &  \textminus 7.176 & 3 &  \textminus 7.196 & 7 &  \textminus 7.176 & 20 &  \textminus 40.130 & \textminus 7.176 \\ \hline
\multicolumn{13}{c}{ \gridthree } \\ \hline
5 & 0.89 &  & 0.90 & \textless 1 & 0.90 & \textless 1 & 0.90 & 1 & 0.90 & 9 & 0.02 & 0.90 \\ 
6 & 1.49 &  & 1.49 & 1 & 1.49 & 3 & 1.49 & 2 & 1.49 & 20 & 0.03 & 1.49 \\ 
7 & 2.19 &  & 2.19 & 1 & 2.19 & 12 & 2.17 & 2 & 2.19 & 30 & 0.05 & 2.19 \\ 
8 & 2.96 &  & 2.97 & 2 & 2.97 & 19 & 2.96 & 3 & 2.97 & 45 & 0.07 & 2.97 \\ 
9 & 3.80 &  & 3.80 & 3 & 3.80 & 24 & 3.80 & 2 & 3.81 & 64 & 0.09 & 3.81 \\ 
10 & 4.68 &  & 4.67 & 4 & 4.68 & 31 & 4.67 & 3 & 4.68 & 82 & 0.12 & 4.70 \\ 
12 &  &  & 6.51 & 4 & 6.51 & 40 & 6.50 & 3 & 6.53 & 119 & 0.16 & 6.55 \\ 
15 &  &  & 9.40 & 6 & 9.41 & 68 & 9.40 & 4 & 9.42 & 180 & 0.24 & 9.45 \\ 
20 & 14.35 &  & 14.35 & 8 & 14.35 & 103 & 14.34 & 6 & 14.37 & 292 & 0.36 & 14.39 \\ 
30 & 24.33 &  & 24.33 & 13 & 24.34 & 226 & 24.32 & 8 & 24.35 & 516 & 0.61 & 24.38 \\ 
40 & 34.33 &  & 34.33 & 19 & 34.33 & 355 & 34.32 & 12 & 34.35 & 791 & 0.85 & 34.38 \\ 
50 & 44.32 &  & 44.33 & 25 & 44.33 & 544 & 44.32 & 15 & 44.35 & 1052 & 1.10 & 44.38 \\ 
100 & 94.24 &  & 94.33 & 59 & TO & 600 & 94.32 & 25 & 94.35 & 1356 & 2.34 & 94.38 \\ \hline
\multicolumn{13}{c}{ \mars } \\ \hline
6 & 18.62 &  & 16.99 & 2 & 17.84 & 7 & 16.99 & 2 & 18.62 & 8 &  \textminus 8.58 & 18.62 \\ 
7 & 20.90 &  & 20.71 & 1 & 20.71 & 6 & 20.60 & 2 & 20.90 & 14 &  \textminus 9.86 & 20.90 \\ 
8 & 22.47 &  & 21.59 & 2 & 21.59 & 11 & 20.65 & 2 & 22.48 & 18 &  \textminus 11.11 & 22.48 \\ 
9 & 24.31 &  & 24.11 & 2 & 24.13 & 12 & 24.02 & 2 & 24.32 & 21 &  \textminus 12.34 & 24.32 \\ 
10 & 26.31 &  & 24.63 & 2 & 24.63 & 15 & 24.44 & 2 & 26.31 & 31 &  \textminus 13.56 & 26.32 \\ 
12 &  &  & 31.12 & 2 & 31.12 & 17 & 31.01 & 3 & 32.09 & 42 &  \textminus 15.98 & 33.64 \\ 
15 &  &  & 34.33 & 3 & 35.25 & 19 & 34.23 & 4 & 37.72 & 74 &  \textminus 19.59 & 39.48 \\ 
20 & 52.13 &  & 45.30 & 5 & 46.09 & 40 & 49.08 & 4 & 49.37 & 113 &  \textminus 25.63 & 52.71 \\ 
30 & 78.09 &  & 74.95 & 6 & 74.62 & 107 & 75.15 & 5 & 77.47 & 256 &  \textminus 37.76 & 79.50 \\ 
40 & 103.52 &  & 95.76 & 8 & 93.72 & 309 & 96.06 & 5 & 100.02 & 385 &  \textminus 49.89 & 106.16 \\ 
50 & 128.95 &  & 116.73 & 17 & 111.22 & 511 & 117.03 & 8 & 122.56 & 678 &  \textminus 62.02 & 132.82 \\ 
100 & 249.92 &  & 221.70 & 31 & TO & 600 & 221.97 & 13 & 234.06 & 1289 &  \textminus 122.67 & 265.73 \\ \hline
\multicolumn{13}{c}{ \hotel } \\ \hline
100 &  &  & 502.2 & 2 & 502.2 & 15 & 502.2 & 2 & 502.2 & 39 &  \textminus 184.6 & 502.2 \\ 
500 &  &  & 2502.2 & 13 & MO & 21 & 2502.2 & 5 & 2502.2 & 181 &  \textminus 1095.7 & 2502.2 \\ 
1000 &  &  & MO & 19 & MO & 18 & 5002.2 & 12 & 5002.2 & 431 &  \textminus 2234.6 & 5002.2 \\ 
2000 &  &  & MO & 18 & MO & 17 & 10002.2 & 37 & MO & 641 &  \textminus 4512.4 & 10002.2 \\ \hline
\multicolumn{13}{c}{ \recycling } \\ \hline
100 & 308.78 & 308.79 & 308.79 & 3 & 308.79 & 24 & 308.79 & 3 & 308.79 & 67 & 47.36 & 308.79 \\ 
500 &  &  & 1539.56 & 11 & 1539.56 & 145 & 1539.56 & 5 & 1539.56 & 361 & 229.32 & 1539.56 \\ 
1000 &  &  & 3078.02 & 31 & 3078.02 & 372 & 3078.02 & 9 & 3078.02 & 1026 & 456.77 & 3078.02 \\ 
2000 &  &  & 6154.94 & 108 & TO & 600 & 6154.94 & 22 & 6154.94 & 1335 & 911.68 & 6154.94 \\ \hline
\multicolumn{13}{c}{ \broadcast } \\ \hline
100 & 90.76 & 90.29 & 90.76 & 1 & 90.75 & 15 & 90.74 & 2 & 90.76 & 32 & 28.62 & 90.76 \\ 
500 &  & 450.29 & 452.72 & 9 & 452.73 & 109 & 452.69 & 4 & 452.73 & 228 & 141.53 & 452.74 \\ 
1000 & 903.28 & 900.29 & 905.16 & 25 & MO & 189 & 905.09 & 8 & 905.18 & 537 & 282.68 & 905.20 \\ 
2000 &  &  & 1810.02 & 80 & MO & 199 & 1809.87 & 21 & 1810.07 & 889 & 564.98 & 1810.13 \\ \hline
\end{tabular}
}
\caption{Lower bounds, i.e.\ the value of the policies obtained with different policy-finding algorithms. MO denotes memout (\textgreater 16GB), TO denotes timeout (\textgreater 600s).} \label{tab:lowerall}
\end{table}

\begin{table}
\resizebox{\columnwidth}{!}{
\addtolength{\tabcolsep}{-0.19977em}
\begin{tabular}{@{}r|r|rr|rr|rr|rr|rr@{}} \hline
& FB-HSVI & \multicolumn{2}{|c}{\rsmaa(f)} & \multicolumn{2}{|c}{\rsmaa} & \multicolumn{2}{|c}{\ourUB(f)} & \multicolumn{2}{|c|}{\ourUB} & lower & MDP \\ \hline
$h$ & value & value & time & value & time & value & time & value & time & value & value \\ \hline
\multicolumn{12}{c}{ \dectiger } \\ \hline
6 & 10.39 & 10.38 & \textless 1 & 10.38 & \textless 1 & 10.38 & \textless 1 & 10.38 & \textless 1 & 10.38 & 120.00 \\ 
7 & 10.00 & 9.99 & 1 & 9.99 & 1 & 11.27 & 120 & 9.99 & 1 & 9.99 & 140.00 \\ 
8 & 12.23 & 12.22 & 1 & 12.22 & 1 & 16.90 & 120 & 12.22 & 2 & 12.22 & 160.00 \\ 
9 & 15.58 & 15.57 & 5 & 15.57 & 5 & 22.79 & 120 & 15.57 & 4 & 15.57 & 180.00 \\ 
10 & 15.19 & 15.27 & 120 & 15.18 & 349 & 27.28 & 120 & 19.59 & 243 & 15.18 & 200.00 \\ 
12 &  & 20.76 & 120 & 20.76 & 453 & 38.64 & 120 & 27.99 & 323 & 20.76 & 240.00 \\ 
15 &  & 27.43 & 120 & 25.95 & 1800 & 55.39 & 120 & 42.52 & 205 & 25.95 & 300.00 \\ 
20 &  & 38.28 & 120 & 38.28 & 141 & 81.07 & 120 & 65.71 & 194 & 29.12 & 400.00 \\ 
30 &  & 59.47 & 120 & 59.47 & 131 & 135.45 & 120 & 112.05 & 204 & 51.91 & 600.00 \\ 
40 &  & 80.35 & 120 & 80.35 & 131 & 188.13 & 120 & 158.46 & 186 & 67.09 & 800.00 \\ 
50 &  & 101.32 & 120 & 101.32 & 136 & 241.07 & 120 & 204.84 & 184 & 81.03 & 1000.00 \\ 
100 &  & 206.44 & 120 & 206.44 & 145 & 508.13 & 120 & 436.73 & 197 & 170.91 & 2000.00 \\ \hline
\multicolumn{12}{c}{ \grid } \\ \hline
4 & 2.25 & 2.242 & 1 & 2.242 & 1 & 2.242 & 2 & 2.242 & 1 & 2.242 & 2.865 \\ 
5 & 2.98 & 2.970 & 3 & 2.970 & 3 & 2.970 & 4 & 2.970 & 7 & 2.970 & 3.834 \\ 
6 & 3.72 & 3.717 & 43 & 3.717 & 42 & 3.717 & 10 & 3.717 & 71 & 3.720 & 4.820 \\ 
7 & 4.48 & TO & 120 & MO & 186 & 4.474 & 56 & 4.491 & 274 & 4.474 & 5.814 \\ 
8 & 5.24 & TO & 120 & MO & 190 & 5.351 & 120 & 5.404 & 268 & 5.248 & 6.811 \\ 
9 & 6.04 & TO & 120 & MO & 198 & 6.277 & 120 & 6.448 & 240 & 6.032 & 7.809 \\ 
10 &  & TO & 120 & MO & 201 & 7.222 & 120 & 7.387 & 262 & 6.821 & 8.809 \\ 
12 &  & TO & 120 & MO & 180 & 9.172 & 120 & MO & 261 & 8.414 & 10.808 \\ 
15 &  & TO & 120 & MO & 198 & 12.143 & 120 & MO & 284 & 10.810 & 13.808 \\ 
20 &  & TO & 120 & MO & 203 & 17.132 & 120 & MO & 287 & 14.693 & 18.808 \\ 
50 &  & TO & 120 & MO & 185 & 47.210 & 120 & MO & 285 & 40.490 & 48.808 \\ 
100 &  & TO & 120 & MO & 187 & 97.564 & 120 & MO & 290 & 75.966 & 98.808 \\ \hline
\multicolumn{12}{c}{ \boxpush } \\ \hline
4 & 98.60 & 98.59 & 5 & 98.59 & 5 & 98.59 & 6 & 98.59 & 6 & 98.59 & 106.43 \\ 
5 & 107.73 & 107.74 & 120 & 107.74 & 1800 & 107.74 & 120 & 107.74 & 1800 & 107.73 & 118.77 \\ 
6 & 120.68 & 120.72 & 93 & 120.69 & 591 & 120.72 & 95 & 120.69 & 588 & 120.68 & 131.19 \\ 
7 & 156.43 & 156.72 & 120 & 156.71 & 271 & 156.89 & 120 & 156.71 & 272 & 156.42 & 163.94 \\ 
8 & 191.23 & 191.29 & 120 & 191.29 & 240 & 191.40 & 120 & 191.29 & 235 & 191.23 & 203.42 \\ 
9 & 210.28 & 210.35 & 120 & 210.35 & 306 & 210.93 & 114 & 210.37 & 210 & 210.27 & 228.75 \\ 
10 & 223.75 & 227.31 & 120 & 224.51 & 455 & 225.15 & 120 & 224.66 & 239 & 224.28 & 244.85 \\ 
12 &  & TO & 120 & MO & 248 & 285.77 & 120 & 285.08 & 249 & 284.15 & 302.73 \\ 
15 &  & TO & 120 & MO & 247 & 354.48 & 120 & 350.37 & 251 & 349.18 & 377.12 \\ 
20 &  & TO & 120 & MO & 251 & 481.16 & 120 & 476.43 & 260 & 474.97 & 511.13 \\ 
30 &  & TO & 120 & MO & 251 & 730.55 & 120 & 724.78 & 327 & 721.28 & 777.61 \\ 
40 &  & TO & 120 & MO & 248 & 978.11 & 120 & 971.40 & 396 & 965.09 & 1042.03 \\ 
50 &  & TO & 120 & MO & 245 & 1227.43 & 120 & 1218.39 & 493 & 1209.80 & 1306.24 \\ 
100 &  & TO & 120 & MO & 248 & 2469.74 & 120 & 2453.43 & 963 & 2433.51 & 2628.14 \\ \hline
\multicolumn{12}{c}{ \ff } \\ \hline
4 &  &  \textminus 6.579 & 1 &  \textminus 6.579 & 1 &  \textminus 6.579 & 3 &  \textminus 6.579 & 1 & \textminus6.579 & \textminus4.282 \\ 
5 &  &  \textminus 7.070 & 6 &  \textminus 7.070 & 6 &  \textminus 7.070 & 8 &  \textminus 7.070 & 6 & \textminus7.070 & \textminus4.342 \\ 
6 &  &  \textminus 7.176 & 40 &  \textminus 7.176 & 40 &  \textminus 7.176 & 6 &  \textminus 6.390 & 313 & \textminus7.176 & \textminus4.355 \\ 
10 &  & TO & 120 & MO & 245 &  \textminus 7.176 & 10 &  \textminus 6.432 & 334 & \textminus7.176 & \textminus4.363 \\ 
100 &  & TO & 120 & MO & 249 &  \textminus 7.176 & 105 & MO & 365 & \textminus7.176 & \textminus4.363 \\ 
2000 &  & TO & 120 & MO & 250 & TO & 120 & MO & 344 & \textminus7.176 & \textminus4.363 \\ \hline
\multicolumn{12}{c}{ \gridthree } \\ \hline
5 & 0.90 & 0.90 & 1 & 0.90 & 1 & 0.90 & 1 & 0.90 & 1 & 0.90 & 0.94 \\ 
6 & 1.50 & 1.49 & 6 & 1.49 & 5 & 1.49 & 7 & 1.49 & 6 & 1.49 & 1.58 \\ 
7 & 2.20 & 2.19 & 120 & 2.19 & 130 & 2.19 & 120 & 2.19 & 144 & 2.19 & 2.31 \\ 
8 & 2.97 & 2.97 & 120 & 2.97 & 225 & 2.97 & 120 & 2.97 & 303 & 2.97 & 3.12 \\ 
9 & 3.81 & 3.81 & 120 & 3.81 & 222 & 3.81 & 120 & 3.81 & 287 & 3.81 & 3.98 \\ 
10 & 4.69 & 4.70 & 120 & 4.70 & 241 & 4.70 & 120 & 4.70 & 289 & 4.68 & 4.88 \\ 
12 &  & 6.55 & 120 & 6.55 & 264 & 6.55 & 120 & 6.55 & 317 & 6.53 & 6.76 \\ 
15 &  & 9.45 & 120 & 9.45 & 261 & 9.45 & 120 & 9.45 & 373 & 9.42 & 9.67 \\ 
20 & 14.36 & 14.40 & 120 & 14.39 & 266 & 14.40 & 120 & 14.39 & 441 & 14.37 & 14.63 \\ 
30 & 24.34 & 24.39 & 120 & 24.38 & 336 & 24.39 & 120 & 24.38 & 663 & 24.35 & 24.62 \\ 
40 &  & 34.39 & 120 & 34.38 & 679 & 34.39 & 120 & 34.38 & 764 & 34.35 & 34.62 \\ 
50 &  & 44.39 & 120 & 44.38 & 438 & 44.44 & 120 & 44.38 & 936 & 44.35 & 44.62 \\ 
100 &  & 94.39 & 120 & 94.38 & 702 & 94.45 & 120 & 94.38 & 1753 & 94.35 & 94.62 \\ \hline
\multicolumn{12}{c}{ \mars } \\ \hline
6 & 18.63 & 18.62 & 2 & 18.62 & 2 & 18.62 & 2 & 18.62 & 3 & 18.62 & 20.07 \\ 
7 & 20.91 & 20.90 & 3 & 20.90 & 3 & 20.90 & 4 & 20.90 & 14 & 20.90 & 21.17 \\ 
8 & 22.48 & 22.48 & 25 & 22.48 & 25 & 22.48 & 16 & 22.48 & 18 & 22.48 & 23.65 \\ 
9 & 24.32 & 24.32 & 75 & 24.32 & 43 & 24.32 & 6 & 24.32 & 56 & 24.32 & 25.70 \\ 
10 & 26.32 & 26.32 & 85 & 26.32 & 1800 & 26.32 & 113 & 26.32 & 373 & 26.31 & 28.61 \\ 
12 &  & 33.85 & 120 & 33.64 & 1800 & 33.96 & 112 & 33.92 & 1800 & 32.09 & 36.11 \\ 
15 &  & 39.56 & 69 & 39.48 & 252 & 39.71 & 120 & 39.53 & 329 & 37.72 & 43.02 \\ 
20 &  & 52.86 & 81 & 52.71 & 559 & 53.78 & 120 & 53.11 & 575 & 52.13 & 57.52 \\ 
30 &  & 79.50 & 98 & 79.50 & 98 & 81.08 & 120 & 80.62 & 803 & 78.09 & 86.65 \\ 
40 &  & 106.16 & 116 & 106.16 & 116 & 108.89 & 120 & 108.82 & 976 & 103.52 & 115.85 \\ 
50 &  & 132.82 & 120 & 132.82 & 134 & 136.46 & 120 & 136.87 & 1144 & 128.95 & 144.97 \\ 
100 &  & 287.46 & 73 & 265.73 & 223 & 273.37 & 120 & 276.54 & 1800 & 249.92 & 288.97 \\ \hline
\multicolumn{12}{c}{ \hotel } \\ \hline
100 &  & 502.2 & 2 & 502.2 & 2 & 502.2 & 4 & 502.2 & 7 & 502.2 & 502.8 \\ 
500 &  & 2502.2 & 8 & 2502.2 & 8 & 2502.2 & 12 & 2502.2 & 32 & 2502.2 & 2502.8 \\ 
1000 &  & 5002.2 & 17 & 5002.2 & 17 & 5002.2 & 23 & 5002.2 & 70 & 5002.2 & 5002.8 \\ 
2000 &  & 10002.2 & 35 & 10002.2 & 36 & 10002.2 & 54 & 10002.2 & 134 & 10002.2 & 10002.8 \\ \hline
\multicolumn{12}{c}{ \recycling } \\ \hline
100 & 308.79 & 308.79 & 1 & 308.79 & 1 & 314.13 & 120 & 311.28 & 490 & 308.79 & 328.37 \\ 
500 &  & 1539.56 & 19 & 1539.56 & 19 & 1577.65 & 120 & 1561.87 & 499 & 1539.56 & 1637.46 \\ 
1000 &  & 3078.02 & 40 & 3078.02 & 40 & 3157.09 & 120 & 3125.11 & 502 & 3078.02 & 3273.83 \\ 
2000 &  & 6155.07 & 120 & 6154.94 & 218 & 6315.92 & 120 & 6251.61 & 485 & 6154.94 & 6546.55 \\ \hline
\multicolumn{12}{c}{ \broadcast } \\ \hline
100 & 90.77 & 90.76 & 1 & 90.76 & 1 & 90.76 & 95 & 90.76 & 60 & 90.76 & 95.56 \\ 
500 &  & 452.74 & 16 & 452.74 & 15 & 457.19 & 61 & 456.62 & 79 & 452.74 & 476.61 \\ 
1000 &  & 905.20 & 54 & 905.20 & 65 & 915.12 & 63 & 913.92 & 76 & 905.20 & 952.93 \\ 
2000 &  & 1810.13 & 83 & 1810.13 & 85 & 1831.00 & 66 & 1828.53 & 78 & 1810.13 & 1905.56 \\ \hline
\end{tabular}
}
\caption{Upper bounds on the value of optimal policies, obtained with algorithms that find such bounds. TO and MO denote timeout (\textgreater 120s for the fast configuration) and memout (\textgreater 16GB).}  \label{tab:upperall}
\end{table}

\end{document}